\pdfoutput=1 
\documentclass[12pt,cleveref]{colt2023} % Anonymized submission
\PassOptionsToPackage{capitalize}{cleveref}

\usepackage{microtype}
\usepackage{graphicx}
\usepackage{wrapfig}
\usepackage{subcaption}
\usepackage{booktabs} % for professional tables

\usepackage{hyperref}

\usepackage{amsfonts,thmtools}
\usepackage{mathtools}

\usepackage{natbib} % Added this for the arxiv version

\usepackage{cleveref}

\newcommand{\proplabel}[1]{\label[proposition]{#1}}
\newcommand{\lemmalabel}[1]{\label[lemma]{#1}}
\newcommand{\corolabel}[1]{\label[corollary]{#1}}

\usepackage[nolist,nohyperlinks]{acronym}
\makeatletter
\AtBeginDocument
 {
   \def\ltx@label#1{\cref@label{#1}}%add braces
   \def\label@in@display@noarg#1{\cref@old@label@in@display{#1}}%remove braces  
 } %
\makeatother

\usepackage{bbm}
\usepackage{xcolor}                     % colors
\usepackage{adjustbox}

\usepackage{tikz}

\usetikzlibrary{
    calc,           % For coordinate calculations
    arrows.meta,    % For nicer arrowheads (e.g., Stealth)
    positioning     % For placing labels
}

\newtheorem{assumption}[theorem]{Assumption}

\DeclareMathOperator*{\argmin}{arg\,min}
\newcommand{\inner}[2]{\left\langle#1,\,#2\right\rangle} %nabla fx
\newcommand{\KL}{\mathrm{KL}} %nabla fx
\DeclarePairedDelimiter\floor{\lfloor}{\rfloor}

\newcommand{\bigO}{\mathcal{O}} 

\newcommand{\Var}{\mathrm{Var}} 
\newcommand{\Cov}{\mathrm{Cov}} 
\newcommand{\bE}{\mathbb{E}} 
\newcommand{\E}{\mathbb{E}}
\newcommand{\tr}{\operatorname{tr}}
\newcommand{\cZ}{\mathcal{Z}}

\newcommand{\TV}{\mathrm{TV}}

\newcommand{\bR}{\mathbb{R}}
\newcommand{\bN}{\mathbb{N}} 
\newcommand{\cX}{\mathcal{X}} 
 
\newcommand{\rP}{\mathrm{P}}
\newcommand{\rQ}{\mathrm{Q}}
\newcommand{\cP}{\mathcal{P}}

\newcommand{\cA}{\mathcal{A}}

\newcommand{\diag}{\mathrm{diag}}

\newcommand{\Si}{S^{(i)}} % Perturbed dataset
\newcommand{\xtS}{x_t(S)}
\newcommand{\xtSi}{x_t(\Si)}

\newcommand{\stabP}{\varepsilon_{\mathrm{pstab}}} % parameter stability

\newcommand{\Proj}{\mathrm{Proj}}

\newcommand{\cond}{\kappa}

\newcommand{\cstSmooth}{\beta}

\newcommand{\cstPL}{\mu}
\newcommand{\cstScvx}{\alpha}
\newcommand{\cstCon}{r}
\newcommand{\cstSA}{C_{\ell, P}}

\newcommand{\lmin}{\lambda_{\mathrm{min}}}
\newcommand{\lmax}{\lambda_{\mathrm{max}}}

\newcommand{\cN}{\mathcal{N}} 
\newcommand{\cM}{\mathcal{M}} 
\newcommand{\pr}[1]{\left( #1 \right)}
\newcommand{\br}[1]{\left[ #1 \right]}

\newcommand{\abs}[1]{\left|#1\right|}
\newcommand{\tp}{^{\top}}
\newcommand{\ip}[1]{\left\langle #1 \right\rangle}
\begin{acronym}
  \acro{PAC}{Probably Approximately Correct}
\acro{RLS}{Regularized Least Squares}
\acro{ERM}{Empirical Risk Minimization}
\acro{RKHS}{Reproducing kernel Hilbert space}
\acro{DA}{Domain Adaptation}
\acro{PD}{Positive Definite}
\acro{PSD}{Positive Semi-Definite}
\acro{SGD}{Stochastic Gradient Descent}
\acro{PSGD}{Preconditioned Stochastic Gradient Descent}
\acro{OGD}{Online Gradient Descent}
\acro{GD}{Gradient Descent}
\acro{SGLD}{Stochastic Gradient Langevin Dynamics}
\acro{IW}{Importance Weighted}
\acro{MGF}{Moment-Generating Function}
\acro{ES}{Efron-Stein}
\acro{ESS}{Effective Sample Size}
\acro{KL}{Kullback-Leibler}
\acro{SVD}{Singular Value Decomposition}
\acro{PL}{Polyak-Łojasiewicz}
\acro{NTK}{Neural Tangent Kernel}
\acro{KLS}{Kernelized Least-Squares}
\acro{KRLS}{Kernelized Regularized Least-Squares}
\acro{ReLU}{Rectified Linear Unit}
\acro{RF}{Random Feature}
\acro{TIC}{Takeuchi Information Criterion}
\acro{MLE}{Maximum Likelihood Estimation}
\end{acronym}

\title[On-Average Stability of Multipass Preconditioned SGD and Effective Dimension]{On-Average Stability of Multipass Preconditioned \\SGD and Effective Dimension}

\usepackage{times}

\coltauthor{%
 \Name{Simon Vary} \Email{simon.vary@stats.ox.ac.uk}\\
 \addr Department of Statistics, University of Oxford
 \AND
 \Name{Tyler Farghly} \Email{farghly@stats.ox.ac.uk}\\
 \addr Department of Statistics, University of Oxford%
 \AND
 \Name{Ilja Kuzborskij} \Email{iljak@google.com}\\
 \addr Google DeepMind%
 \AND
 \Name{Patrick Rebeschini} \Email{patrick.rebeschini@stats.ox.ac.uk}\\
 \addr Department of Statistics, University of Oxford%
}

\begin{document}

\maketitle

\vspace{0em}
\begin{abstract}
  We study trade-offs between the population risk curvature, geometry of the noise, and preconditioning on the generalisation ability of the multipass Preconditioned Stochastic Gradient Descent (PSGD). Many practical optimisation heuristics implicitly navigate this trade-off in different ways --- for instance, some aim to whiten gradient noise, while others aim to align updates with expected loss curvature. When the geometry of the population risk curvature and the geometry of the gradient noise do not match, an aggressive choice that improves one aspect can amplify instability along the other, leading to suboptimal statistical behavior. In this paper we employ \emph{on-average algorithmic stability} to connect generalisation of PSGD to the \emph{effective dimension} that depends on these sources of curvature. While existing techniques for on-average stability of SGD are limited to a single pass, as first contribution we develop a new on-average stability analysis for multipass SGD that handles the correlations induced by data reuse. This allows us to derive excess risk bounds that depend on the effective dimension. In particular, we show that an improperly chosen preconditioner can yield suboptimal effective dimension dependence in both optimisation and generalisation. Finally, we complement our upper bounds with matching, instance-dependent lower bounds.
\end{abstract}

\begin{keywords}%
  Algorithmic stability, generalization bounds, preconditioning
\end{keywords}

\section{INTRODUCTION}
Training of machine learning models is usually posed as a minimisation of the population risk. In particular, given a data distribution $\rQ$ supported on the example space $\mathcal{Z}$, the goal is to minimise the \emph{population risk} $f$. The population risk and its empirical counterpart are defined as,
\begin{equation*}
  f(x) = \bE_{z\sim \rQ}[\ell(x, z)]~, \qquad f_S(x) = \frac1n \sum_{i=1}^n \ell(x, z_i)~,
\end{equation*}
respectively, where $\ell$ is a smooth loss function parameterized by $x$ and evaluated on an example $z$.
In the standard setting, where the data distribution is unknown and we have access only to a finite training set $S=\left\{z_1, \ldots, z_n\right\}\subset\mathcal{Z}$ of $n$-samples drawn i.i.d.\ from $\rQ$, we instead minimise the \emph{empirical risk} $f_S$.
Given the solution $\hat x$ returned by an algorithm, its generalization ability is captured by the \emph{excess risk},
\begin{align*}
  \E[\delta f(\hat x)] \qquad \text{where} \qquad \delta f(x) = f(x) - \inf\nolimits_{x \in \cX} f(x)~.
\end{align*}
In this work, we focus on preconditioned SGD, meaning that empirical risk is minimised iteratively by observing gradients on individual examples drawn uniformly from the training set:
\begin{equation}\label{eq:psgd}
  x_{t+1} = x_t - \eta_t \, P \, \nabla\ell(x_t, z_{i_t})~, \quad i_t \sim \text{Unif}(\{1, \ldots, n\})~, \quad t = 0,1,2,\ldots
\end{equation}
where $\eta_t$ is the step size, $P$ is a \ac{PD} \emph{preconditioning} matrix and $z_{i_t}\in S$ are sampled randomly from S uniformly with replacement.
Note that the update in \cref{eq:psgd} is often not limited to a single pass over the training set. Hence, in the present work, we consider \ac{PSGD} in the \emph{multipass} regime.
Since gradients are random variables, this randomised procedure is inevitably affected by the noise in stochastic gradients.
In this work we pay close attention to the geometry of gradient covariance considering the \emph{gradient covariance matrix} $\Sigma \succeq \Var_z(\nabla \ell(x,z))$.\footnote{Derivatives are always taken with respect to the first argument, unless stated otherwise.}

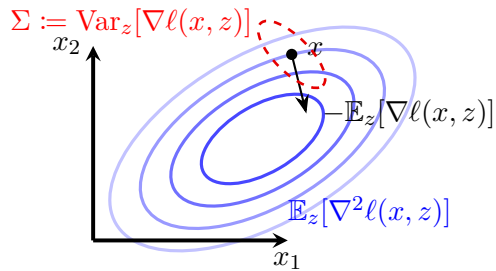
\begin{wrapfigure}{r}{0.44\textwidth}
  % Start the TikZ picture
% We define styles to make the code readable and easy to change
\begin{tikzpicture}[
    scale=0.45, % Scale down the entire figure to half size
    % Increased line thickness for all level sets
    levelset/.style={line width=1.2pt, opacity=0.8},
    newset/.style={red!90!black, line width=1pt, dashed}, % Style for the red ellipsoid
    gradient/.style={-Stealth[{length=2.5mm}], black, thick}, % Gradient arrow is black
    point/.style={fill=black, circle, inner sep=1.5pt},
    axis/.style={->, >=stealth, black, line width=1.5pt} % Increased line width for axes
]

    % --- Define rotation and point parameters ---
    \def\a{4} % major radius
    \def\b{2} % minor radius
    \def\angle{60} % angle for point x on ellipse
    \def\rot_angle{30} % rotation angle for main ellipses

    % --- 1. Main Ellipsoid Level Sets (4 lines) ---
    % We draw 4 concentric ellipses, rotated by \rot_angle
    % With "more blue" (darker) closer to the center
    \foreach \s/\color in {2/blue!90, 3/blue!70, 4/blue!50, 5/blue!30} {
        \draw [levelset, draw=\color, rotate=\rot_angle] (0,0) ellipse (\s cm and \s*0.5 cm);
    }
    
    % Add a node for the center (un-rotated) - REMOVED x_0 label
  %  \node [point] at (0,0) {};

    % --- 2. The Point x ---
    % We place the point 'x' on the third level set (s=4)
    % We must manually rotate the coordinate of the point
    % Original (x,y) = (a*cos(t), b*sin(t))
    % Rotated (x',y') = (x*cos(rot) - y*sin(rot), x*sin(rot) + y*cos(rot))
    \coordinate (x_point) at (
        {\a*cos(\angle)*cos(\rot_angle) - \b*sin(\angle)*sin(\rot_angle)}, 
        {\a*cos(\angle)*sin(\rot_angle) + \b*sin(\angle)*cos(\rot_angle)}
    );
    
    % Draw the point and label it
    \node (x_node) [point, label={[right=2pt]:$x$}] at (x_point) {};

    % --- 3. The Gradient Arrow ---
    % Calculate the un-rotated gradient vector components, scaled for visibility
    \pgfmathsetmacro{\gradxUnrotated}{cos(\angle)/\a * 4}
    \pgfmathsetmacro{\gradyUnrotated}{sin(\angle)/\b * 4}
    
    % Now we rotate the gradient vector by the same \rot_angle
    \coordinate (grad_vec) at (
        {\gradxUnrotated*cos(\rot_angle) - \gradyUnrotated*sin(\rot_angle)}, 
        {\gradxUnrotated*sin(\rot_angle) + \gradyUnrotated*cos(\rot_angle)}
    );
    
    % Draw the *negative* gradient arrow (pointing inward)
    % The node is now placed to the 'right' of the arrow's *endpoint*
    \draw [gradient] (x_point) -- ($(x_point) - (grad_vec)$)
          node[right=2pt, black] at ($(x_point) - (grad_vec)$) {$-\bE_z[\nabla\ell(x,z)]$}; % Label is normal thickness and black

    % --- 4. The Second, Rotated Ellipsoid Set (1 line) ---
    % Draw only the outer ellipsoid (s=1.3)
    \draw [newset, rotate around={-45:(x_point)}] (x_point) ellipse (1.3cm and 1.3*0.4cm);
    
    % Add label for the red ellipsoid set. Placed at the top-left, slightly lower.
    % We use polar coordinates relative to x_point and a small vertical shift.
    \node[anchor=south east, red, inner sep=2pt] at ($(x_point) + (135:1.4) + (0, -0.65)$) {$\Sigma\coloneq\Var_z[\nabla\ell(x,z)]$}; % Label is normal thickness, red, and moved down

    % Placed top-left of the center (0,0) using polar coordinates
    \node[anchor=south east, blue, inner sep=2pt] at($(x_point) + (-25:5.4) + (0, -3)$){$\bE_{z}[\nabla^2 \ell(x,z)]$};
    
    % --- 5. Axes for x1 and x2 ---
    % Position for the origin of the axes (bottom-left)
    \coordinate (axis_origin) at (-5,-3); 

    % Horizontal Axis (x1) - Made 3x larger
    \draw [axis] (axis_origin) -- ($(axis_origin) + (5.75,0)$) node [below] {$x_1$};
    % Vertical Axis (x2) - Made 3x larger
    \draw [axis] (axis_origin) -- ($(axis_origin) + (0,5.75)$) node [left] {$x_2$};

\end{tikzpicture}
\caption{Illustration of model misspecification. The geometry of the expected loss curvature $\nabla^2 f$ differs from the geometry of the gradient noise ($\Sigma$). While setting $P \approx \Sigma^{-1}$ whitens the noise, it may result in unstable updates along high-curvature directions.}
\label{fig:levelsets}
\end{wrapfigure}
At this point, we highlight that the learning problem is governed by three sources of curvature: the Hessian of the population risk $\nabla^2 f \equiv \nabla^2 f(\hat x)$ for some minimiser $\hat x$, the gradient covariance matrix $\Sigma$, and the preconditioner $P$ which is chosen by the practitioner.
The goal of this paper is to understand, in the \emph{finite-sample nonasymptotic setting},
how does the excess risk of \ac{PSGD} depend on the interaction between $\nabla^2 f$, $\Sigma$, and $P$.
While, in the idealised scenario, these quantities coincide \citep{Amari1998Natural},
in the general \emph{misspecified} learning setting where $\Sigma \neq H$,  the disparity creates a fundamental trade-off. This trade-off is addressed in practice in different ways by different optimisation algorithms. Methods like Adam \citep{Kingma2014Adam} and K-FAC \citep{Martens2015Optimizing} target an approximate conditioning $P \approx \Sigma^{-1}$, while others, such as AdaHessian \citep{Yao2021ADAHESSIAN}, PROMISE \citep{Frangella2024PROMISE}, SAPPHIRE \citep{Sun2025SAPPHIRE}, SketchySGD \citep{Frangella2024SketchySGD}, target the inverse of the \emph{expected Hessian} $\nabla^2 f$.
Thus, without the characterisation of the statistical properties associated with the mismatch between these geometries, the choice of preconditioner in the misspecified regime remains largely heuristic, which can lead to undesired behaviour (see \Cref{fig:levelsets} for a graphical example). From a non-asymptotic statistical perspective, here we ask \emph{what is the optimal choice of $P$ with respect to $\nabla^2 f$, and $\Sigma$?}

In this paper we are primarily interested how excess risk $\E[\delta f(x_t)]$ depends on \emph{effective dimension}
\begin{align}
  \label{eq:eff-dim}
  \tr\pr{(\nabla^2 f)^{-1} \Sigma}
\end{align}
which commonly appears in statistics as a replacement for the ambient dimension. This is also known as the \ac{TIC} in the context of information theory~\citep{Shibata1989Statistical}.
For example, the effective dimension controls excess risk bounds of linear (ridge) regression, for exact minimisers~\citep{bach2024learning}, \ac{SGD} with iterative averaging~\citep{neu2018iterate}, as well as asymptotic analysis in stochastic approximation \citep{polyak1992acceleration}.
While it is known that dependence of the excess risk on \eqref{eq:eff-dim} is not improvable asymptotically, we ask here how $P$ interacts with effective dimension in the non-asymptotic regime.

In particular, we will study this question through the lens of \emph{generalisation error} $x \mapsto f(x) - f_S(x)$ and \emph{algorithmic stability}, which is a classical framework dating back to the study of nearest-neighbor rules~\citep{devroye1979distribution} and \ac{ERM} problems \citep{Bousquet2002Stability}.
The stability approach asks whether the solution produced by the learning algorithm is insensitive to small perturbations in the training set, such as the removal of a data point or its replacement by an independent copy.
Namely, if $\hat x^{(i)}$ is a parameter produced with such a perturbation (say when $z_i$ is replaced by its independent copy $z_i'$), then the expected generalisation error is directly linked to stability gauged by the difference of losses:
\begin{align}
  \label{eq:stability}
  \E[f(\hat x) - f_S(\hat x)] = \frac1n \sum_{i=1}^n \E_{S,z_i'}\br{\ell(\hat x^{(i)}, z_i) - \ell(\hat x, z_i)}~.
\end{align}
Numerous works \citep{feldman2019high,Bousquet2020Sharper} establish high-probability bounds on the generalisation error by controlling the \emph{uniform} stability $\sup_{S,z,i} |\ell(\hat x, z) - \ell(\hat x^{(i)}, z)|$ for various \ac{ERM} formulations.
However, such notions of stability tend towards covering the \emph{worst-case} and are not suitable to achieving our goal, since $(\nabla^2 f, \Sigma)$ are distribution-dependent quantities. Here we turn our attention to the weaker notion of \emph{on-average} stability $\max_i \E_{S,z_i'}[\ell(\hat x^{(i)}, z_i) - \ell(\hat x, z_i)]$ which has, so far, largely been used to study \ac{ERM} algorithms instead of the \ac{SGD}-type algorithms of interest in this work~\citep{kearns1997algorithmic,Bousquet2002Stability,elisseeff2005stability}.

Algorithmic stability of \ac{SGD}-type algorithms has been studied extensively over recent years.
A seminal paper from \cite{Hardt2016Train} derived uniform stability bounds for simultaneously smooth Lipschitz and convex loss functions. These proof techniques were later extended by \cite{Kuzborskij2018DataDependent,Lei2023Stability} to the on-average stability, observing that generalisation error can be controlled by the data dependent quantities (such as the empirical risk), leading to optimistic bounds. However, none of these works showed dependence on the effective dimension or preconditioner $P$. 
By targeting a data-dependent analysis of \ac{PSGD}, one runs into several difficulties. Most commonly known, is the difficulty of managing dependence between parameter iterates and the dataset, which is usually circumvented by restricting to the single pass setting. In the present work, we consider the multi-pass setting and we develop methods to manage parameter-dataset dependence.

\subsection{Our contributions}
\begin{enumerate}
\item We develop an \emph{on-average stability} analysis of \emph{multipass} SGD that overcomes the technical challenge of dependence between iterates arising through reused data points --- see \Cref{sec:sketch} for the sketch of the analysis.
\item We derive excess risk bounds for multipass \ac{PSGD} that depend on the effective dimension governed jointly by the loss curvature, preconditioning matrix, and gradient noise.
\item We identify a regime where an improperly chosen preconditioner leads to suboptimal effective dimension dependence in both optimisation and generalisation.
\item We complement our results by obtaining matching instance-dependent lower bounds.
\end{enumerate}
Rather than working directly with $\nabla^2 f$ we employ a proxy \ac{PD} matrix $H$ such that $\nabla^2 \ell \preceq \cstSmooth H$ and perform an analysis in the geometry of the $\|\cdot\|_H$-norm.
We focus on $\cstSmooth$-smooth (but not necessarily Lipschitz) losses and we consider two structural cases: strongly convexity, and non-convex losses satisfying a \ac{PL} condition (see \cref{eq:PL-def}), both in $\|\cdot\|_H$ norm.

\paragraph{Smooth strongly convex losses.}
In the first setting, we consider an arbitrary choice of the preconditioner $P$.
\Cref{cor:risk_bound_Pinv} implies that with step size $\sim 1/(t+1)$ the excess risk satisfies
\begin{align*}
  \bE_{S,\cA}[\delta f(x_t)] \;\le\; \frac{64}{\lmin(PH)\cstScvx} \left( \frac{\bE_S[\tr(PHP\Sigma_S)]}{t+1} \;+\; \tr(P\Sigma) \left( \frac{1}{\sqrt{n(t+1)}}+\frac{1}{n} \right) \right)~.
\end{align*}
Observe that the excess risk depends on the term $\tr(P \, \Sigma)$ which resembles effective dimension and multiplies $1/n$, which is a statistical rate.
The term $\bE_S[\tr(PHP\Sigma_S)]$ bears a similar role as it multiplies $1/t$, which is an optimiser convergence rate.\footnote{Note that $\E[\Sigma_S]$ can be controlled in terms of $\Sigma$ with bias of order $\sqrt{\tr(P H P \Sigma)/n}$, see \Cref{lemma:sigma-s-to-sigma}.} At the same time it is known that the optimal statistical rate is $\tr(H^{-1} \Sigma) / n$ and so the above suggests that the optimal choice $P = H^{-1}$ recovers the optimal rate $\tr(H^{-1} \Sigma) (1/t + 1/n)$, while other choice will lead to the suboptimal statistical rate. This also demonstrates that the geometry required to minimise the variance in the optimisation error is identical to the geometry required to minimise finite-sample algorithmic instability.
Thus, second-order information is not only a tool for speed, but a mechanism for robustness against sampling noise.

The key to presence of $\tr(P \, \Sigma)$ stems from combination of on-average stability analysis and working with weighted Euclidean norms.
This is elucidated by \Cref{lemma:PSGD_gen_stab}, which states that for any stochastic iterative algorithm that satisfies geometric contractivity between $x_t$ and $x_t^{(i)}$ (where the later is obtained on the perturbed training set), for any \ac{PD} matrix $M$ and a constant step size we have
\begin{align*}
  \bE\br{ \| x_t - x_t^{(i)}\|^2_{M} } = \mathcal{O}\pr{ \frac{\tr(P  M  P  \Sigma)}{n}\left(\eta +\frac1n\right) } \qquad \text{as} \qquad n \to \infty.
\end{align*}
Note that this result only requires smoothness, but not convexity of the loss.
First, the lemma captures stability of \ac{PSGD} in a subspace of choice rather than globally.
Choosing curvature $M = P^{-1}$ naturally leads to analysis of preconditioned SGD as iterates live in a subspace spanned by the preconditioner.
Second, working with on-average stability allows us to gain dependence on $\Sigma$, whereas a stronger, uniform stability would be oblivious to geometry of the noise.

\paragraph{On-average stabiliy for smooth \ac{PL} losses.}
Next in \Cref{subsec:pl_stab} extend our analysis to family of non-convex smooth losses that satisfy \ac{PL} condition. In particular, we show that excess risk is controlled by the effective dimension,
\begin{equation*}
    \E[\delta f(x_t(S))] \leq \frac{4 \beta}{\mu} \E[ \delta f_S(x_t(S)) ] + \frac{8 \tr(H^{-1} \Sigma)}{\mu n}.
  \end{equation*}
for large enough $n$. Note that, excess risk no longer depends on a particular $P$ and behaves as if an optimal $P$ was chosen. The expected optimisation error $\E[\delta f_S(x_t(S))]$ scales with the effective dimension as well and the bounded is given by the standard convergence analysis for \ac{PL} objectives~\citep{karimi2016linear}.

\paragraph{Lower bounds.}
Finally, a natural question is whether the results we presented are optimal. On one hand it is known that from both statistical and optimisation perspectives dependence on $\tr(H^{-1} \Sigma)$ is optimal as there exist asymptotic lower bounds (Cramér-Rao type lower bound \citep{polyak1992acceleration}). To this end, focusing on the strongly-convex model, in \Cref{subsec:lower_bounds} we complement this fact in non-asymptotic sense, by showing that in minimax lower bounds on the excess risk are of order $\tr(H^{-1}, \Sigma) / (n \, \cstSmooth)$. Clearly we cannot expect any improvement in the minimax sense, however, the message our analysis conveys is that a bad choice of the preconditioner might lead to a poor statistical performance, and so minimax analysis is no longer appropriate. To this end we present an \emph{instance-dependent} lower bound, albeit limited to a single pass \ac{PSGD}. In particular, for a decaying step size $\eta_t \sim 1/t$, for a sufficiently large $t$, the expected excess risk behaves as
\begin{align*} 
	\frac{\tr(PHP\Sigma)}{\lmax(PH)\lmin(PH)} \cdot \frac1t~.
\end{align*}
While for the optimal choice of the preconditioner $P = H^{-1}$ this bound matches the upper bound, for a badly chosen preconditioner $P$ (for instance, we can construct $P$ that approaches rank-deficiency) the above our result implies that $\tr(H \Sigma) / (\varepsilon t)$ with $t > 4 / \varepsilon$. In other words, for a general curvature $(\Sigma, H)$ and large $t$, the associated constant in front of the asymptotic rate of the excess risk can be arbitrarily large, even with decaying step. This, once more, emphasises the impact of the preconditioning on statistical performance.

\paragraph{Notation and terminology.}
For symmetric matrices $A, B$, we write
$A \preceq B$ to denote the semidefinite order, meaning that $B-A$ is \ac{PSD}, and similarly $\prec$ to denote \ac{PD}. We denote $\| x \|_{H} = x^\top H x$ for a positive definite matrix $H\succ 0$. We let $\lmin(A)$ and $\lmax(A)$ denote the smallest and the largest eigenvalue and $\cond(A) = \lmax(A)/\lmin(A)$ is the condition number of a matrix $A\in\bR^{d\times d}$. For $\cstScvx$-strongly convex $\cstSmooth$-smooth function w.r.t.\ $\| \cdot\|_H$-norm we denote $\cond_\ell = \cstSmooth/\cstScvx$. Roman font $\rQ, \rP_x$ denote probability distributions, the latter parameterised by the vector $x$.

\section{Proof Sketch and Technical Challenges}%
\label{sec:sketch}
The expected excess risk of an estimator $\hat{x}$ is typically bounded by balancing the trade-off between error terms originating from the generalisation component and those arising from offline optimisation of the empirical risk:
\begin{equation*}
     \bE_S\left[\delta f(\hat x)\right] = \underbrace{\bE_S\left[ f(\hat{x}) - f_S(\hat x) \right]}_{\text{generalisation}} + \underbrace{\bE_S\left[ f_S(\hat{x}) - f_S(\tilde{x}) \right]}_{\text{optimisation}}~,
\end{equation*}
where $\tilde x =\argmin_{x\in\cX} f(x)$. Here the optimisation error can be further upper bounded using the \ac{ERM} $x^\ast_S \in \argmin_{\cX} f_S(x)$ and noting that $\bE_S\left[ f_S(\hat{x}) - f_S(\tilde{x}) \right] \leq \bE_S\left[ f_S(\hat{x}) - f_S(x_S^\ast) \right]$.  

The generalisation term can be controlled using the standard algorithmic stability argument. Let $\hat{x}^{(i)}$ be computed from a perturbed dataset $\Si = S\setminus\{z_i\}\cup \{ z' \}$, where $z' \sim \rQ$ with the same algorithmic procedure as $\hat{x}$. Then using the standard symmetricity argument
leads to observation that the generalisation term is equal to the \emph{on-average algorithmic stability}, \cref{eq:stability}.

\subsection{Generalisation Geometry via On-Average Multipass Stability with Correlated Iterates}
In the multi-pass setting, when $x_t$ is computed by sampling examples from $S$ with replacement, the iterate is not independent with previously seen samples $z_{i_t}$ and the standard stability analysis fails. This usually forces the analysis to rely on \emph{uniform stability bounds} \citep{Hardt2016Train} that assume uniform $\ell(\cdot, z)$ is $L$-Lipschitz for all samples
\begin{equation*}
	\bE_{z\sim\rQ} [ |\ell(\xtS,z) - \ell(\xtSi,z) |] \leq \sup_{z\in\cZ}  |\ell(\xtS,z) - \ell(\xtSi,z) |  \leq L \| \xtS - \xtSi\|.
\end{equation*}
This step effectively removes any dependence on the data distribution and its interaction with finer geometric properties of the loss, which is commonly pointed out as a limitation~\citep{Zhang2017Understanding}. 

In order to reveal the generalisation geometry, we exploit that $\ell(\cdot, z)$ is $\cstSmooth$-smooth w.r.t $\| \cdot \|_H$-norm, and show that, when $\eta_t$ is small, the generalisation is governed by
\begin{align*}
  &\bE_S\left[ f(\xtS) - f_S(\xtS) \right]\\
  &\qquad= \mathcal{O}\left( \Var_{z\sim\rQ}\br{\| \nabla \ell(\xtS,z) \|^2_\ast}^{1/2} \cdot \bE_{\cA, S, z'}\br{\|x_t(S) - x_t(S^{(i)}) \|^2}^{1/2} \right).
\end{align*}
The choice for $\| \cdot \|$-norm controlling the squared parameter stability $\stabP^2(\xtS, \| \cdot\|) \coloneq \bE_{\cA, S, z'}[\|x_t(S) - x_t(S^{(i)}) \|^2]$ plays a crucial role in two ways: it bounds the parameter stability and its dual norm will interact with the the noise of gradients.
We restrict ourselves to Hilbert spaces and consider $\stabP^2 (\xtS, \| \cdot\|_M)$ for some $M\succ 0$. If the deterministic PGD update is $r$-contractive we can upper bound the parameter stability as
\begin{align*}
	\stabP^2( x_{t+1} (S)) &\leq (1 - c_1 \eta_t \cstCon) \, \stabP^2(\xtS ) + c_2 \, \frac{\eta_t^2}{n} \, \bE\br{\|P(\xi_t - \tilde{\xi_t}) \|^2_M}~,
\end{align*}
where $\xi_t \coloneq \nabla \ell(x_t(S), z_i) - \nabla \ell (x_t(\Si), z')$ involves the challenging term with the correlated samples and parameters, and $\tilde \xi_t \coloneq \nabla f(x_t(S)) - \nabla f(x_t(\Si))$. We overcome the \emph{problem of correlated iterates in the multi-pass setting} by being able to upper bound it as
\begin{align*}
 \bE\br{\| \xi_t - \tilde{\xi_t} \|^2_{PMP}} \leq \tr(PMP\Sigma) + c_3 \cstSmooth^2 \stabP(\xtS)~.
\end{align*}
We identify a condition $n \geq \cond_\ell \cond(PH)$ depending on the geometry of $\ell$ and $P$, that ensures the contribution of the correlated terms is benign, resulting in
\begin{equation*}
  \stabP^2(\xtS, \| \cdot \|_M) \;\leq\; \underbrace{c_4 \, \frac{\tr(PMP\Sigma)}{n^2}}_{\text{Irreducible Fast Rate}} \;+\; \underbrace{c_5 \, \eta\, \frac{ \tr(PMP\Sigma)}{n}}_{\text{Optimisation Variance}}\qquad \text{for a fixed $\eta_t = \eta$.}
\end{equation*}
This decomposition is sharper than $\mathcal{O}(1/n)$ providing finer control as $\mathcal{O}(1/n^2)$ when $\eta \leq 1/n$, it isolates the intrinsic statistical complexity (the fast rate) from the noise induced by the algorithm's step size, and establishes \emph{on-average} stability for arbitrary $t$ provided $n$ is large enough. Thus, when the deterministic PGD update is $r$-contractive in $\| \cdot \|_M$-norm, $\eta$ is small enough\footnote{Or, for example, in the standard setting of $\eta_t \approx 1/t$ and $t\geq n$.}
\begin{equation*}
	\bE_S\left[ f(\hat{x}) - f_S(\hat x) \right] = \mathcal{O}\left(\frac{\sqrt{\tr(M^{-1}\Sigma) \tr(PMP\Sigma)}}{n}\right) 
\end{equation*}
and selecting $\| \cdot \|_M$ determines the analysis's sensitivity to parameter stability and  gradient noise.

\subsection{Spectral Alignment under Geometric Mismatch}
In the standard stability analysis, for $\cstScvx$-strongly convex and $\cstSmooth$-smooth functions w.r.t.\ $\| \cdot\|_2$, the contractivity of the gradient descent (without preconditioning) comes from \emph{gradient co-coercivity}:
\begin{equation*}
	\inner{\nabla f(x) - \nabla f(y)}{x-y} \geq \Big(\frac{\cstScvx\cstSmooth}{\cstScvx+\cstSmooth} \|x-y\|_2^2 + \frac{1}{\cstScvx+\cstSmooth} \|\nabla f(x) - \nabla f(y)\|_{2}^2 \Big).
\end{equation*}
When the geometry of $\ell$ is defined by $\| \cdot \|_H$-norm and we use preconditioning $P$ the term needed to be bounded is $\inner{HP (\nabla f(x) - \nabla f(y))}{x-y}$, which does not need to be positive unless $P = H^{-1}$. However, in practical settings we have almost never that $P=H^{-1}$, the matrices $P,H$ are misaligned and do not commute. 

We introduce a rigorous condition for \emph{spectral alignment} based on the matrix pencil $(P, H^{-1})$ and establish a \emph{generalised co-coercivity inequality} for gradients under non-commuting preconditioning
\begin{align*}
	\inner{HP (\nabla f(x) -\nabla f(y))}{x-y} &\geq \frac{\lmin(P H) \, C_{\ell,P}}{\cstScvx+\cstSmooth}\Big(\cstScvx\cstSmooth \|x-y\|_H^2 + \|\nabla f(x) - \nabla f(y)\|_{H^{-1}}^2 \Big),
\end{align*}
where the constant $\cstSA \in (0, 1]$ tracks the quality of the alignment: $\cstSA = 1$ for quadratic functions ($\cstSmooth = \cstScvx$) and $\cstSA \to 0$ for badly aligned problems. This property allows to show the contractivity of the preconditioned gradient update in the parameterised family of metrics $\| \cdot\|_{M_\theta}$ defined by $M_\theta = H^{\frac{1}{2}(1-\theta)} P^{-\theta} H^{\frac{1}{2}(1-\theta)}$ interpolating between: the natural metric of the problem when $\theta = 0$ ($\| \cdot \|_H$) for $P,H$ are spectrally aligned, and the metric defined by the algorithm when $\theta = 1$ ($\| \cdot \|_{P^{-1}}$), but which holds for any $P\succ 0$. 

\section{Preliminaries}
\paragraph{Relative smoothness \& strong convexity.} We define the geometry w.r.t\ the weighted norm $\|\cdot \|_H$.
\begin{definition}[Smoothness w.r.t.\ $\|\cdot\|_H$]\label[definition]{def:smooth}
Let $H \succ 0$ such that $\lmax(H)=1$, and $\cstSmooth > 0$. The function $f(x)$ is $\cstSmooth$-smooth w.r.t $\| \cdot \|_H$ when
$
    f(y) - f(x) \leq \inner{\nabla f(x)}{y-x} + \frac\cstSmooth2\| y-x\|^2_H
$
or equivalently, $\| \nabla f(x) - \nabla f(y)\|_{H^{-1}} \leq \cstSmooth \| x-y\|_{H}$ for convex $f$.
\end{definition}

\begin{definition}[Strong convexity w.r.t.\ $\|\cdot\|_H$]\label[definition]{def:scvx}
Let $H \succ 0$ such that $\lmax(H)=1$, and $\cstSmooth > 0$. The function $f(x)$ is $\cstScvx$-strongly convex w.r.t $\| \cdot \|_H$ when
$
    f(y) - f(x) \geq \inner{\nabla f(x)}{y-x} + \frac\cstScvx2\| y-x\|^2_H.
$
\end{definition}
The definitions are special cases of \emph{relative smoothness and strong convexity}, see \citep[Definition 1.1 and 1.2]{Lu2018Relatively}, where we choose the \emph{reference function} to be $h(x) = \ip{x, H x}$. They have been referred to also as ``matrix smoothness'' employed by \citep{Thomas2020Interplay, Li2024DetCGD}. We denote the condition number of the loss w.r.t.\ the $\| \cdot \|_H$-norm geometry by $\cond_\ell \coloneq \cstSmooth/\cstScvx$.

Since $\cond_\ell$ expresses the discrepancy between $f$ and a quadratic function, on a bounded domain it can be bounded using higher order smoothness. If $\ell(\cdot,z)$ has $\gamma$-smooth Hessian, i.e., $\lmax(\nabla^2 \ell(x_1,z) -\nabla^2 \ell(x_2,z)) \leq \gamma \| x-y\|_2$, we have the following bound
\begin{equation*}
	\cond_\ell \leq \frac{1+\gamma R / \lmin(H)}{1-\gamma R/ \lmin(H)},\quad\text{for $\| x-x_0 \|_2\leq R$}.
\end{equation*}
For $f\in C^2$, the combination of \Cref{def:smooth,def:scvx} acts as a quadratic upper and lower bound respectively: $\cstScvx H \preceq \nabla^2 \ell(x,z) \preceq \cstSmooth H$.

\paragraph{Generalised co-coercivity.}
The following defines the spectrally aligned preconditioner when the relative condition number $\cond(PH)$ is sufficiently bounded compared to $\cond_\ell$.
\begin{definition}[Spectrally aligned preconditioner]\label[definition]{def:relcond_bound}
For $\ell(\cdot,z)$ that is $\cstScvx$-strongly convex and $\cstSmooth$-smooth w.r.t\ $\| \cdot \|_H$, we say that $P$ is $\cstSA$-\emph{spectrally aligned} with the geometry of $\ell(\cdot,z)$ iff
\begin{equation*}
	\cond(PH) \leq \rho_\ell^2\quad \text{with}\quad\cstSA = \frac{\rho_\ell^2 - \cond(PH)}{\rho_\ell^2 - 1}\quad \text{and}\quad \rho_\ell \coloneq \frac{\sqrt{\cond_\ell} + 1}{\sqrt{\cond_\ell} - 1} > 1.
\end{equation*}
\end{definition}
This decomposes the conditioning misalignment into two parts: $\rho_\ell$ reflects how well the model of relative smoothness/strong convexity captures the actual geometry of $\ell$, and $\cond(PH)$ reflects how well the algorithm, i.e., the choice of $P$, captures the model curvature defined by $H$.
\Cref{def:relcond_bound} is satisfied for many widely used choices of $P$ and allows for a fine description of the geometry needed for generalisation.
\begin{example}[Inexact-Newton Methods ($q$-approximate inverse curvature)]\label{ex:q_approx}
Assume that for some $q\ge 1$ the preconditioner $P\succ 0$ satisfies $\frac{1}{q}\,H^{-1} \;\preceq\; P \;\preceq\; q\,H^{-1}$. Then $(1/q)I \preceq PH \preceq q I$, hence $\cond(PH)\le q^2$. Therefore, whenever $q^2<\rho_\ell^2$,
Assumption~\ref{def:relcond_bound} holds and the alignment constant is lower bounded as
\begin{equation*}
    C_{\ell,P}
    \;=\;
    \frac{\rho_\ell^2-\cond(PH)}{\rho_\ell^2-1}
    \;\ge\;
    \frac{\rho_\ell^2-q^2}{\rho_\ell^2-1}.
\end{equation*}
Such uniform spectral boundedness assumptions have been used in the Quasi-Newton literature to prove global convergence; see, e.g., see \citep[Sec.~3.3]{Nocedal2006Numerical} and \citep{DennisJr1977QuasiNewton}. More recently \citet{Cheng2010Spectral}, showed that ensuring the $q$-approximate inverse curvature leads to improved numerical performance.
\end{example}
\begin{example}[Diagonal preconditioning]\label{ex:jacobi_dd}
Let $P$ be a diagonal preconditioner $P \coloneq \diag(H)^{-1}$.
If $A\coloneq D^{-1/2} H D^{-1/2}$ is strictly diagonally dominant in the sense that $\alpha \;\coloneq\; \max_i \sum_{j\neq i} |A_{ij}| \;<\; 1$, then by Gershgorin disc theorem we have that the spectrum $\lambda(A)\subset[1-\alpha,\,1+\alpha]$, hence $\cond(PH)=\cond(A)\;\le\;\frac{1+\alpha}{1-\alpha}$. Consequently, whenever $\frac{1+\alpha}{1-\alpha}<\rho_\ell^2$,
\begin{equation*}
    C_{\ell,P}
    \;\ge\;
    \frac{\rho_\ell^2-\frac{1+\alpha}{1-\alpha}}{\rho_\ell^2-1},
\end{equation*}
yielding a simple explicit $C_{\ell,P}$ bound for diagonal preconditioning whenever $H$ is close to diagonal.
\end{example}
\Cref{def:relcond_bound} allows to derive a generalisation of the standard gradient co-coercivity result, e.g., see \citep[Theorem 2.1.12]{Nesterov2018Lectures}, that applies to preconditioned gradients and specific case of relative smoothness and strong convexity \citep{Lu2018Relatively}.
\begin{lemma}[Co-coercivity of spectrally aligned PSGD updates]\label[lemma]{lemma:cocoercivity}
Let $f$ be $\cstScvx$-strongly convex and $\cstSmooth$-smooth w.r.t.\ $\| \cdot \|_H$ and $P$ is $\cstSA$-spectrally aligned with $\ell(\cdot,z)$, i.e., $\cond(PH) < \rho_\ell^2$ in \Cref{def:relcond_bound}. Then for all $x, y \in \bR^d$:
\begin{align*}
    \langle \nabla & f(x) - \nabla f(y), HP(x-y) \rangle \geq \frac{\lmin(PH) C_{\ell,P}}{\cstScvx + \cstSmooth}\pr{\cstScvx\cstSmooth \|x-y\|_H^2 +  \|\nabla f(x) - \nabla f(y)\|_{H^{-1}}^2 }.
\end{align*}
\end{lemma}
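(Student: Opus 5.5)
The plan is to reduce everything to a Euclidean computation in whitened coordinates. There the claim splits into two pieces: the classical co-coercivity inequality, and an angle bound that controls the non-symmetric part coming from $P$. Throughout, $f$ is assumed $C^2$ so that the Hessian characterisation $\cstScvx H \preceq \nabla^2 f \preceq \cstSmooth H$ from the preliminaries applies. If $f$ is only $C^1$, I would run the same argument via the standard gradient-based characterisations.

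\textbf{Step 1 (whitening).} Fix $x,y$ and set $A=\int_0^1 \nabla^2 f(y+s(x-y))\,ds$. Then $\nabla f(x)-\nabla f(y)=A(x-y)$ and $\cstScvx H\preceq A\preceq \cstSmooth H$. Define
\begin{align*}
u = H^{1/2}(x-y), \qquad g = H^{-1/2}(\nabla f(x)-\nabla f(y)), \qquad Q = H^{1/2} P H^{1/2}.
\end{align*}
Then $g=Bu$ with $B=H^{-1/2}AH^{-1/2}$ symmetric and $\cstScvx I\preceq B\preceq \cstSmooth I$. We also have $\|x-y\|_H^2=\|u\|^2$ and $\|\nabla f(x)-\nabla f(y)\|_{H^{-1}}^2=\|g\|^2$. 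The left-hand side equals $g^\top Q u$. Since $Q$ is similar to $PH$, its spectrum lies in $[\lmin(PH),\lmax(PH)]$. Write $\cond=\cond(PH)$ and $\cond_\ell=\cstSmooth/\cstScvx$.

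\textbf{Step 2 (splitting $Q$).} Set $\bar\lambda=\lmin(PH)(\cond+1)/2$, the midpoint of the spectrum of $Q$. Write $Q=\bar\lambda(I+E)$ with $E$ symmetric and $\|E\|_2\le e:=(\cond-1)/(\cond+1)$. Cauchy--Schwarz then gives
\begin{align*}
g^\top Q u \;\ge\; \bar\lambda\pr{g^\top u - e\,\|g\|\,\|u\|}.
\end{align*}

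\textbf{Step 3 (bounding the two terms).} Let $D=\cstScvx\cstSmooth\|u\|^2+\|g\|^2$. For the inner product, the classical co-coercivity follows from $(g-\cstScvx u)^\top(\cstSmooth u-g)\ge 0$, which holds because $B$ is symmetric with spectrum in $[\cstScvx,\cstSmooth]$. It gives $g^\top u\ge D/(\cstScvx+\cstSmooth)$. For the cross term, AM--GM gives $\|g\|\|u\|\le D/(2\sqrt{\cstScvx\cstSmooth})$. Combining,
\begin{align*}
g^\top Q u \;\ge\; \frac{\bar\lambda}{\cstScvx+\cstSmooth}\pr{1-e\,r}\,D, \qquad r=\frac{\cstScvx+\cstSmooth}{2\sqrt{\cstScvx\cstSmooth}}=\frac{\cond_\ell+1}{2\sqrt{\cond_\ell}}.
\end{align*}

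\textbf{Step 4 (identifying the constant).} This is where the definition of $\rho_\ell$ is used. A direct computation gives
\begin{align*}
\rho_\ell^2-1=\frac{4\sqrt{\cond_\ell}}{(\sqrt{\cond_\ell}-1)^2}, \qquad \rho_\ell^2+1=\frac{2(\cond_\ell+1)}{(\sqrt{\cond_\ell}-1)^2},
\end{align*}
so $r=(\rho_\ell^2+1)/(\rho_\ell^2-1)$. Substituting $\bar\lambda$ and $e$,
\begin{align*}
\bar\lambda(1-er) = \frac{\lmin(PH)}{2(\rho_\ell^2-1)}\Big[(\cond+1)(\rho_\ell^2-1)-(\cond-1)(\rho_\ell^2+1)\Big] = \lmin(PH)\,\frac{\rho_\ell^2-\cond}{\rho_\ell^2-1} = \lmin(PH)\,\cstSA.
\end{align*}
The hypothesis $\cond<\rho_\ell^2$ is exactly what makes this constant positive, and together with Step 3 this yields the claimed inequality.

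\textbf{Main obstacle.} The only delicate point is Step 2–3. Bounding $g^\top Q u$ directly by $\lmin(Q)\,g^\top u$ fails because $Q$ and $B$ do not commute. The midpoint splitting is what makes the error term controllable, since it absorbs the mismatch into $e\|g\|\|u\|$. The cross term must then be controlled using the same quantity $D$ as the co-coercivity term, so that the two combine into one constant. The remaining work is the algebraic identification of $\rho_\ell$ in Step 4.
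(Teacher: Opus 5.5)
Your proposal is correct and is essentially the paper's own proof. Both whiten by $H^{1/2}$, split $H^{1/2}PH^{1/2}$ about the midpoint of its spectrum, use standard co-coercivity for the main term and Cauchy--Schwarz plus AM--GM for the perturbation, and identify the constant as $\lmin(PH)\,\cstSA$ via $\rho_\ell$.

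One caveat, which the paper shares: your $g^\top Q u$ equals $\langle HP(\nabla f(x)-\nabla f(y)),\, x-y\rangle=\langle \nabla f(x)-\nabla f(y),\, PH(x-y)\rangle$, which is what the paper's proof bounds and the contraction lemma uses, whereas the literal left-hand side $\langle \nabla f(x)-\nabla f(y),\, HP(x-y)\rangle$ equals $g^\top HQH^{-1}u$ and can be negative under the lemma's hypotheses when $P$ and $H$ do not commute, so your Step 1 identification should be stated for the $PH$ ordering.
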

Proof is given in \Cref{subsec:proof_cocoercivity}. For $P = H^{-1}$ this recovers the standard co-coercivity of gradients. Note, that \Cref{lemma:cocoercivity} does not require that $P$ and $H$ commute.

%%% Local Variables:
%%% mode: latex
%%% TeX-master: "arxiv_paper"
%%% End:

\section{Excess risk bounds of PSGD via on-average stability}\label{sec:risk_convex_PSGD}
In this section, we derive excess risk bounds for the PSGD algorithm via on-average stability. Throughout this section, assume the following:
\begin{assumption}\label{asm:sm-sc}
Suppose that for each $z \in \mathcal{Z}$, $\ell(\cdot, z)$ is $\cstSmooth$-smooth with respect to the norm, $\|\cdot\|_H$.
\end{assumption}

\begin{assumption}\label{asm:cov}
Suppose there exists $\Sigma \succ 0$ such that $\operatorname{Cov}_{z \sim Q}(\nabla \ell(x, z)) \preccurlyeq \Sigma$ for all $x\in\cX$.
\end{assumption}
In traditional stabiltiy analyses of SGD-type algorithms, a uniform Lipschitz assumption is typically employed to relate algorithmic stability to parameter stability. However, this Lipschitz assumption both rules out several settings of interest (e.g. strongly convex losses on non-compact domains) and often conceals the curvature information present in smoothness and convexity. To fully exploit the geometry of the problem, we proceed without the global Lipschitz assumption. We first provide a general stability result for an algorithm $\mathcal{A}$ that maps from $\mathcal{Z}^n$ to a random variable on $\cX$.
\begin{lemma}\lemmalabel{lemma:risk_decomposition}
Suppose that assumptions \ref{asm:sm-sc} and 
\ref{asm:cov} hold and let $M \succ 0$. If the algorithm $\mathcal{A}$ is $L^2$- on-average parameter stable in $\|\cdot\|_M$-norm with constant $\varepsilon_{\mathrm{pstab}}^2 \geq 0$, then the expected excess risk on the parameters $x = \mathcal{A}(S)$ satisfies,
\begin{equation*}
	\bE_{S,\cA} [\delta f(x_t)] \;\le\; 2 \bE_{S,\cA}[\delta f_S(x_t)] + \frac{\tr(M^{-1} \Sigma)^{1/2}}{2} \, \stabP + 4 \cstSmooth \lmax(H M^{-1}) \, \stabP^2.
\end{equation*}
\end{lemma}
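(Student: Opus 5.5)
We split the excess risk into generalisation and optimisation, write the generalisation gap through the on-average stability identity \eqref{eq:stability}, and bound each loss difference by a second-order Taylor expansion in $\|\cdot\|_M$. The expansion produces a first-order term involving centred gradients, which Assumption~\ref{asm:cov} controls via $\tr(M^{-1}\Sigma)$, and a curvature term that yields the $\lmax(HM^{-1})\stabP^2$ contribution. The factor $2$ on the optimisation error and the $1/2$ on the first stability term suggest the first-order term is absorbed partly back into $\E[\delta f]$ by Young's inequality and a self-bounding step.

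\textbf{Decomposition.} Write $x=\cA(S)$ and $x^{(i)}=\cA(S^{(i)})$. We have $\delta f(x)=\big(f(x)-f_S(x)\big)+\big(f_S(x)-f(\tilde x)\big)$. Since $\E_S f_S(\tilde x)=f(\tilde x)$ and $f_S(\tilde x)\ge\inf f_S$, the second bracket has expectation at most $\E[\delta f_S(x)]$. For the first bracket, \eqref{eq:stability} gives
\[
\E[f(x)-f_S(x)]=\frac1n\sum_i\E\big[\ell(x^{(i)},z_i)-\ell(x,z_i)\big].
\]

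\textbf{Taylor step.} By $\beta$-smoothness in $\|\cdot\|_H$,
\[
\ell(x^{(i)},z_i)-\ell(x,z_i)\le\ip{\nabla\ell(x^{(i)},z_i),\,x^{(i)}-x}+\tfrac{\beta}{2}\|x-x^{(i)}\|_H^2 .
\]
The curvature term satisfies $\|v\|_H^2\le\lmax(HM^{-1})\|v\|_M^2$, since $\lmax(M^{-1/2}HM^{-1/2})=\lmax(HM^{-1})$.

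\textbf{Gradient term.} The key point is that $x^{(i)}$ does not depend on $z_i$, so $\E[\nabla\ell(x^{(i)},z_i)\mid x^{(i)}]=\nabla f(x^{(i)})$. Split
\[
\nabla\ell(x^{(i)},z_i)=\big(\nabla\ell(x^{(i)},z_i)-\nabla f(x^{(i)})\big)+\nabla f(x^{(i)}).
\]

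For the centred piece, Cauchy--Schwarz in the $M^{-1}/M$ dual pair gives
\[
\E\,\|\nabla\ell(x^{(i)},z_i)-\nabla f(x^{(i)})\|_{M^{-1}}^2\le\tr(M^{-1}\Sigma)
\]
by Assumption~\ref{asm:cov}. This yields a contribution of at most $\tr(M^{-1}\Sigma)^{1/2}\stabP$.

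For the mean piece, $\ip{\nabla f(x^{(i)}),x^{(i)}-x}$ is bounded via smoothness of $f$ in $\|\cdot\|_H$:
\[
\|\nabla f(y)\|_{H^{-1}}^2\le 2\beta\,\delta f(y).
\]
This inequality needs only smoothness, using $f$ bounded below. Then Young's inequality gives
\[
\ip{\nabla f(x^{(i)}),x^{(i)}-x}\le \tfrac{1}{4\beta}\|\nabla f(x^{(i)})\|_{H^{-1}}^2+\beta\|x-x^{(i)}\|_H^2\le\tfrac12\delta f(x^{(i)})+\beta\lmax(HM^{-1})\|x-x^{(i)}\|_M^2 .
\]
Since $x^{(i)}$ has the same law as $x$, $\E\,\delta f(x^{(i)})=\E\,\delta f(x)$.

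\textbf{Collecting.} We obtain
\[
\E\,\delta f\le \E\,\delta f_S+\tfrac12\E\,\delta f+\tr(M^{-1}\Sigma)^{1/2}\stabP+c\,\beta\lmax(HM^{-1})\stabP^2 .
\]
Rearranging gives the factor $2$ on $\E[\delta f_S]$. It also doubles the other terms, so the constants must be tuned: either split the centred term with a weighted Young inequality, or use $\tfrac14$ instead of $\tfrac12$ for the self-bounding weight. With suitable weights we expect to match $\tfrac12$ and $4\beta$.

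\textbf{Main obstacle.} The delicate step is the mean-gradient term. Without a Lipschitz assumption it must be self-bounded by $\delta f$ and absorbed, and the constants must work out exactly. If the target constants fail, we would instead bound $\nabla f(x)$ at $x$ rather than at $x^{(i)}$, by expanding around $x$ and using that the roles of $S$ and $S^{(i)}$ are symmetric.
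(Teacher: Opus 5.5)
\textbf{Same approach as the paper, but the stated constant $\tfrac12$ is not reached.} Your route matches the paper's in every structural step:
\begin{itemize}
\item the same decomposition and replace-one identity;
\item a smoothness expansion at the iterate that is independent of the evaluation sample;
\item the variance bound $\tr(M^{-1}\Sigma)$ from Assumption~\ref{asm:cov};
\item the self-bounding inequality $\|\nabla f\|_{H^{-1}}^2\le 2\cstSmooth\,\delta f$;
\item absorption of part of $\E[\delta f]$ into the left-hand side.
\end{itemize}
The paper applies Cauchy--Schwarz to the full gradient and then solves a quadratic inequality in $\sqrt{Y}$. Splitting off the mean and using Young, as you do, is equivalent.

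The gap is in your final step. Executed as written, your estimates give
\[
\E[\delta f(x)]\le 2\,\E[\delta f_S(x)]+2\,\tr(M^{-1}\Sigma)^{1/2}\stabP+3\cstSmooth\lmax(HM^{-1})\stabP^2 .
\]
None of your proposed fixes lowers the middle coefficient to $\tfrac12$:
\begin{itemize}
\item With weight $\lambda\in(0,1)$ on the self-bounded term, rearranging multiplies $\E[\delta f_S]$ and the centred term by the same factor $1/(1-\lambda)\ge 1$. Asking for $2$ on the first therefore forces $2$ on the second.
\item A weighted Young split of the centred term produces a stand-alone $\tr(M^{-1}\Sigma)$ term, which changes the form of the bound.
\item Expanding at $x$ instead either loses the independence of the gradient from $z_i$, which is what gives $\tr(M^{-1}\Sigma)$. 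Or, after swapping $z_i\leftrightarrow z'$, it is literally the paper's expansion at $x_t(S)$ with test point $z'$, with the same outcome.
\end{itemize}
So your plan does not establish the stated constants $\tfrac12$ and $4\cstSmooth$.

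\textbf{The paper's proof does not legitimately reach $\tfrac12$ either.} It arrives at $Y\le\sqrt{AY+B}\,\stabP+C$, where
\[
Y=\E[\delta f],\quad A=2\cstSmooth\lmax(HM^{-1}),\quad B=\tr(M^{-1}\Sigma),\quad C=\E[\delta f_S]+\tfrac{\cstSmooth}{2}\lmax(HM^{-1})\stabP^2 .
\]
The correct consequence of this inequality is $Y\le 2C+2\sqrt B\,\stabP+A\stabP^2$, which gives exactly your constants $(2,2,3)$. The paper's displayed solution $(\sqrt C+\sqrt A\,\stabP/2)^2+\sqrt B\,\stabP$ is not a valid consequence, and the next line halves $\sqrt B\,\stabP$ without justification.

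In fact, take $\E[\delta f_S]=0$ and $A\stabP<\sqrt B/4$. Then $Y=\sqrt B\,\stabP$ satisfies the paper's inequality but exceeds the lemma's right-hand side $\tfrac12\sqrt B\,\stabP+2A\stabP^2$. So the $\tfrac12$ cannot be extracted from that inequality. Your derivation is what this method actually proves. A coefficient below $1$ would need a genuinely new ingredient; for instance, both arguments discard $\E[f_S(\tilde x)-\inf f_S]\ge 0$.

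\textbf{Your symmetry idea does help, in a different place.} Because $(x,x^{(i)})$ and $(x^{(i)},x)$ have the same law, the mean term equals
\[
\tfrac12\E\langle\nabla f(x^{(i)})-\nabla f(x),\,x^{(i)}-x\rangle\le\tfrac{\cstSmooth}{2}\lmax(HM^{-1})\stabP^2 .
\]
This removes the self-bounding step entirely and gives
\[
\E[\delta f]\le\E[\delta f_S]+\tr(M^{-1}\Sigma)^{1/2}\stabP+\cstSmooth\lmax(HM^{-1})\stabP^2 .
\]
That is better in the first and last terms, but the middle coefficient is still $1$.
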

Note that the primary limitation borne from the Lipschitz assumption, that we are able to overcome, is that parameter stability is measured in the weaker $\|\cdot\|_M$, whereas our analysis uses $\| \cdot\|_M^2$-norm instead. While this requires a tighter control on the iterates, it allows to use smoothness to identify the explicit role of the curvature (via $H$ and $M$) in the generalisation bound. The matrix $M$ is chosen according to its amenability to the parameter stability analysis, but to optimise the bound, $M$ must also be chosen to align with either the curvature $H$ or the covariance matrix $\Sigma$. Thus, under misspecification, the natural geometry to analyse parameter stability in is not immediate.

We now focus on the PSGD algorithm defined by a positive definite preconditioner, $P\succ 0$. To analyse the stabiltiy PSGD, we will obtain that if the update is contractive in a suitable geometry
\begin{equation}\label{eq:contraction}
	\| x - \eta P \nabla \ell(x, z) - y + \eta P \nabla \ell(y, z) \|^2_M \leq (1- \eta \cstCon)\| x - y \|^2_M.
\end{equation}
\begin{lemma}[On-average parameter stability of PSGD]\lemmalabel{lemma:PSGD_gen_stab}
Suppose that Assumption \ref{asm:sm-sc} holds, choose any matrix $M\succ 0$ and constants $\bar{\eta}, r > 0$ such that for any $x, y \in \cX, z \in \mathcal{Z}$ and $\eta \leq \bar{\eta}$, the $r$-contractivity property in \cref{eq:contraction} holds. Then, if $\sup_s \eta_s \leq \bar{\eta} \wedge r^{-1}$ and $n \geq 8 \cstSmooth \sqrt{\lmax(HPMP)}$ $\cdot \sqrt{\lmax(M^{-1}H)} / \cstCon$, we have that $\mathcal{A}_{P, t}$ is on-average parameter stable with constant,    
\begin{equation*}
	\stabP^2 \leq 64 \bigg ( \frac{\bar{\eta}_t}{8n} + \frac{1 - e^{-T_t r/4}}{n^2 \cstCon^2} \bigg ) \tr(PMP\Sigma)~,
\end{equation*}
where $T_s = \sum_{s'=0}^{s-1} \eta_{s'}$ and $\bar{\eta}_t = \sum_{s < t} e^{- r\frac{T_t - T_s}{4}} \eta_s^2$.
\end{lemma}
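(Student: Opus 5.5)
We couple the two runs of PSGD, on $S$ and on $\Si$, through the same index sequence $(i_s)$, and track $\Delta_t \coloneq x_t(S) - x_t(\Si)$ together with $D_t \coloneq \bE\|\Delta_t\|_M^2$. We first bound the expectation for a fixed index $i$ and then average over $i$. At each step, with probability $1-1/n$ the index $i_s \neq i$, so both runs use the same sample. By the $r$-contractivity \cref{eq:contraction}, since $\eta_s\le\bar\eta$, this gives
\[
\|\Delta_{s+1}\|_M^2 \le (1-\eta_s r)\|\Delta_s\|_M^2 .
\]
With probability $1/n$ we have $i_s=i$, and then the runs use $z_i$ and $z'$ respectively. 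We write the update difference as the contractive common-sample map applied to $z_i$, plus $\eta_s P(\nabla\ell(x_s(\Si),z_i)-\nabla\ell(x_s(\Si),z'))$. This is equivalent to the decomposition $\xi_s-\tilde\xi_s$ of \Cref{sec:sketch}, after adding and subtracting the population gradients $\nabla f$.

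The key step is to handle the cross term and the noise term in the $i_s=i$ event. We center both gradients at the population gradient. For $z'$, independence from $x_s(\Si)$ holds because $z'$ has not been used before step $s$ in the perturbed run only on the event that $i$ has not appeared. This is not true in multipass, so instead we swap roles symmetrically: $x_s(S)$ is correlated with $z_i$, and $x_s(\Si)$ is correlated with $z'$. The fix is the sketch's bound
\[
\bE\|\xi_s-\tilde\xi_s\|_{PMP}^2 \le 2\tr(PMP\Sigma) + c\,\beta^2\lmax(HPMP)\lmax(M^{-1}H)\,D_s .
\]
This bound is obtained by replacing $x_s(S)$ with $x_s(\Si)$, which is independent of $z_i$ since $\Si$ does not contain $z_i$ (and symmetrically for $z'$). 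The variance at a data-independent point is then bounded by \Cref{asm:cov} as $\tr(PMP\Sigma)$. The replacement error is controlled by smoothness: $\|\nabla\ell(x,z)-\nabla\ell(y,z)\|_{PMP}^2\le \lmax(H^{1/2}PMPH^{1/2})\beta^2\|x-y\|_H^2$, and $\|\cdot\|_H^2\le\lmax(M^{-1}H)\|\cdot\|_M^2$.

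The cross term $2\eta_s\langle \text{contracted }\Delta, P(\cdot)\rangle_M$ has a mean part that is handled by Young's inequality, $2ab\le \frac{\eta_s r}{2} a^2 + \frac{2\eta_s}{r}b^2$. Its correlation-induced part uses the same swap bound, and contributes $\eta_s\beta\sqrt{\lmax(HPMP)\lmax(M^{-1}H)}\,D_s$ with weight $1/n$.

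Averaging over $i_s$ yields the recursion
\[
D_{s+1}\le \Bigl(1-\eta_s r + \tfrac{\eta_s r}{2n} + \tfrac{c\,\eta_s\beta\sqrt{\cdots}}{n}\Bigr)D_s + \tfrac{c'}{n}\Bigl(\eta_s^2+\tfrac{\eta_s}{n r}\Bigr)\tr(PMP\Sigma).
\]
The term $\eta_s/(nr)$ comes from the Young step applied to the mean drift, since the centered gradient difference has mean of order $1/n$ only through correlations. The condition $n\ge 8\beta\sqrt{\lmax(HPMP)\lmax(M^{-1}H)}/r$, together with $\eta_s\le r^{-1}$, lets us absorb the correlation terms, leaving a contraction factor of $(1-\eta_s r/4)\le e^{-\eta_s r/4}$. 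Unrolling from $D_0=0$ gives
\[
D_t\le \frac{c'}{n}\sum_{s<t}e^{-r(T_t-T_s)/4}\Bigl(\eta_s^2+\frac{\eta_s}{nr}\Bigr)\tr(PMP\Sigma).
\]
The first sum is $\bar\eta_t$. For the second, $\sum_s\eta_s e^{-r(T_t-T_s)/4}$ compares to $\int_0^{T_t} e^{-r(T_t-u)/4}du\le\frac4r(1-e^{-rT_t/4})$, up to the factor $e^{r\eta_s/4}\le e^{1/4}$. Tracking constants then produces the stated $64$.

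We expect the main obstacle to be the correlation step. Getting the precise $1/n^2$ scaling for the drift term requires showing that the mean of the replacement-step perturbation, conditioned on the past, is only $O(\beta\cdot\text{distance})$, and not of order $\sqrt{\tr(PMP\Sigma)}$. This is exactly where the swap-to-independent-iterate argument must be done carefully.
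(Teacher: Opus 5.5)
Your skeleton matches the paper's. You couple the runs through the same indices and apply \eqref{eq:contraction} when $i_s\neq i$. On $\{i_s=i\}$ you use the swap: evaluate $z_i$ at $y_s\coloneq x_s(\Si)$, which never saw $z_i$, and $z'$ at $x_s\coloneq x_s(S)$, which never saw $z'$. This gives the second-moment bound $\bE\|\xi_s-\tilde\xi_s\|^2_{PMP}\le 8\tr(PMP\Sigma)+8\cstSmooth^2\gamma^2 D_s$, with $\xi_s=\nabla\ell(x_s,z_i)-\nabla\ell(y_s,z')$, $\tilde\xi_s=\nabla f(x_s)-\nabla f(y_s)$ and $\gamma^2=\lmax(HPMP)\lmax(M^{-1}H)$.

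The gap is the cross term on $\{i_s=i\}$, which is exactly where the $\tr(PMP\Sigma)/(n^2\cstCon^2)$ term comes from. With your Young weights $\frac{\eta_s\cstCon}{2}a^2+\frac{2\eta_s}{\cstCon}b^2$, after multiplying by the probability $1/n$ the noise enters as $\frac{\eta_s}{n\cstCon}\tr(PMP\Sigma)$. This unrolls to $\tr(PMP\Sigma)/(n\cstCon^2)$, so you lose a factor $n$ in the irreducible term. Your repair has two claims: that the perturbation has a ``mean drift'' of order $1/n$, and that the correlation-induced part of the cross term is $O(\eta_s\cstSmooth\gamma D_s/n)$. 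Neither is proved, and the second is false. At a replacement step the only fresh randomness is $i_s$: $z_i$ and $z'$ have already entered $x_s,y_s$, and the accumulated displacement is aligned with the current kick. Take $\ell(x,z)=\frac12\|x-z\|_2^2$ and $P=M=H=I$. Then $\Delta_s=c_s(z_i-z')$ with a scalar $c_s\ge0$ that depends only on the indices, and $\bE c_s\to1/n$ for a constant step. The kick $z_i-z'$ has mean zero, yet the averaged cross term equals $\frac{2\eta_s(1-\eta_s)}{n}\bE[c_s]\,\bE\|z_i-z'\|_2^2\asymp\eta_s\tr(\Sigma)/n^2$. By contrast, $\eta_s D_s/n\asymp\eta_s(n^{-2}+\eta_s n^{-1})\tr(\Sigma)/n$, which is smaller by a factor $n^{-1}+\eta_s$. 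So the cross term is genuinely of size $\sqrt{D_s\tr(PMP\Sigma)}$; it has to be split, not shown to be small.

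The missing idea is to let the Young parameter depend on $n$. On $\{i_s=i\}$ write $\Delta_{s+1}=(\Delta_s-\eta_sP\tilde\xi_s)-\eta_sP(\xi_s-\tilde\xi_s)$. Use $\|\Delta_s-\eta_sP\tilde\xi_s\|_M^2\le(1-\eta_s\cstCon)\|\Delta_s\|_M^2$, which is Jensen applied to \eqref{eq:contraction}. Then apply $\|u+v\|_M^2\le(1+\alpha)\|u\|_M^2+(1+\alpha^{-1})\|v\|_M^2$ with $\alpha=n\eta_s\cstCon/2$. Averaging over $i_s$ gives
\[
D_{s+1}\le(1-\eta_s\cstCon)\Big(1+\frac{\eta_s\cstCon}{2}\Big)D_s+\Big(\frac{\eta_s^2}{n}+\frac{2\eta_s}{n^2\cstCon}\Big)\bE\|\xi_s-\tilde\xi_s\|^2_{PMP}.
\]
The inflation $(1+\alpha)$ is paid only with probability $1/n$, so you can afford to spend half of the contraction on it, not a $1/n$ fraction as in your recursion. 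Meanwhile $(1+\alpha^{-1})\eta_s^2/n$ produces exactly the $\eta_s/(n^2\cstCon)$ source term. Only second moments of $\xi_s-\tilde\xi_s$ are needed, and your swap handles those, so the ``mean drift'' obstacle you flag never arises.

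The rest is as you describe: absorb the $\cstSmooth^2\gamma^2$ terms using the condition on $n$, then unroll. Note that $\sum_s\eta_s e^{-\cstCon(T_t-T_s)/4}$ is a lower Riemann sum of $\int_0^{T_t}e^{-\cstCon(T_t-u)/4}du$, so no $e^{1/4}$ factor appears. Finally, your constant $2$ in front of $\tr(PMP\Sigma)$ does not follow from the plain swap split, because the two swapped noise terms need not be uncorrelated. The paper's $8$ is what yields the $64$.
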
 
The proof is provided in \cref{subsec:proof_PSGD_gen_stab}. A significant advantage of this bound over those in \citep{Hardt2016Train} is the explicit dependence on the data distribution via the trace term $\tr(PMP\Sigma)$. Furthermore, unlike \citep{Kuzborskij2018DataDependent}, our result captures the exact interaction between the curvature $H$, the preconditioner $P$, and the noise $\Sigma$, while valid in the multi-pass setting.

The quantity $\bar{\eta}_t$ characterises how memory of past step-sizes decays. For standard step-size schedules, it behaves intuitively: for example, with a linearly decaying step size $\eta_t = c/t$, we have $\bar{\eta}_t \leq c^2/t$. 

\subsection{On-average stability and risk bounds for strongly convex smooth losses\label{subsec:risk_PSGDgeneral}}
Under additional assumption that the loss is also $\cstScvx$-strongly convex, we can show that the PGD update is $r$-contractive in a specific family of $\| \cdot \|_M$-norms.
\begin{assumption}\label{asm:Scvx}
Suppose that for each $z \in \mathcal{Z}$, $\ell(\cdot, z)$ is $\cstScvx$-strongly convex with respect to $\|\cdot\|_H$.
\end{assumption}
\Cref{lemma:contract_pgd_Mtheta} in \Cref{sec:rel_coercivity} shows that under \Cref{asm:Scvx}, the PGD update is contractive in $\| \cdot \|_{M_\theta}$ where $M_\theta \;\coloneq\; H^{\frac12(1-\theta)} P^{-\theta} H^{\frac12(1-\theta)}$ for $\theta\in [0,1]$ interpolating between $H$ and $P^{-1}$.

By combining the stability result in \Cref{lemma:PSGD_gen_stab}, the contractivity result in \Cref{lemma:contract_pgd_Mtheta}, and the optimisation rates for PSGD (see \Cref{sec:opt_bounds}), we can derive explicit generalisation bounds, see \Cref{lemma:risk_bound_Mtheta}. We consider two natural geometries for measuring convergence: the geometry induced by $P^{-1}$ ($\theta = 1$) and the geometry induced by the Hessian $H$ ($\theta = 0$).
\begin{proposition}[Risk bounds in geometry defined by $P^{-1}$)]\proplabel{cor:risk_bound_Pinv}
  Let $P\succ0$, suppose that Assumptions~\cref{asm:sm-sc}, \ref{asm:cov} and \ref{asm:Scvx} hold, and that $ n \;\ge\; 4 \cond_\ell \cond(PH)$. Let $ r\;\coloneq\; 2\,\lmin(PH)\, \frac{\cstSmooth\,\cstScvx}{\cstScvx+\cstSmooth}$.
  Let $\Var_z[\nabla \ell(x, z_{i_t})]\preceq \Sigma_S$ for all $x$.
  
If the stepsizes are chosen as $ \eta_t \;\coloneq\; \min\{ 1/(\cstSmooth\,\lmax(PH)), 8/(r(t+1)) \}$, then, for all $t$ sufficiently large, the population excess risk satisfies
\[
\bE_{S,\cA}[\delta f(x_t)] \;\le\; \frac{64}{r}\left( \frac{\bE_S[\tr(PHP\Sigma_S)]}{t+1} \;+\; \tr(P\Sigma) \left( \frac{1}{\sqrt{n(t+1)}}+\frac{1}{n} \right) \right)~.
\]
\end{proposition}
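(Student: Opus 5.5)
We will take $M = M_1 = P^{-1}$ (the case $\theta=1$) and chain three ingredients: \Cref{lemma:risk_decomposition} for the excess risk decomposition, \Cref{lemma:PSGD_gen_stab} for the parameter stability term, and the PSGD optimisation rate from \Cref{sec:opt_bounds} for $\bE[\delta f_S(x_t)]$. With $M=P^{-1}$ the relevant traces simplify: $\tr(M^{-1}\Sigma)=\tr(P\Sigma)$ and $\tr(PMP\Sigma)=\tr(P\Sigma)$. Also $\lmax(HM^{-1})=\lmax(PH)$, and the sample-size condition in \Cref{lemma:PSGD_gen_stab} reads $n\ge 8\cstSmooth\lmax(PH)/r$.

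\textbf{Step 1: contractivity and the sample-size condition.} We invoke \Cref{lemma:contract_pgd_Mtheta} with $\theta=1$, which rests on the co-coercivity in \Cref{lemma:cocoercivity} applied per sample. For $\eta\le \bar\eta = 1/(\cstSmooth\lmax(PH))$, it gives \cref{eq:contraction} in $\|\cdot\|_{P^{-1}}$ with $r = 2\lmin(PH)\cstSmooth\cstScvx/(\cstScvx+\cstSmooth)$. Note that in the $P^{-1}$ geometry the cross term is $\langle \nabla\ell(x)-\nabla\ell(y), x-y\rangle$, so ordinary co-coercivity suffices and no alignment constant appears. We then check that $8\cstSmooth\lmax(PH)/r = 4\lmax(PH)(\cstScvx+\cstSmooth)/(\lmin(PH)\cstScvx)\le 8\cond(PH)\cond_\ell$. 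This matches the hypothesis $n\ge 4\cond_\ell\cond(PH)$ up to the constant, which we absorb. We also verify $\sup\eta_s\le \bar\eta\wedge r^{-1}$, using $r\le \cstSmooth\lmax(PH)$.

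\textbf{Step 2: stability with the $1/t$ schedule.} For $\eta_t = 8/(r(t+1))$ once $t$ is large, we bound $\bar\eta_t=\sum_{s<t}e^{-r(T_t-T_s)/4}\eta_s^2$. Since $T_t-T_s\approx (8/r)\log((t+1)/(s+1))$, the weight is $e^{-r(T_t-T_s)/4}\approx((s+1)/(t+1))^2$, so $\bar\eta_t\lesssim \frac{64}{r^2(t+1)^2}\sum_s 1 \lesssim \frac{64}{r^2(t+1)}$. The initial constant-step phase is killed exponentially, which is why the statement says ``$t$ sufficiently large''. \Cref{lemma:PSGD_gen_stab} then gives
\[
\stabP^2\lesssim \tr(P\Sigma)\Big(\frac{1}{r^2 n(t+1)}+\frac{1}{r^2n^2}\Big).
\]
Hence $\stabP\lesssim \tr(P\Sigma)^{1/2}\,r^{-1}\big((n(t+1))^{-1/2}+n^{-1}\big)$.

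\textbf{Step 3: assemble.} Substituting into \Cref{lemma:risk_decomposition}, the term $\tfrac12\tr(P\Sigma)^{1/2}\stabP$ yields $\frac{\tr(P\Sigma)}{r}\big(\frac{1}{\sqrt{n(t+1)}}+\frac1n\big)$. The quadratic term $4\cstSmooth\lmax(PH)\stabP^2$ is of order $\frac{\cstSmooth\lmax(PH)}{r}\cdot\frac{\tr(P\Sigma)}{r}\big(\frac{1}{n(t+1)}+\frac1{n^2}\big)$. Using $\cstSmooth\lmax(PH)/r\lesssim \cond_\ell\cond(PH)\le n$, this is dominated by the same two terms. Finally, $2\bE[\delta f_S(x_t)]$ is bounded via the PSGD optimisation rate under step size $\sim 1/(r t)$, applied conditionally on $S$ with per-sample noise covariance $\Sigma_S$ and measured in the $P^{-1}$ geometry. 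The standard strongly convex SGD recursion, with $f_S(x)-f_S(x^*_S)\le \frac{\cstSmooth}{2}\|x-x_S^*\|_H^2$ and $\bE\|P(\nabla\ell-\nabla f_S)\|_{H}^2\le\tr(PHP\Sigma_S)$, gives $\lesssim \tr(PHP\Sigma_S)/(r(t+1))$. Collecting constants gives the factor $64/r$.

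\textbf{Main obstacle.} The delicate point is Step 1 together with the optimisation bound: it must be confirmed that the $\theta=1$ contraction indeed yields exactly the stated $r$ without an alignment constant, and that the optimisation rate has the form $\tr(PHP\Sigma_S)/(r t)$ rather than carrying an extra condition-number factor. Otherwise the bookkeeping of constants, and the control of the $\stabP^2$ term via $n\ge 4\cond_\ell\cond(PH)$, is routine.
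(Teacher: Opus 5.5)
Your plan is the paper's proof. \Cref{cor:risk_bound_Pinv} is the $\theta=1$ case of \Cref{lemma:risk_bound_Mtheta}, where $M_1=P^{-1}$ and $\cstSA^{(1)}=1$. So $r$, the step-size cap $1/(\cstSmooth\lmax(PH))$ and both traces reduce exactly as you compute, and your Steps 1--2 (contractivity, and $\bar\eta_t\lesssim 64/(r^2(t+1))$ from the capped-harmonic bound with exponent $rc/4=2$) are the same as the paper's.

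You are more careful than the paper on two points it passes over silently. First, with $M=P^{-1}$, \Cref{lemma:PSGD_gen_stab} actually needs $n\ge 4\cond(PH)(1+\cond_\ell)$, up to twice the assumed $4\cond_\ell\cond(PH)$. This is a small mismatch in the hypothesis, not something you can absorb into constants. Second, there is the term $4\cstSmooth\lmax(PH)\stabP^2$, which the paper simply drops. It becomes $\lesssim \tr(P\Sigma)\,r^{-1}\bigl(1/(t+1)+1/n\bigr)$, so it is dominated by the $1/\sqrt{n(t+1)}$ term only once $t\gtrsim n$. ``Sufficiently large'' allows this.

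The step that does not work as you describe it is the optimisation term. A distance recursion in $\|\cdot\|_{P^{-1}}$ has noise $\eta_t^2\bE\|P(\nabla\ell-\nabla f_S)\|_{P^{-1}}^2\le\eta_t^2\tr(P\Sigma_S)$. Converting back to $f_S$ then costs a factor $\frac{\cstSmooth}{2}\lmax(PH)$. The result is of order $\frac{\cstSmooth\lmax(PH)}{r}\cdot\frac{\tr(P\Sigma_S)}{r(t+1)}$, with $\tr(P\Sigma_S)$ rather than $\tr(PHP\Sigma_S)$. A distance recursion in $\|\cdot\|_H$ would give $\tr(PHP\Sigma_S)$, but it requires the spectral alignment that this proposition does not assume.

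The paper instead uses the function-value recursion of \Cref{lemma:PSGD_optim}:
\begin{itemize}
\item $\cstSmooth$-smoothness in $\|\cdot\|_H$ gives the noise term $\frac{\cstSmooth}{2}\eta_t^2\tr(PHP\Sigma_S)$;
\item the PL inequality implied by strong convexity gives the contraction factor $1-\eta_t\cstScvx\lmin(PH)$;
\item since $\cstScvx\lmin(PH)\ge r/2\ge r/4$, the weighted sum is bounded by the same $\bar\eta_t$, and the step-size cap is exactly the one in the proposition.
\end{itemize}

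This also answers your ``main obstacle''. The route yields $2\bE[\delta f_S(x_t)]\le\frac{64}{r}\cdot\frac{\cstSmooth}{r}\cdot\frac{\bE_S[\tr(PHP\Sigma_S)]}{t+1}$ plus terms decaying faster than $1/t$. That is an extra factor $\cstSmooth/r=(1+\cond_\ell)/(2\lmin(PH))$, which the paper's proof of \Cref{lemma:risk_bound_Mtheta} absorbs into its ``numerical constants''. With this substitution your argument is as complete as the paper's, but neither literally gives the constant $64/r$ on that term.
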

We get sublinear $\bigO(1/t + 1/\sqrt{t\, n}+ 1/n)$ convergence rate which matches the single pass (when $n= t$) result \citep{Rakhlin2012Making}, however with the precise rates depending on the interplay of the curvature, variance of noise, and how well the preconditioning adapts to these.
Note that $\E[\Sigma_S]$ can be bounded by $\Sigma$ with additive bias of order $L \cstSmooth \varepsilon_{\text{pstab}}$ assuming that $\ell$ is $L$-Lipschitz, see \Cref{lemma:sigma-s-to-sigma}.

This means that any $P\succ 0$, the excess risk of the last iterate of PSGD converges to zero asymptotically, although the rate in the upper bound can become arbitrarily loose with large $\cond(PH)$, even if the variance is bounded. \Cref{coro:optimalP} shows that minimizing the upper bound in terms of $P\succ 0$ yields that $P = H^{-1}$ minimizes the expected risk and gives the optimal Takeuchi Information Critertion for the noisy strongly convex smooth model (\cref{thm:lower_bound_scvx}).

\begin{remark}[Approximate NGD under misspecification]
Due to the connection to the natural gradient descent discussed in \Cref{sec:related}, this result has implications for NGD under misspecification. Let $\ell(x,z)\coloneq -\log p(z|x)$ be the negative log-likelihood of the distribution of $z\sim P_{x}$ and $\ell(\cdot,z)$ is $\cstScvx$ strongly convex, $\cstSmooth$ smooth w.r.t.\ $\| \cdot \|_H$. If the data distribution differs from the model family (misspecification), we have $\Sigma \neq H$. Our bounds show that choosing $P = H^{-1} \approx (F_{\rP_x}(x))^{-1}$ achieves a generalisation bound that is optimal even under this misspecification.
\end{remark}
In the case where $P$ and $H^{-1}$ are spectrally aligned, we can get more precise bounds through an analysis in $\|\cdot\|_H$-norm. 
\begin{proposition}[Risk bounds in geometry defined by $H$] \proplabel{cor:risk_bound_H}
  Suppose that Assumptions~\Cref{asm:sm-sc}, \ref{asm:cov} and \ref{asm:Scvx} hold, and that $ n \;\ge\; \frac{8\,\cstSmooth}{\cstCon} \sqrt{\lmax(HPHP)}$. Assume further that $\cond(PH)\le \rho_\ell^2$ and let $ r\;\coloneq\; 2\,\lmin(PH)\,\cstSA (\cstSmooth\,\cstScvx)/(\cstScvx+\cstSmooth)$.

  If the stepsizes are chosen as $\eta_t \;\coloneq\; \min\{ \cstSA / (\cstSmooth\,\lmax(PH)\,\cond(PH)), \; 8/(r(t+1)) \}$, then, for all $t$ sufficiently large, the population excess risk satisfies
\[
\bE_{S,\cA}[\delta f(x_t)] \;\le\; \frac{64}{r}\left( \frac{\bE_S[\tr(PHP\Sigma_S)]}{t+1} \;+\; \sqrt{\tr(H^{-1}\Sigma)\,\tr(PHP\Sigma)} \left( \frac{1}{\sqrt{n(t+1)}}+\frac{1}{n} \right) \right)~.
\]
\end{proposition}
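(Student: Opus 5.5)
We instantiate the general machinery with $M = M_0 = H$, i.e.\ $\theta = 0$. The argument chains \Cref{lemma:contract_pgd_Mtheta}, \Cref{lemma:PSGD_gen_stab} and \Cref{lemma:risk_decomposition}, and adds the optimisation rate for PSGD from \Cref{sec:opt_bounds}.

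\textbf{Step 1: contraction in $\|\cdot\|_H$.} Expand
\[
\|x-y-\eta P(\nabla\ell(x,z)-\nabla\ell(y,z))\|_H^2
= \|x-y\|_H^2 - 2\eta\langle \nabla\ell(x,z)-\nabla\ell(y,z), HP(x-y)\rangle + \eta^2 \|P(\nabla\ell(x,z)-\nabla\ell(y,z))\|_H^2 .
\]
The cross term is bounded below by \Cref{lemma:cocoercivity}, which applies because $\cond(PH)\le\rho_\ell^2$; this gives both the $\alpha\beta\|x-y\|_H^2$ term and the term $\|g\|_{H^{-1}}^2$, where $g=\nabla\ell(x,z)-\nabla\ell(y,z)$. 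The quadratic term satisfies $\|Pg\|_H^2 \le \lmax(H^{1/2}PHPH^{1/2})\,\|g\|_{H^{-1}}^2 \le \lmax(PH)^2\,\|g\|_{H^{-1}}^2$. Under the stated cap $\eta \le C_{\ell,P}/(\beta\lmax(PH)\cond(PH))$, the gradient-norm part of the co-coercivity bound absorbs this term; this is where the specific cap is needed, and I would verify it by comparing $\eta\lmax(PH)^2$ with $\lmin(PH)C_{\ell,P}/(\alpha+\beta)$. What remains is $(1-\eta r)\|x-y\|_H^2$ with $r = 2\lmin(PH)C_{\ell,P}\,\alpha\beta/(\alpha+\beta)$. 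I expect this step to be essentially the $\theta=0$ case of \Cref{lemma:contract_pgd_Mtheta}, so it can be cited.

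\textbf{Step 2: parameter stability.} Apply \Cref{lemma:PSGD_gen_stab} with $M=H$. The sample-size requirement becomes $n\ge 8\beta\sqrt{\lmax(HPHP)}\cdot\sqrt{\lmax(H^{-1}H)}/r$, which is exactly the assumed condition. The lemma yields
\[
\stabP^2 \le 64\Big(\frac{\bar\eta_t}{8n} + \frac{1}{n^2 r^2}\Big)\tr(PHP\Sigma).
\]
With $\eta_t = 8/(r(t+1))$ for large $t$, the memory term satisfies $\bar\eta_t = \bigO(1/(r^2 t))$, so
\[
\stabP \lesssim \frac{1}{r}\sqrt{\tr(PHP\Sigma)}\Big(\frac{1}{\sqrt{n(t+1)}} + \frac1n\Big).
\]
The step-size cap before the $1/t$ regime only contributes exponentially decaying terms, which disappear into the phrase ``for $t$ sufficiently large''.

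\textbf{Step 3: risk decomposition.} \Cref{lemma:risk_decomposition} with $M=H$ gives the bound $2\E[\delta f_S(x_t)] + \tfrac12\tr(H^{-1}\Sigma)^{1/2}\stabP + 4\beta\,\stabP^2$, since $\lmax(HH^{-1})=1$. The linear-in-$\stabP$ term produces the advertised $\sqrt{\tr(H^{-1}\Sigma)\tr(PHP\Sigma)}$ factor. The quadratic term has order $\beta\tr(PHP\Sigma)/(r^2 n)\cdot(\cdot)$. Using the sample-size condition $n \gtrsim \beta\sqrt{\lmax(HPHP)}/r$, together with the Cauchy--Schwarz-type inequality $\tr(PHP\Sigma)\le \lmax(HPHP)^{1/2}\sqrt{\tr(PHP\Sigma)\tr(H^{-1}\Sigma)}$, I expect to absorb it into the linear term up to constants. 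This inequality holds because $\tr(PHP\Sigma)=\tr(AH^{-1/2}\Sigma H^{-1/2})$ with $A=H^{1/2}PHPH^{1/2}$, and $\tr(AB)\le\sqrt{\lmax(A)}\sqrt{\tr(AB)\tr(B)}$ for PSD $A,B$. For the optimisation term I would invoke the PSGD last-iterate bound from \Cref{sec:opt_bounds} for strongly convex objectives in $\|\cdot\|_H$ with step $8/(r(t+1))$. That bound gives $\E[\delta f_S(x_t)] \lesssim \E_S[\tr(PHP\Sigma_S)]/(r(t+1))$, since the relevant variance in $H$-geometry is $\E\|P\xi\|_H^2\le\tr(PHP\Sigma_S)$.

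\textbf{Main obstacle.} I expect the hardest part to be the constant bookkeeping. First, the step-size cap has to make both the contraction of Step 1 and the optimisation analysis valid simultaneously. Second, the $\bar\eta_t$ sum with $\eta_s\propto 1/(s+1)$ must come out as $\bigO(1/(r^2 t))$ when weighted by $e^{-r(T_t-T_s)/4}$; with $rT_t\approx 8\log t$ this is a polynomial-weight sum of order $t^{-2}\sum_s (s+1)^2 (s+1)^{-2}$, which is fine. The overall constant $64/r$ then follows from collecting these pieces.
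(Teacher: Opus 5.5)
Your plan follows the paper's proof. The proposition is the $\theta=0$ case of \Cref{lemma:risk_bound_Mtheta} (with $M_0=H$ and $\cstSA^{(0)}=\cstSA$), and the paper proves it with the same chain you use: \Cref{lemma:contract_pgd_Mtheta}, then \Cref{lemma:PSGD_gen_stab} and \Cref{lemma:risk_decomposition} with $M=H$, then \Cref{lemma:PSGD_optim} and \Cref{lemma:capped_harmonic}. Your handling of $4\cstSmooth\stabP^2$ is more explicit than the paper's, which only says it ``absorbs numerical constants''. Using $\tr(PHP\Sigma)\le\lmax(PH)\sqrt{\tr(PHP\Sigma)\tr(H^{-1}\Sigma)}$ and $n\ge 8\cstSmooth\lmax(PH)/\cstCon$, its $1/n^2$ part is at most $\frac{32}{n\cstCon}\sqrt{\tr(H^{-1}\Sigma)\tr(PHP\Sigma)}$. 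Its $\bar\eta_t/n$ part, however, is of order $\frac{1}{\cstCon t}\sqrt{\tr(H^{-1}\Sigma)\tr(PHP\Sigma)}$. That is not of the form of the linear term and only becomes negligible for large $t$.

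The step that fails as written is the optimisation term, and the paper's \eqref{eq:master-bound} makes the same slip. \Cref{lemma:PSGD_optim} gives $2\bE[\delta f_S(x_t)]\le \cdots+\cstSmooth\,\bE_S[\tr(PHP\Sigma_S)]\,\bar\eta_t\approx \frac{64\cstSmooth}{\cstCon^2}\cdot\frac{\bE_S[\tr(PHP\Sigma_S)]}{t+1}$. This exceeds your claimed $\frac{1}{\cstCon(t+1)}$ scaling by the factor $\cstSmooth/\cstCon=(1+\cond_\ell)/(2\lmin(PH)\cstSA)\ge 1/\lmin(PH)$, which is not a numerical constant.

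This is not looseness in the lemma. Take $\ell(x,z)=\frac{\cstSmooth}{2}\|x-z\|^2$, $H=I$, $P=\diag(1,p)$, with $\cstScvx$ slightly below $\cstSmooth$ (so $\cstSA\approx 1$) and negligible noise in the first coordinate. Then $\cstCon\approx p\cstSmooth$, and the optimisation error behaves like $\frac{32(\Sigma_S)_{22}}{15\cstSmooth t}$. Your bound gives only about $\frac{64 p(\Sigma_S)_{22}}{\cstSmooth t}$, which is smaller once $p<1/30$.

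So ``collecting the pieces'' does not give the constant $64/\cstCon$ on the $1/(t+1)$ term. The statement is rescued only by the clause ``for all $t$ sufficiently large''. The $1/n$ contributions total at most $\frac{4+32}{n\cstCon}\sqrt{\tr(H^{-1}\Sigma)\tr(PHP\Sigma)}$, strictly below the allowed $\frac{64}{n\cstCon}\sqrt{\tr(H^{-1}\Sigma)\tr(PHP\Sigma)}$. Every term that vanishes as $t\to\infty$ can therefore be absorbed into that slack once $t$ is large enough relative to $n$. This covers the $\cstSmooth/(\cstCon^2 t)$ optimisation term, the $\bar\eta_t/n$ part of $4\cstSmooth\stabP^2$, the burn-in terms, and the bias term (the last two decay polynomially in $t$, not exponentially). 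You need to make this absorption explicit.

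A minor point on Step 1: the cross term of $\|u-\eta Pg\|_H^2$ is $2\eta\,u^\top HPg$, not $2\eta\,g^\top HPu$. These differ when $P$ and $H$ do not commute. The proof of \Cref{lemma:cocoercivity} bounds the former, so your citation still works.
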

Note, that since $\lmax(P) = \lmax(H) = 1$, we have that $\lmax(PH)\leq 1$, and thus $\tr(PHP\Sigma)\leq \tr(P\Sigma)$ making the rate in \Cref{cor:risk_bound_H} less or equal compared the one in \Cref{cor:risk_bound_Pinv}.
%%% Local Variables:
%%% mode: latex
%%% TeX-master: "arxiv_paper"
%%% End:
\subsection{Risk bounds for non-convex losses under PL-property} \label{subsec:pl_stab}

While the above analysis captures the generalisation properties of PSGD along the trajectory, it fails to capture what occurs at convergence. This can be seen due to the fact that, irrespective of the choice of preconditioner, the PSGD iterates should converge to the same empirical risk minimiser, and thus, exhibit the same generalisation properties. The inability of this type of stability analysis to capture generalisation at convergence is known \citep{Hardt2016Train}. For that reason, we turn instead to a black-box analysis of any algorithm  $\mathcal{A}$ that produces parameters that approximately minimise $f_S$.

Here we also consider a more general setting than the previous analysis under strong convexity.
In addition to $\beta$-smoothness w.r.t.\ $\|\cdot\|_H$ we will assume that empirical risk satisfies the following \ac{PL} condition~\citep{karimi2016linear}:
There exists $\mu > 0$ a minimizer $x^\ast$ of $f_S$ such that for all $S, x$,
\begin{align}
  \label{eq:PL-def}
  \frac12 \|\nabla f_S(x)\|_H^2 \geq \mu (f_S(x) - f_S(x^\ast))~.
\end{align}
Our analysis is inspired by that of \cite{Charles2018Stability} and we make the following assumption which is identical to their Assumption 1 and the stability analysis that follows is similar to their proof technique of Theorem 3(iii).
\begin{assumption}\label{ass:erm_proj}
  The empirical risk minimizers for $f_S$ and $f_{\Si}$, i.e., $\hat{x}^\ast, \hat{y}^\ast$, satisfy $\Proj_S(\hat{y}^\ast) = \hat{x}^\ast$, where $\Proj_S$ is the projection on the set of empirical risk minimizers of $f_S$. 
\end{assumption}

\begin{proposition}[Excess risk bounds for PL-losses]\label{lemma:risk_PL}
Suppose that for each $z$, $f_S$ is $\cstSmooth$-smooth and satisfies $\cstPL$-PL property w.r.t.\ $\|\cdot\|_H$ and suppose that Assumption \ref{ass:erm_proj} holds. Then whenever $n \geq 32 \beta \lmax(H \Sigma^{-1})$, we have the excess risk bound,
\begin{equation*}
    \bE_{\cA, S}[\delta f(x_t(S))] \leq  \frac{2\beta}{\cstPL} \bE[\delta f_S(x_t(S))] + \frac{2\tr(H^{-1} \Sigma)}{\cstPL n} + 64 \cstSmooth \frac{\tr(H^{-1} \Sigma)}{\cstPL^2 n^2}.
\end{equation*}
\end{proposition}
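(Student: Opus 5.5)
The plan is to follow the black-box route of \citet{Charles2018Stability}. Write $x = x_t(S)$, $\hat x^\ast = \Proj_S(x)$, and let $\hat y^\ast$ be the ERM for $f_{\Si}$ with $\Proj_S(\hat y^\ast)=\hat x^\ast$ (Assumption~\ref{ass:erm_proj}). Decompose
\[
\delta f(x) = \big(f(x)-f(\hat x^\ast)\big) + \big(f(\hat x^\ast)-f_S(\hat x^\ast)\big) + \big(f_S(\hat x^\ast) - \inf f\big).
\]
The last term has nonpositive expectation, since $f_S(\hat x^\ast)\le f_S(\tilde x)$ and $\E f_S(\tilde x)=f(\tilde x)$.

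For the first term, use smoothness together with the quadratic growth implied by PL. PL in $\|\cdot\|_H$ gives, via the standard Karimi et al.\ argument, $f_S(x)-f_S(\hat x^\ast)\ge \frac{\mu}{2}\|x-\hat x^\ast\|_{H^{-1}}^2$ (up to the appropriate dual-norm bookkeeping, which should be checked carefully). Then $f(x)-f(\hat x^\ast)\le \langle\nabla f(\hat x^\ast),x-\hat x^\ast\rangle+\frac\beta2\|x-\hat x^\ast\|_H^2$. Since $\nabla f_S(\hat x^\ast)=0$, the linear term is $\langle \nabla f(\hat x^\ast)-\nabla f_S(\hat x^\ast),x-\hat x^\ast\rangle$. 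Apply Young's inequality to it, and then convert distances to optimisation error by quadratic growth. This produces $\frac{2\beta}{\mu}\E\delta f_S(x)$ plus a term $\frac{1}{\mu}\E\|\nabla f(\hat x^\ast)-\nabla f_S(\hat x^\ast)\|^2$ in the dual norm.

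The main obstacle is the generalisation gap of the ERM, $\E[f(\hat x^\ast)-f_S(\hat x^\ast)]$, together with the gradient-deviation term just mentioned. The difficulty is that $\hat x^\ast$ depends on $S$, so $\Var$ bounds cannot be applied directly. I would handle it by replace-one stability, using \eqref{eq:stability} with $\hat x^{\ast(i)}=\Proj_S(\hat y^\ast)$-type coupling.

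First, $f_{\Si}(\hat x^\ast)-f_{\Si}(\hat y^\ast)=\frac1n(\ell(\hat x^\ast,z')-\ell(\hat x^\ast,z_i))-(\text{same at }\hat y^\ast)$. Combined with quadratic growth of $f_S$ at $\hat x^\ast=\Proj_S(\hat y^\ast)$, this gives $\frac\mu2\|\hat y^\ast-\hat x^\ast\|^2\le \frac1n\langle \nabla\ell(\cdot,z')-\nabla\ell(\cdot,z_i),\hat y^\ast-\hat x^\ast\rangle+\frac{\beta}{n}\|\hat y^\ast-\hat x^\ast\|_H^2$. Next, taking expectations, centring the gradients at $\nabla f$, and using Assumption~\ref{asm:cov}, I expect $\E\|\hat y^\ast-\hat x^\ast\|^2\lesssim \tr(H^{-1}\Sigma)/(\mu^2n^2)$ plus a self-referential term. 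That term is controlled by the $\beta\|\cdot\|^2/n$ piece and by correlation corrections, which are absorbed once $n\ge 32\beta\lmax(H\Sigma^{-1})$.

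Plugging this into the expansion of \Cref{lemma:risk_decomposition} style (expected loss difference $\le$ gradient covariance inner product plus $\beta\|\cdot\|_H^2$) should give the $\frac{2\tr(H^{-1}\Sigma)}{\mu n}$ term from Cauchy--Schwarz and the $64\beta\tr(H^{-1}\Sigma)/(\mu^2n^2)$ term from the quadratic remainder. The gradient-deviation term from the first step should be bounded similarly, contributing to the same $\tr(H^{-1}\Sigma)/(\mu n)$ order; the constants must be tuned in the Young steps to hit exactly $2\beta/\mu$ and $2/(\mu n)$.
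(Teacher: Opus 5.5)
Your skeleton matches the paper's proof. You bound $\delta f(x_t)$ by $\E[f(x_t)-f(\hat x^\ast)]$ plus the excess risk of the ERM $\hat x^\ast=\mathrm{Proj}_S(x_t)$. You get replace-one stability of the ERM from PL and Assumption~\ref{ass:erm_proj}, and feed it into \Cref{lemma:risk_decomposition} with $M=H$ and zero optimisation error.

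For the stability step the paper does not compare function values. It uses the PL error bound $\|\hat x^\ast-\hat y^\ast\|_H\le \mu^{-1}\|\nabla f_S(\hat y^\ast)\|_{H^{-1}}$ together with $\nabla f_{S^{(i)}}(\hat y^\ast)=0$. This gives $(\mu n)^{-1}\|\nabla\ell(\hat y^\ast,z_i)-\nabla\ell(\hat y^\ast,z')\|_{H^{-1}}$ directly, with no $\beta n^{-1}\|\cdot\|_H^2$ remainders and better constants.

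Your function-value route can also work, but the dependence issue you raise is resolved only by a choice you leave implicit. Each sample's gradient must be evaluated at the minimiser that does not depend on it: $\nabla\ell(\cdot,z_i)$ at $\hat y^\ast$ and $\nabla\ell(\cdot,z')$ at $\hat x^\ast$. Choose the expansion points in the two-sided smoothness bounds accordingly. Then the centred terms have second moment at most $\tr(H^{-1}\Sigma)$ by independence. The only correction is $\|\nabla f(\hat x^\ast)-\nabla f(\hat y^\ast)\|_{H^{-1}}\le\beta\|\hat x^\ast-\hat y^\ast\|_H$.

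Absorbing that correction needs $n\gtrsim\beta/\mu$; the paper's proof uses $n\ge 4\beta/\mu$. The stated hypothesis $n\ge 32\beta\lmax(H\Sigma^{-1})$ contains no $\mu$ and cannot play this role, so your claim that the self-referential term is absorbed under it is unfounded. Also, quadratic growth must be in $\|\cdot\|_H$ (from PL with $\|\nabla f_S\|_{H^{-1}}$, as the proof uses), not $\|\cdot\|_{H^{-1}}$. Otherwise it does not match the smoothness remainder $\frac\beta2\|\cdot\|_H^2$.

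The genuine gap is the leftover $\E\|\nabla f(\hat x^\ast)\|_{H^{-1}}^2$ in your first step; since $\nabla f_S(\hat x^\ast)=0$, this is just the population gradient at the ERM. ``Bounded similarly'' does not work: it is a gradient deviation at a data-dependent point, not a replace-one loss difference, and your stability bound does not control it.

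The missing idea is the smoothness inequality $\|\nabla f(x)\|_{H^{-1}}^2\le 2\beta\,(f(x)-\inf f)$. Weight Young's inequality as $\langle\nabla f(\hat x^\ast),x_t-\hat x^\ast\rangle\le\frac{1}{2\beta}\|\nabla f(\hat x^\ast)\|_{H^{-1}}^2+\frac{\beta}{2}\|x_t-\hat x^\ast\|_H^2$. This is the only weighting that yields $\frac{2\beta}{\mu}$ in front of $\delta f_S$, and it leaves $\frac{1}{2\beta}$, not $\frac1\mu$, on the gradient term. The leftover then becomes exactly $\E[\delta f(\hat x^\ast)]$, which you already bound. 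Quadratic growth gives $\beta\|x_t-\hat x^\ast\|_H^2\le\frac{2\beta}{\mu}\delta f_S(x_t)$. With your $\frac1\mu$ weighting, the same trick would cost an extra factor of order $\beta/\mu$.

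Consequently the ERM excess risk enters twice, and the chain yields $2\,\E[\delta f(\hat x^\ast)]$. Within this argument no tuning of Young's inequality recovers exactly the stated $2\tr(H^{-1}\Sigma)/(\mu n)$ and $64$; the paper's own chain produces twice these constants. Aim for the same order rather than the literal constants.
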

Together, these results suggest that the generalisation dynamics are governed by a delicate trade-off mediated by the preconditioner. The choice of $P$ dictates the learning trajectory in two distinct ways:
\begin{enumerate}
    \item \textbf{Optimisation Rate:} $P$ determines the convergence speed of the empirical error $\E[\delta f_S(x_t)]$, primarily through the condition number $\cond(PH)$.
    \item \textbf{Effective Dimension:} $P$ shapes the effective noise geometry, scaling the stability error by $\tr(PHP\Sigma)$ or $\tr(P\Sigma)$ in the worst case.
\end{enumerate}
Furthermore, once the algorithm converges, the excess risk it produces becomes independent of the choice of preconditioner. The optimal choice $P \approx H^{-1}$ simultaneously maximises the convergence rate and minimises the effective dimension, acting as a benefit to both optimisation and generalisation.

%%% Local Variables:
%%% mode: latex
%%% TeX-master: "colt_paper"
%%% End:
\section{Lower bounds on the expected risk\label{subsec:lower_bounds}}
\begin{theorem}[Lower bound]\label{thm:lower_bound_scvx}
Let $\ell(x,z):\cX\times \cZ\rightarrow \bR$ be strongly convex w.r.t.\ $\| \cdot \|_H$ norm in the parameters $x$ and $\cP$ is a family of distributions such that $\forall P\in\cP, x\in\cX$ we have $\bE_{z\sim P}[\nabla \ell (x,z)] = 0$ and $\Var_{z\sim P}(\nabla \ell (x,z)) = \Sigma$. Then we have that the expected excess risk of an estimator computed from $S \sim P$ is lower bounded as
\begin{align*}
\inf_{\hat{x}\in \cX} \sup_{P\in \cP} \bE_{S\sim P^n}[ \delta f(\hat{x}(S)) ] \geq \frac{0.14}{ n\cstScvx} \, \tr(H^{-1}\Sigma)~.
\end{align*}
\end{theorem}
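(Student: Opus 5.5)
We plan to prove the minimax lower bound by reduction to a Gaussian location estimation problem over a quadratic (hence $\cstScvx$-strongly convex w.r.t.\ $\|\cdot\|_H$) loss. The statement's hypotheses (zero mean gradient and covariance $\Sigma$) are satisfied only up to centring, so we read them as applying at the population minimiser. Concretely, for $\theta\in\bR^d$ we take
\[
\ell(x,z)=\tfrac{\cstScvx}{2}\|x\|_H^2-\inner{x}{z},\qquad z\sim \rP_\theta=\cN(\cstScvx H\theta,\Sigma).
\]
Then $\nabla\ell(x,z)=\cstScvx Hx-z$ has covariance exactly $\Sigma$, and its mean vanishes at the minimiser $x^\star_\theta=\theta$. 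The population risk is
\[
f_\theta(x)=\tfrac{\cstScvx}{2}\|x\|_H^2-\cstScvx\inner{x}{H\theta},
\]
so the excess risk is $\delta f_\theta(x)=\tfrac{\cstScvx}{2}\|x-\theta\|_H^2$. Any estimator $\hat x$ built from $S\sim\rP_\theta^n$ therefore has excess risk equal to half the $\cstScvx H$-weighted squared estimation error of $\theta$. The family $\cP=\{\rP_\theta:\|\theta\|_{H}\le R\}$ (or all of $\bR^d$) is the one over which we take the supremum.

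The core step is a lower bound on $\inf_{\hat x}\sup_\theta \bE\|\hat x-\theta\|_{H}^2$. We lower bound the sup by a Bayes risk with Gaussian prior $\theta\sim\cN(0,\tau^2 H^{-1}\Sigma H^{-1}/\cstScvx^2)$ and compute the posterior exactly. The sufficient statistic $\bar z=\frac1n\sum z_i\sim\cN(\cstScvx H\theta,\Sigma/n)$ is equivalent to observing $\hat\theta_0=(\cstScvx H)^{-1}\bar z\sim\cN(\theta,V/n)$ with $V=H^{-1}\Sigma H^{-1}/\cstScvx^2$. With prior covariance $\tau^2V$, the posterior covariance is $\frac{\tau^2}{1+n\tau^2}V$, and the Bayes risk under the $H$-norm is
\[
\frac{\tau^2}{1+n\tau^2}\tr(HV)=\frac{\tau^2}{1+n\tau^2}\cdot\frac{\tr(H^{-1}\Sigma)}{\cstScvx^2}.
\]
Multiplying by $\cstScvx/2$ gives $\frac{\tau^2}{2(1+n\tau^2)}\frac{\tr(H^{-1}\Sigma)}{\cstScvx}$, which tends to $\frac{\tr(H^{-1}\Sigma)}{2n\cstScvx}$ as $\tau\to\infty$. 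This exceeds $0.14/(n\cstScvx)$, and the same computation works for general (non-unit) normalisations.

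If the supremum must range over a bounded parameter set, e.g.\ $\|\theta\|_H\le R$, the Gaussian prior is not supported there. In that case we would instead use a truncated prior, or an Assouad/Fano argument over a hypercube $\theta_v=\delta\sum_j v_j u_j/\sqrt n$ aligned with the eigenvectors $u_j$ of $H^{-1/2}\Sigma H^{-1/2}$. The KL divergence between neighbouring vertices is $n\cdot\frac{\cstScvx^2}{2}\|\theta_v-\theta_{v'}\|^2_{H\Sigma^{-1}H}$. Choosing $\delta$ per coordinate to make this a constant and applying Assouad's lemma gives a per-coordinate loss of $c\lambda_j/(n\cstScvx)$, which sums to $c\,\tr(H^{-1}\Sigma)/(n\cstScvx)$. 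The constant $0.14$ suggests this route: with Assouad, $\tfrac{\delta^2}{8}(1-\TV)$-type factors and $\delta=1$ yield something like $\frac12\cdot\frac12(1-\sqrt{1/2})\approx 0.07$ per side. Tuning these constants is the main bookkeeping obstacle.

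The main obstacle we anticipate is matching the exact constant and the class $\cP$ as the theorem intends. The Bayes argument gives the cleaner constant $1/2$ when $\cP$ is unbounded. Because the stated $0.14$ is smaller, we will present the Gaussian Bayes argument as the primary proof and note that it implies the claimed bound, falling back to Assouad only if the parameter set must be bounded.
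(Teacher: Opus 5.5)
Your proof is correct, and it reaches the statement by a different route from the paper's.

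\textbf{What the two proofs share.} The paper uses the same Gaussian quadratic instance. Its loss $\ell(x,z)=\frac{\alpha}{2}\|x-z\|_H^2$ with $z\sim\cN(\mu_v,H^{-1}\Sigma H^{-1}/\alpha^2)$ becomes yours after the substitution $z\mapsto\alpha Hz$ and dropping a term independent of $x$. Its gradient noise is likewise centred only at the minimiser, so your reading of the hypothesis is the one the paper effectively uses.

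\textbf{What the paper does instead.} It lower bounds the minimax risk with Assouad's lemma, which is exactly your fallback. The hypercube of means is $\sum_j\delta_jv_jq_j$, aligned with the eigenvectors $q_j$ of $H^{-1/2}\Sigma H^{-1/2}$. It takes $\delta_j=\frac{4}{3\alpha}\sqrt{\lambda_j/n}$ and uses Pinsker's inequality to get $\TV\le 2/3$ between neighbouring vertices.

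\textbf{What your route buys.} The conjugate-prior computation gives the Bayes risk in closed form. Letting $\tau\to\infty$ yields $\tr(H^{-1}\Sigma)/(2n\alpha)$. This is the exact minimax value for the instance, since the ERM $(\alpha H)^{-1}\bar z$ has exactly this risk at every $\theta$. It therefore clears $0.14$ with a wide margin.

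The Assouad--Pinsker route is lossier in constants. Squared loss gives $A=2$ in the quoted lemma, so its prefactor is $\frac{1}{2A}=\frac14$. With that prefactor, the paper's choices yield $\frac{2}{27}\approx 0.074$, essentially your rough estimate. Reaching $0.14$ that way needs sharper bookkeeping, for example the exact Gaussian total variation in place of Pinsker together with re-tuned $\delta_j$.

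\textbf{What Assouad buys.} The setting can be weaker. It only needs $2^d$ hypotheses inside an $\|\cdot\|_H$-ball of radius $\frac{4}{3\alpha}\sqrt{\tr(H^{-1}\Sigma)/n}$, plus control of the KL divergence between neighbours. So it survives a bounded $\cX$ or a bounded, non-Gaussian family. Your argument instead relies on Gaussian conjugacy and on an unbounded parameter set, or else on a truncated-prior limiting argument.
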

Proof is in \Cref{subsec:proof_lower_bound_scvx}. Theorem \ref{thm:lower_bound_scvx} establishes that the fundamental statistical limit of the problem is governed by the interaction between the geometry of the loss ($H$) and the noise structure (via $\Sigma$).
\paragraph{Algorithmic lower bounds of single pass PSGD}
While previously we demonstrated that the choice of $P = H^{-1}$ results in non-asymptotically optimal rate, here we show that, even in our simple setting, choosing a bad preconditioner $P$ can increase the risk of the last iterate by a multiplicative factor $\cond(PH)$. These bounds can be compared with the lower bound in \citep[Theorem 2.1.13]{Nesterov2018Lectures}, but here we have preconditioning, decaying step-sizes, and lower bound single pass risk. The first result shows that the rate in \Cref{cor:risk_bound_H} is tight up to the constant $\cond(PH)$ for $t$ large enough.
\begin{lemma}[Algorithmic single-pass lower bound]\lemmalabel{lemma:algo_lower}
	The expected excess risk of the online PSGD with $\eta_t = \min\{ 1/\lmax(PH), 2/(\lmin(PH)t)\}$ on the quadratic noisy model is lower bounded as
	\begin{equation*}
          \bE_{z^1,\ldots, z^t} [\delta f(x^{t+1})] \geq \frac{\tr(PHP\Sigma)}{\lmax(PH)\lmin(PH)} \cdot \frac1{t}
          \qquad\quad \,\text{for\,\,} t \geq t_0 \coloneq \floor{2\kappa(PH)}~.
	\end{equation*}	
\end{lemma}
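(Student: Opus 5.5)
Throughout, I take the quadratic noisy model to be $\ell(x,z)=\frac12\|x-x^\star\|_H^2-\inner{z}{x-x^\star}$ with $\bE[z]=0$ and $\Cov(z)=\Sigma$. Then $f(x)=\frac12\|x-x^\star\|_H^2$ and $\delta f(x)=\frac12\|x-x^\star\|_H^2$.

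\emph{Step 1: the error recursion.} With $e_t=x_t-x^\star$, single-pass PSGD gives the linear recursion
\[
e_{t+1}=(I-\eta_t PH)e_t+\eta_t P z_t .
\]
I will change variables to $u_t=H^{1/2}e_t$ and set $A=H^{1/2}PH^{1/2}$. The matrix $A$ is symmetric positive definite, its eigenvalues equal those of $PH$, and $\delta f(x_t)=\frac12\|u_t\|_2^2$. The recursion becomes
\[
u_{t+1}=(I-\eta_tA)u_t+\eta_t H^{1/2}Pz_t .
\]
The samples are fresh and zero-mean, so the cross terms vanish in expectation. Writing $V_t=\bE[u_tu_t^\top]$, this gives
\[
V_{t+1}=(I-\eta_tA)V_t(I-\eta_tA)+\eta_t^2 N,\qquad N=H^{1/2}P\Sigma PH^{1/2},
\]
and $\tr N=\tr(PHP\Sigma)$.

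\emph{Step 2: project onto an eigenbasis of $A$.} Let $A=\sum_j\lambda_jv_jv_j^\top$ and define $a_t^{(j)}=v_j^\top V_tv_j$ and $n_j=v_j^\top Nv_j\ge 0$, so that $\sum_jn_j=\tr(PHP\Sigma)$. The recursion decouples exactly into scalar recursions
\[
a^{(j)}_{t+1}=(1-\eta_t\lambda_j)^2a^{(j)}_t+\eta_t^2n_j .
\]
Each $a^{(j)}_t$ is nonnegative and bounded below by the noise-only part, so I may drop the initial condition and take $a^{(j)}_{t_0}\ge 0$.

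\emph{Step 3: lower-bound each scalar recursion.} For $t\ge t_0$ the step is $\eta_t=2/(\lmin t)$; the switch between the two branches of the $\min$ occurs near $t\approx 2\kappa(PH)$, which is where $t_0$ comes from. I claim by induction that $a_t^{(j)}\ge c_j/t$ with $c_j=n_j/(\lmax\lmin)$.

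Substituting, the inductive step requires
\[
\frac{(1-2\lambda_j/(\lmin t))^2c_j}{t}+\frac{4n_j}{\lmin^2t^2}\ \ge\ \frac{c_j}{t+1}.
\]
Expanding to first order, the contraction loses $4\lambda_jc_j/(\lmin t^2)$ and the passage from $1/t$ to $1/(t+1)$ gains $c_j/t^2$, while the noise contributes $4n_j/(\lmin^2t^2)$. It therefore suffices that
\[
\frac{4n_j}{\lmin^2}\ \ge\ c_j\Big(\frac{4\lambda_j}{\lmin}-1\Big)+O\Big(\frac{c_j}{t}\Big).
\]
Since $\lambda_j\le\lmax$, the choice $c_j=n_j/(\lmax\lmin)$ gives a left-hand side of $4n_j/\lmin^2$ against a right-hand side of at most about $4n_j/\lmin^2-n_j/(\lmax\lmin)$. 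The slack of $n_j/(\lmax\lmin)$ is what absorbs the $O(c_j/t)$ remainder.

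\emph{Step 4: the base case, which I expect to be the main obstacle.} The induction must be started at $t_0$. The plan is to use the single noise injection at the step just before $t_0$: since $\eta_{t_0-1}\ge 1/\lmax$, that step alone gives
\[
a^{(j)}_{t_0}\ \ge\ \frac{n_j}{\lmax^2}\ \gtrsim\ \frac{c_j}{t_0},
\]
using $t_0\approx 2\lmax/\lmin$. Some care is needed at the boundary between the two step-size regimes, where $1-\eta_t\lambda_j$ may be negative. This is harmless because it enters only through its square, which keeps the products nonnegative. Making the constants close exactly is the fiddly part, and I would tune the induction constant if necessary.

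\emph{Step 5: sum over directions.} Summing over $j$,
\[
\bE[\delta f]=\tfrac12\sum_ja^{(j)}_t\ \ge\ \frac{\tr(PHP\Sigma)}{\lmax(PH)\lmin(PH)\,t}\cdot\text{const}.
\]
The factor $\tfrac12$ has to be absorbed by strengthening $c_j$ to $2n_j/(\lmax\lmin)$, which the slack in Step 3 permits after rechecking, since $8n_j/(\lmax\lmin)$ leaves room. The index shift between $x^{t+1}$ and $1/t$ works in our favour.
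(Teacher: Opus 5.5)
\textbf{The gap is in Step 5, and it cannot be repaired.} Your Steps 1--4 are sound with $c_j=n_j/(\lambda_{\max}\lambda_{\min})$. With $c=2/\lambda_{\min}$ you have $2c\lambda_j c_j\le c^2 n_j$, with equality at $\lambda_j=\lambda_{\max}$. Hence $a^{(j)}_t\ge c_j/t$ gives $a^{(j)}_{t+1}\ge c_j/t+(c^2n_j-2c\lambda_jc_j)/t^2\ge c_j/t$. The base case $a^{(j)}_{\lceil 2\kappa(PH)\rceil}\ge n_j/\lambda_{\max}^2$ comes from the last step of size $1/\lambda_{\max}$.

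Because of the $\tfrac12$ in $\delta f=\tfrac12\|u\|_2^2$, summing over $j$ only gives $\bE[\delta f(x^{t+1})]\ge \tr(PHP\Sigma)/(2\lambda_{\max}\lambda_{\min}t)$ for $t\ge t_0$. Your proposed fix $c_j=2n_j/(\lambda_{\max}\lambda_{\min})$ fails your own Step 3 test. At $\lambda_j=\lambda_{\max}$ it requires $4n_j/\lambda_{\min}^2\ge 8n_j/\lambda_{\min}^2-2n_j/(\lambda_{\max}\lambda_{\min})$, i.e.\ $\lambda_{\min}\ge 2\lambda_{\max}$, which never holds. The index shift buys only a factor $(t+1)/t$.

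Retuning cannot help, because in your normalization the displayed inequality is false. Your model is the $\alpha=1$ case of the paper's quadratic model, which is the normalization the stated step sizes correspond to. For $\eta_t=c/t$ your scalar recursion satisfies $t\,a^{(j)}_t\to c^2n_j/(2c\lambda_j-1)=4n_j/\bigl(\lambda_{\min}(4\lambda_j-\lambda_{\min})\bigr)$. This is strictly below $2n_j/(\lambda_{\max}\lambda_{\min})$ when $\lambda_j=\lambda_{\max}$. Concretely, take $H=P=I$, so $\eta_t=\min\{1,2/t\}$ and $t_0=2$. Since $\eta_2=1$ and $\eta_3=2/3$, you get $x^4-x^\star=\tfrac13 z_2+\tfrac23 z_3$ for any initialisation. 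Then $\bE[\delta f(x^4)]=\tfrac{5}{18}\tr\Sigma<\tfrac13\tr\Sigma$ at $t=3$, and $t\,\bE[\delta f(x^{t+1})]\to\tfrac23\tr\Sigma$.

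The paper's proof does not avoid this either. It bounds $(I-\eta_tA)^2\succeq(1-2\eta_t\lambda_{\max})I$ to get a scalar recursion for the total risk, then applies the lower half of \Cref{lemma:1overT}. That lemma needs the coefficient to be nonnegative, which holds for steps $\min\{1/(2\lambda_{\max}),1/(\lambda_{\min}t)\}$ but not for the stated ones. With $a=\lambda_{\min}$, $b=\lambda_{\max}$ and $B=\tr(PHP\Sigma)/(2\alpha)$, it yields only $\tr(PHP\Sigma)/(4\alpha\lambda_{\max}\lambda_{\min}t)$, a constant the paper's last line drops.

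Of the two routes, your exact per-eigendirection recursion is the sharper one. Its coefficients $(1-\eta_t\lambda_j)^2$ stay nonnegative under the stated steps, whereas $1-2\eta_t\lambda_{\max}$ does not. What it actually proves is the lemma with an extra factor $\tfrac12$.
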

Proof is in \Cref{subsec:proof_algo_lower}. Similar, but more refined bound is derived in \citep[Theorem 5]{Martens2020New}.

For any given $H, \Sigma$, choosing a badly conditioned $P$ can make the risk lower bound arbitrarily larger than the optimal rate.
\begin{corollary}[Algorithmic lower bound for ill-conditioned $P$]\corolabel{coro:algo_lower_badP}
  Choose $\varepsilon > 0$ and assume that $t > 4 / \varepsilon$.
  Let $H, \Sigma\succ0$ and $\lmax(H) = 1$ and $Q$ be the eigenbasis of $H$. Then PSGD with a decaying stepsize $\eta = \min\{1/\cstSmooth\lmax(PH), 2/(t\lmin(PH)\cstScvx) \}$, a preconditioner $P_\varepsilon = I - (1-\varepsilon)q_k q_k^\top$, where the choice of $k$ is explicitly defined by the spectrum of $H$ and $\Sigma$, has the risk lower bounded as
  \begin{equation*}
    \bE_{z^1,\ldots, z^t} [\delta f(x^{t+1})] \geq \left(1-\frac1d\right) \cdot \frac{\tr(H \Sigma) }{\varepsilon t}~.
  \end{equation*}    
\end{corollary}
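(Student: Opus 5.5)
The plan is to derive the corollary directly from \Cref{lemma:algo_lower}, applied to the quadratic noisy model with curvature $H$ (so $\cstScvx=\cstSmooth=1$ up to normalisation) and the specific preconditioner $P_\varepsilon = I-(1-\varepsilon)q_kq_k^\top$. The lemma gives, for $t\ge \floor{2\cond(P_\varepsilon H)}$,
\begin{equation*}
  \bE[\delta f(x^{t+1})] \;\ge\; \frac{\tr(P_\varepsilon H P_\varepsilon \Sigma)}{\lmax(P_\varepsilon H)\,\lmin(P_\varepsilon H)}\cdot\frac1t ,
\end{equation*}
so the work reduces to computing the spectral quantities of $P_\varepsilon H$ and lower bounding the trace in the numerator.

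First, since $P_\varepsilon$ is diagonal in the eigenbasis $Q$ of $H$, the two matrices commute. Hence $P_\varepsilon H$ has eigenvalues $\lambda_j(H)$ for $j\neq k$ and $\varepsilon\lambda_k(H)$. I would choose $k$ so that $\lmin(P_\varepsilon H)=\varepsilon\lambda_k$ and $\lmax(P_\varepsilon H)\le 1$. The cleanest choice is $k$ with $\lambda_k=1=\lmax(H)$, assuming $\lambda_k$ is not the unique top eigenvalue or that $\varepsilon$ is small enough that $\varepsilon\lambda_k\le \lambda_j$ for all $j$. The denominator is then at most $\varepsilon\lambda_k\cdot 1$. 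This choice also handles the burn-in condition: $\cond(P_\varepsilon H)\le 1/(\varepsilon\lambda_k)$, so the assumption $t>4/\varepsilon$ should imply $t\ge t_0$ when $\lambda_k\ge 1/2$. This is the reason the top eigendirection is the natural pick.

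Second, for the numerator, in the eigenbasis we have
\begin{equation*}
  \tr(P_\varepsilon HP_\varepsilon\Sigma)=\sum_{j\ne k}\lambda_j\,\tilde\Sigma_{jj}+\varepsilon^2\lambda_k\,\tilde\Sigma_{kk},
\end{equation*}
where $\tilde\Sigma=Q^\top\Sigma Q$, which is at least $\tr(H\Sigma)-\lambda_k\tilde\Sigma_{kk}$. To obtain the factor $(1-1/d)$, I would pick $k$ to minimise $\lambda_j\tilde\Sigma_{jj}$ over $j$. The minimum of $d$ nonnegative terms is at most their average, so the remaining sum is at least $(1-1/d)\tr(H\Sigma)$. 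Combining gives $(1-1/d)\tr(H\Sigma)/(\varepsilon\lambda_k t)\ge (1-1/d)\tr(H\Sigma)/(\varepsilon t)$, using $\lambda_k\le1$.

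The main obstacle is the tension between these two choices of $k$. The numerator wants $k$ to be the index minimising $\lambda_k\tilde\Sigma_{kk}$. The denominator and the burn-in condition want $\varepsilon\lambda_k$ to be the smallest eigenvalue of $P_\varepsilon H$, with $\lambda_k$ not too small. My first attempt would take $k=\arg\min_j\lambda_j\tilde\Sigma_{jj}$ and compute the denominator exactly: it equals $\max(\varepsilon\lambda_k,\ \ldots)\cdot\min(\varepsilon\lambda_k,\lambda_{\min,j\ne k})$. Since $\lmax(P_\varepsilon H)\le 1$ and $\lmin(P_\varepsilon H)\le\varepsilon\lambda_k\le\varepsilon$, the denominator is at most $\varepsilon$ without further conditions. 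So the bound $1/(\varepsilon t)$ goes through for any $k$.

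What remains delicate is only $t\ge t_0=\floor{2\cond(P_\varepsilon H)}$, where $\cond\le 1/(\varepsilon\lambda_k)$. I would handle it by absorbing $\lambda_k$ into the \emph{explicit choice of $k$} allowed by the statement: restrict to indices with $\lambda_j\ge 1/2$ if necessary, or reinterpret the step size, which carries $\cstScvx$. If needed, I would rescale $\varepsilon$ by $\lambda_k$ to make $t>4/\varepsilon$ sufficient.
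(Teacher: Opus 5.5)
The core computation is the same as the paper's. You invoke \Cref{lemma:algo_lower} and choose $k$ by averaging so that $\gamma_k=h_kq_k^\top\Sigma q_k\le\frac1d\tr(H\Sigma)$. You then drop the $k$-th term of $\tr(P_\varepsilon HP_\varepsilon\Sigma)$ and bound $\lmax(P_\varepsilon H)\lmin(P_\varepsilon H)\le\varepsilon$. Your observation is correct that, for the literal $P_\varepsilon=I-(1-\varepsilon)q_kq_k^\top$ of the statement, this product is at most $\varepsilon\lambda_k$ for every $k$. That is more careful than the paper, which simply asserts $\lmin(P_\varepsilon H)=\varepsilon$ and $\lmax(P_\varepsilon H)=1$.

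The gap is the burn-in condition $t\ge\floor{2\cond(P_\varepsilon H)}$. Without it the lemma cannot be invoked from the sole hypothesis $t>4/\varepsilon$, and none of your proposed fixes establishes it.
\begin{itemize}
\item Your bound $\cond(P_\varepsilon H)\le 1/(\varepsilon\lambda_k)$ is false whenever some $\lambda_j$ with $j\ne k$ lies below $\varepsilon\lambda_k$, because then $\lmin(P_\varepsilon H)=\min_{j\ne k}\lambda_j$. For example, if $\lmin(H)\ll\varepsilon$, then $\cond(P_\varepsilon H)\ge1/\lmin(H)$, far beyond $4/\varepsilon$.
\item Restricting $k$ to $\{j:\lambda_j\ge1/2\}$ breaks the averaging step. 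A minimum over $m$ indices only yields the factor $1-1/m$, which is $0$ when the top eigenvalue is the only one above $1/2$. If $\gamma_1$ carries most of $\tr(H\Sigma)$, dropping it really does lose the bound.
\item Re-reading the step size does not help, since $t_0$ depends only on $\cond(P_\varepsilon H)$.
\item Rescaling is what the paper actually does. Its proof uses $P_\varepsilon=I-(1-\varepsilon/h_k)q_kq_k^\top$, so that $P_\varepsilon H$ has eigenvalue exactly $\varepsilon$ along $q_k$, which removes the dependence on $\lambda_k$. Even so, $\cond(P_\varepsilon H)\le1/\varepsilon$ still requires $\varepsilon\le\min_{j\ne k}h_j$, and the paper leaves this implicit.
\end{itemize}

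To make your argument complete, adopt the rescaled preconditioner and add the hypothesis $\varepsilon\le\lmin(H)$. Then $\lmin(P_\varepsilon H)=\varepsilon$ and $\lmax(P_\varepsilon H)\le1$, so $t_0\le2/\varepsilon<t$, and your numerator and denominator bounds go through unchanged. Alternatively, keep the literal $P_\varepsilon$ and require $t\ge2/(\varepsilon\lmin(H))$. This works because $\lmin(P_\varepsilon H)\ge\varepsilon\lmin(H)$ for $\varepsilon\le1$, and it even yields the slightly stronger bound $(1-1/d)\tr(H\Sigma)/(\varepsilon\lambda_k t)$.
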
      
The proof is given in \Cref{subsec:proof_algo_lower_badP}. This shows that for a general $H,\Sigma$ and $t$ large enough, the constant in front of the excess risk rate can get arbitrarily large in general, even with a decaying stepsize, as $P_\varepsilon$ approaches a rank-deficiency.

One would expect that $P = I$ is a relatively safe choice. However, when the problem is ill-conditioned in the form of $H$, even well conditioned $P$ can lead to significantly worse rates. We show that for any given $P$ and $H$, in the presence of low-dimensional noise $\Sigma$, the lower bound on the risk of the last iterate of PSGD is at least $\cond(PH)$ worse than the optimal rate of $\tr(H^{-1}\Sigma)/t$.

\begin{corollary}[Algorithmic lower bound for ill-conditioned $H$]\corolabel{coro:algo_lower_anyP}  
  Let $P, H\succ 0$ and $\lmax(P) = \lmax(H) =1$.
  Assume that $t > 4\cond(PH)$.
  Let $q_1$ be the leading eigenvector of $H^{1/2}P H^{1/2}$ and set $\Sigma = q_1q_1^\top$ be the variance of noise. Then PSGD with a preconditioner $P$, a decaying stepsize $\eta = \min\{1/\cstSmooth\lmax(PH), 2/(t\lmin(PH)\cstScvx) \}$, has the risk lower bounded as,
	\begin{equation*}
		\bE_{z^1,\ldots, z^t} [\delta f(x^{t+1})] \geq \cond(PH) \cdot \frac{\tr(H^{-1}\Sigma)}{t}~.
	\end{equation*}	
\end{corollary}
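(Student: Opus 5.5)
The plan is to obtain the bound as a direct specialisation of \Cref{lemma:algo_lower}, with the noise covariance fixed to $\Sigma = q_1q_1^\top$ exactly as in the statement. The key step, comparing $\tr(PHP\Sigma)$ with $\tr(H^{-1}\Sigma)$, is the one I expect may fail in general; I will try to complete it but flag where it breaks.

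\textbf{Step 1: apply the lemma.} The stepsize $\eta=\min\{1/\cstSmooth\lmax(PH),\,2/(t\lmin(PH)\cstScvx)\}$ coincides with that of \Cref{lemma:algo_lower} on the quadratic noisy model ($\cstSmooth=\cstScvx=1$, Hessian $H$). The assumption $t>4\cond(PH)$ gives $t\ge t_0=\floor{2\cond(PH)}$. Hence
\[
\bE[\delta f(x^{t+1})]\ \ge\ \frac{\tr(PHP\Sigma)}{\lmax(PH)\lmin(PH)}\cdot\frac1t .
\]

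\textbf{Step 2: whitened coordinates.} Write $A \coloneq H^{1/2}PH^{1/2}$, which is similar to $PH$, so $\lmax(A)=\lmax(PH)$ and $\lmin(A)=\lmin(PH)$. Then $PHP = H^{-1/2}A^2H^{-1/2}$. Setting $v\coloneq H^{-1/2}q_1$ with $\Sigma=q_1q_1^\top$, we get
\[
\tr(PHP\Sigma)=v^\top A^2 v,\qquad \tr(H^{-1}\Sigma)=\|v\|_2^2 .
\]
The corollary is therefore equivalent to
\[
v^\top A^2 v \ \ge\ \lmax(A)^2\,\|v\|^2 .
\]
Indeed, dividing this by $\lmax\lmin$ turns $\lmax^2\|v\|^2$ into $\cond(PH)\tr(H^{-1}\Sigma)$.

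\textbf{Step 3: the main obstacle.} This Rayleigh-quotient inequality holds with equality exactly when $v$ is a top eigenvector of $A$. Since $q_1$ is such an eigenvector, the inequality holds when $H^{-1/2}q_1\parallel q_1$, i.e.\ when $q_1$ is also an eigenvector of $H$. This covers in particular the case where $P$ and $H$ commute, which I would treat first.

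In the general non-commuting case I would decompose $v$ in the eigenbasis of $A$ and keep only the $q_1$-component:
\[
v^\top A^2v\ \ge\ \lmax(A)^2\langle v,q_1\rangle^2=\lmax(A)^2\,(q_1^\top H^{-1/2}q_1)^2 .
\]
By Jensen, however, $(q_1^\top H^{-1/2}q_1)^2\le q_1^\top H^{-1}q_1$, which is the wrong direction. So this route yields $\cond(PH)$ times a smaller quantity than $\tr(H^{-1}\Sigma)$, namely $\cond(PH)\,(q_1^\top H^{-1/2}q_1)^2/t$.

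\textbf{Fallback.} To close the gap for the stated $\Sigma$, I would go back to the exact per-coordinate expression for the risk inside the proof of \Cref{lemma:algo_lower}, rather than its trace summary, and exploit the freedom in $\lmax(P)=\lmax(H)=1$. If that fails, I would record the result under the additional hypothesis that $q_1$ is a common eigenvector of $H$ and $A$. In that case Steps 1--2 give the claimed bound with equality in Step 3.
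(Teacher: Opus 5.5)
Your Step 2 is exactly the reduction the paper uses. With $X\coloneq H^{-1/2}\Sigma H^{-1/2}$ and $A\coloneq H^{1/2}PH^{1/2}$, the ratio $\tr(PHP\Sigma)/\tr(H^{-1}\Sigma)$ is the Rayleigh quotient $\tr(A^2X)/\tr(X)\le\lmax(PH)^2$. Your diagnosis in Step 3 is also correct. For the literal choice $\Sigma=q_1q_1^\top$ one has $X=vv^\top$ with $v=H^{-1/2}q_1$, and the quotient reaches $\lmax(PH)^2$ only if $v$ lies in the top eigenspace of $A$, which generically forces $q_1$ to be an eigenvector of $H$.

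What your proposal lacks is the repair the paper's proof actually makes: it does not use $\Sigma=q_1q_1^\top$ at all. Its equality case takes $X=q_1q_1^\top$, i.e.\ $\Sigma\propto H^{1/2}q_1q_1^\top H^{1/2}$ (equivalently $\Sigma=ww^\top$ with $w=H^{1/2}q_1$ a leading eigenvector of $HP$). With that choice your $v$ is $q_1$ itself, Step 3 is an equality, and Steps 1--2 close the argument with no commutation assumption. The ``$\Sigma=q_1q_1^\top$'' in the statement is a misstatement of this choice.

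Your fallbacks therefore aim at the wrong target. The common-eigenvector hypothesis proves a needlessly special case. Going back to the exact per-coordinate risk cannot rescue the literal statement either. For this quadratic model (with $\cstScvx=\cstSmooth=1$ and $\eta_t=2/(\lmin(PH)t)$ eventually), the exact risk is asymptotically between $\tfrac12$ and $\tfrac23$ times $\tr(P\Sigma)/(\lmin(PH)\,t)=v^\top Av/(\lmin(PH)\,t)$. That is an arbitrarily small fraction of $\cond(PH)\|v\|_2^2/t$ when $H^{-1/2}q_1$ leans toward the low eigenvectors of $A$. So with $\Sigma=q_1q_1^\top$ no constant-factor repair makes the claim true.

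The same computation exposes a caveat you inherit, together with the paper, in Step 1: the lower bound of \Cref{lemma:algo_lower} is not valid with constant $1$. Already for $d=1$ and $P=H=\Sigma=1$, the decaying-step iterates are explicit weighted averages of the samples, and $\E[\delta f(x^{t+1})]\sim 2/(3t)<1/t$. Feeding the paper's own recursion $\delta f(x^{t+1})\ge(1-2\eta_t\lmax(PH))\,\delta f(x^t)+\tfrac{\eta_t^2}{2}\tr(PHP\Sigma)$ into \Cref{lemma:1overT}, with $a=\lmin(PH)/2$, $b=\lmax(PH)$, $B=\tfrac12\tr(PHP\Sigma)$, yields only $\tr(PHP\Sigma)/(2\lmax(PH)\lmin(PH)\,t)$. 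So the defensible conclusion is $\tfrac12\cond(PH)\tr(H^{-1}\Sigma)/t$ for $\Sigma\propto H^{1/2}q_1q_1^\top H^{1/2}$.
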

The proof is given in \Cref{subsec:proof_algo_lower_anyP} consists of showing that the lower bound in \Cref{lemma:algo_lower} for any $P\succ 0$ is $\cond(PH)$ larger than the optimal rate. Even for well-conditioned $P$ the constant in the risk bound can be arbitrarily bad by having $H$ badly conditioned. For example, SGD, i.e., when $P = I$, has its risk at least $\cond(H)$ (which can be arbitrarily large) worse than the optimal rate.

%%% Local Variables:
%%% mode: latex
%%% TeX-master: "arxiv_paper"
%%% End:

% Acknowledgements should only appear in the accepted version.
\section*{Acknowledgements}
Simon Vary and Patrick Rebeschini were funded by UK Research and Innovation (UKRI) under the UK government’s Horizon Europe funding guarantee [grant number EP/Y028333/1]. Tyler Farghly was supported by Engineering and Physical Sciences Research Council (EPSRC) [grant number EP/T517811/1] and by the DeepMind scholarship.

\bibliography{references}

\clearpage
\appendix
\thispagestyle{empty}

\section{Additional related work}
\label{sec:related}

\paragraph{Algorithmic Stability.}
Algorithmic stability of \ac{SGD} was first explored by \cite{Hardt2016Train} where they focused exclusively on the uniform stability.
Their involved multipass \ac{SGD} only for strongly convex, smooth, and Lipschitz losses.
To this end their bounds did not involve any distribution-dependent quantites.
Later on their analysis was extended to on-average stability setting by \cite{Kuzborskij2018DataDependent}, who showed that bounds on the expected generalisation gap controlled by the expected empirical risk, however their analysis is limited to a single pass over the data.
\cite{Lei2020FineGrained} improved rate obtained by \cite{Kuzborskij2018DataDependent}, however their analysis still did not extend to multiple passes. Single pass limitation in on-average stability analysis is a common problem, since iterates becomes correlated after a single pass.
Notably, multipass analysis was explored by \cite{Pillaud-Vivien2018Statistical} (Theorem 2 and 3), however their bounds become vacuous as $\lambda \rightarrow 0$.
In this paper we address this limitation and recover optimal rates by exploiting smoothness and recursively controlling stability along the update trajectory (see \Cref{sec:sketch}).

The connection between on-average stability and a slightly different notation effective dimension ($\tr(\nabla^2 f (\nabla^2 f + \lambda I)^{-1}))$ in the context of regularized algorithms was studied by~\cite{agarwal2018optimal}. They showed that generalisation error bounds for minimizers of smooth Lipschitz exp-concave losses that depend on such an effective dimension.
Here we study stochastic iterative algorithm rather than a minimizer, we are particularly interested in the role of preconditioning and geometry of the noise.

The connection between on-average stability and preconditioning was explored by \cite{gonen2018average}, who established that on-average stability is invariant to data preconditioning: In other words analyzing on-average stability of \ac{ERM} one may assume the optimal preconditioning of the data -- this is different from our setting as they look at asymptotic regime, in a sense $t \to \infty$. Moreover their analysis requires Lipschitzness of the loss, whereas do not require such as assumption.

\paragraph{Generalisation and Flatness.}

Relationship between generalisation and flatness (or, conversely, sharpness) is a topic of significant interest, in particular in deep learning where it was empirically and theoretically observed that neural networks trained by \ac{SGD} tend to have a smaller generalisation error when they converge to `wider' local minima
\citep{keskar2017large,neyshabur2017exploring}, usually with some heuristic definition of width.
In this paper we associate with with the effective dimension, which is a natural geometric characterisation.

In the context of non-convex analysis linking effective dimension with generalisation,
\cite{kuzborskij2019distribution} prove distribution-dependent excess risk bounds for Gibbs-ERM principle (as an idealized model of stochastic optimisation), showing that in a neighborhood of a local minimizer the excess risk is essentially controlled by an \emph{effective dimension} $\mathrm{tr}\!\big(\nabla^2 f (\nabla^2 f+\lambda I)^{-1}\big)$, so flatter minima (more small-curvature directions) yield tighter generalisation control than ambient-dimension bounds.
They further characterize how the Gibbs density allocates probability mass across minima, and in the low-temperature limit the selection biases toward broader basins (over global minima, probabilities scale like $1/\det(\nabla^2 f)$), making a direct connection between flatness/volume and which solutions are ultimately favored.

\cite{Thomas2020Interplay} look at the impact of the effective dimension on optimisation (they provide optimisation error bound),
obtaining bound on the error that scale with $\tr(\Sigma P H P))$, similarly as in our paper.
In addition they empirically study the correlation between empirical estimate of generalisation error $\delta f_S$.
They find that $(P, H, \Sigma)$ have effect on both optimisation and generalisation gap.
In this paper, we theoretically show that this is indeed the case, by proving matching upper and lower bounds on the excess risk (which involves effect of both), in terms of $\tr(\Sigma P H P))$.

\paragraph{Relevance of quadratic model approximation.} Although globally non-convex, deep networks are effectively modeled by local quadratic approximations. This surrogate is justified theoretically by the \emph{neural tangent kernel} regime, where wide networks remain close to initialisation \citep{Jacot2018Neural, Du2019Gradient}, and validated empirically to track realistic training dynamics \citep{Lee2020Wide}. As such, the quadratic noisy model serves as the standard framework for analyzing preconditioning and generalisation in deep learning \citep{Martens2020New, Thomas2020Interplay, Zhang2024Which}.

\paragraph{Connection to information geometry.}%\label{subsec:information_geometry}
The setting we analyse can be understood as a natural gradient descent under misspecified model. It is well known, that when $\rP_x = \rQ$, the two quantities, the variance of gradients $\Sigma$ and the expected Hessian $\bE_z [\nabla^2 \ell(x,z)]$ coincide and equal to the Fisher Information Matrix (FIM):
\begin{equation*}
	F_{\rP_x}(x) \coloneq \bE_{z\sim \rP_x} \left[ \nabla^2 \ell(x;z) \right] = \Var_{z\sim\rP_x} \left[ \nabla \ell(x,z)\right],\label{eq:fim_equality}
\end{equation*}
which is a consequence of simple integration of parts. The classical result of \citet{Amari1998Natural} states that when $\rQ = \rP_x$, in asymptotic regime, locally around $\tilde x$, and for single pass setting, the optimal choice of the preconditioning matrix is $P = F_{\rP_x}(x)$. However, in our setup, these do not coincide
\begin{equation*}
	 \bE_{z\sim \rQ} \left[ \nabla^2 \ell(x;z) \right] \approx H \neq \Var_{z\sim\rQ} \left[ \nabla \ell(x,z)\right] \eqcolon \Sigma,
\end{equation*} 
where the $\approx$ denotes $\cstScvx H \preceq  \nabla^2 \ell(x;z) \preceq \cstSmooth H$. This is better corresponding to the practical scenario, when in general, the model is almost always misspecified, i.e.\ $\rQ \neq \rP_x$.

\paragraph{Additional examples of spectrally aligned constants.}
\begin{example}[Regularized logistic regression]\label{ex:logreg_cond}
In logistic regression, curvature of $H$ is directly related to the data distribution. Let $\ell(w, z)\;\coloneq\;\log\!\bigl(1+\exp(-y\,a^\top w)\bigr)\;+\;\frac{\lambda}{2}\|w\|_2^2$ for $\lambda>0$. 

Since $\sigma(t)(1-\sigma(t))\le 1/4$, the Hessian satisfies $\nabla^2 \ell(w;z) \;=\; a a^\top \sigma(1-\sigma) + \lambda I \;\preceq\; \frac14\,a a^\top + \lambda I$ and we can choose  $H \coloneq \frac14\,\mathbb{E}[a a^\top]+\lambda I$\footnote{We rescale $H$ if needed so that $\lambda_{\max}(H)=1$}. Then one may take $\cstSmooth=1$, and using $\nabla^2\ell(w;z)\succeq \lambda I$ we have
\begin{equation*}
    \nabla^2\ell(w;z)\;\succeq\;\lambda I
    \;\succeq\;
    \frac{\lambda}{\lambda+\frac14\lambda_{\max}(\mathbb{E}[a a^\top])}\,H,
\end{equation*}
\begin{equation*}
  \text{so we can choose} \quad
    \cstScvx \;\ge\; \frac{\lambda}{\lambda+\frac14\lambda_{\max}(\mathbb{E}[a a^\top])}
    \quad\implies\quad
    \cond_\ell=\frac{\cstSmooth}{\cstScvx}\;\le\;1+\frac{\lambda_{\max}(\mathbb{E}[a a^\top])}{4\lambda}.
\end{equation*}
Hence $\rho_\ell = \frac{\sqrt{\cond_\ell}+1}{\sqrt{\cond_\ell}-1}$ is explicit. Combining the above bound on $\cond_\ell$ with any explicit bound on $\cond(PH)$ (e.g., Examples~\ref{ex:q_approx}--\ref{ex:jacobi_dd})
immediately yields an explicit $C_{\ell,P}$.
\end{example}

%%% Local Variables:
%%% mode: latex
%%% TeX-master: "colt_paper"
%%% End:

\onecolumn

\section{Lemmata and proofs for relative co-coercivity and contractivity}\label{sec:rel_coercivity}
\subsection{Proof of \Cref{lemma:cocoercivity}\label{subsec:proof_cocoercivity}}
\begin{proof}
Fix $x,y\in\bR^d$ and denote the parameter difference by $u \coloneq x-y$ and the gradient difference by $v \coloneq \nabla f(x)-\nabla f(y)$. Define the $H^{1/2}$-transformed coordinates
\[
\tilde u \coloneq H^{1/2}u, \qquad \tilde v \coloneq H^{-1/2}v,
\]
and the symmetric positive definite matrix $S \coloneq H^{1/2}PH^{1/2}.$ Since $PH$ is similar to $S$, as $S = H^{1/2}(PH)H^{-1/2}$, they share the same spectrum. Let $ m \coloneq \lmin(S)=\lmin(PH)$ and $M \coloneq \lmax(S)=\lmax(PH)$, so that $\cond(PH)=M/m$. 

The preconditioned inner product can be written as
\begin{equation}\label{eq:coco:innerprod}
\langle u, HPv\rangle \;=\; u^\top HPv \;=\; (H^{1/2}u)^\top (H^{1/2}PH^{1/2})(H^{-1/2}v)
\;=\; \tilde u^\top S \tilde v .
\end{equation}

We can express $S = \bar\sigma I + E$, where $\lmax(E)\le \delta$ for $\bar\sigma \coloneq \frac{M+m}{2}$ and $\delta \coloneq \frac{M-m}{2}$. We expand \eqref{eq:coco:innerprod} using the decomposition of $S$ to get
\begin{equation}\label{eq:coco:decomp}
\langle u, HPv\rangle
= \bar\sigma \,\tilde u^\top \tilde v + \tilde u^\top E \tilde v
= \bar\sigma \langle u, v\rangle + \tilde u^\top E \tilde v,
\end{equation}
where we used $\tilde u^\top \tilde v = u^\top v = \langle u,v\rangle$.

The first term is lower bounded using the standard co-coercivity inequality for functions that are $\cstScvx$-strongly convex and $\cstSmooth$-smooth w.r.t.\ $\|\cdot\|_H$:
\begin{equation}\label{eq:coco:standard}
\langle u, v\rangle \;\ge\; \frac{\cstScvx\cstSmooth}{\cstScvx+\cstSmooth}\|u\|_H^2
\;+\; \frac{1}{\cstScvx+\cstSmooth}\|v\|_{H^{-1}}^2.
\end{equation}
The second perturbation term is bounded by Cauchy--Schwarz and $\lmax(E)\le \delta$,
\begin{equation}\label{eq:coco:pert}
\tilde u^\top E \tilde v \;\ge\; -\lmax(E)\|\tilde u\|_2\|\tilde v\|_2
\;\ge\; -\delta \|u\|_H \|v\|_{H^{-1}}.
\end{equation}
Combining \eqref{eq:coco:decomp}--\eqref{eq:coco:pert} yields
\begin{equation}\label{eq:coco:combine1}
\langle u, HPv\rangle
\ge \bar\sigma\!\left(
\frac{\cstScvx\cstSmooth}{\cstScvx+\cstSmooth}\|u\|_H^2
+ \frac{1}{\cstScvx+\cstSmooth}\|v\|_{H^{-1}}^2
\right) - \delta \|u\|_H \|v\|_{H^{-1}}.
\end{equation}

In order to remove the cross term, we apply AM--GM in the form
\[
\|u\|_H\|v\|_{H^{-1}}
= \frac{1}{\sqrt{\cstScvx\cstSmooth}}\big(\sqrt{\cstScvx\cstSmooth}\|u\|_H\big)\|v\|_{H^{-1}}
\le \frac{1}{2\sqrt{\cstScvx\cstSmooth}}\Big(\cstScvx\cstSmooth\|u\|_H^2+\|v\|_{H^{-1}}^2\Big).
\]
Substituting into \eqref{eq:coco:combine1} and factoring the standard co-coercivity expression gives
\begin{align}
\langle u, HPv\rangle
&\ge \left[\bar\sigma - \delta\cdot\frac{\cstScvx+\cstSmooth}{2\sqrt{\cstScvx\cstSmooth}}\right]
\left(
\frac{\cstScvx\cstSmooth}{\cstScvx+\cstSmooth}\|u\|_H^2
+ \frac{1}{\cstScvx+\cstSmooth}\|v\|_{H^{-1}}^2
\right)\label{eq:coco:combine2}\\
&= \left[\frac{M+m}{2} - \frac{M-m}{4}\cdot\frac{\cstScvx+\cstSmooth}{\sqrt{\cstScvx\cstSmooth}}\right]
\left(
\frac{\cstScvx\cstSmooth}{\cstScvx+\cstSmooth}\|u\|_H^2
+ \frac{1}{\cstScvx+\cstSmooth}\|v\|_{H^{-1}}^2
\right).\nonumber
\end{align}

It remains to simplify the constant. Let $\kappa_f \coloneq \cstSmooth/\cstScvx$ and note that
\[
\frac{\cstScvx+\cstSmooth}{\sqrt{\cstScvx\cstSmooth}}=\frac{\kappa_f+1}{\sqrt{\kappa_f}},\qquad
\rho_\ell \coloneq \frac{\sqrt{\kappa_f}+1}{\sqrt{\kappa_f}-1}.
\]
We can express $M = m\,\cond(PH)$, which after an algebraic manipulation of the bracket in \eqref{eq:coco:combine2} yields
\begin{equation}\label{eq:coco:const}
\frac{M+m}{2} - \frac{M-m}{4}\cdot\frac{\kappa_f+1}{\sqrt{\kappa_f}}
\;=\; m\cdot \frac{\rho_\ell^2-\cond(PH)}{\rho_\ell^2-1}
\;=\; \lmin(PH)\cdot \cstSA.
\end{equation}
Under the assumption $\cond(PH)<\rho_\ell^2$, we have $\cstSA\in(0,1]$. Substituting \eqref{eq:coco:const} into \eqref{eq:coco:combine2} and recalling $u=x-y$, $v=\nabla f(x)-\nabla f(y)$ completes the proof.
\end{proof}

\begin{lemma}[Contractivity of the preconditioned update $M_\theta$-norm] \lemmalabel{lemma:contract_pgd_Mtheta}
Suppose that \Cref{asm:sm-sc} and \ref{asm:Scvx} hold for $\ell(\cdot, z)$. Let $P\succ 0$ and $M_\theta \coloneq H^{1/2} (H^{1/2} P H^{1/2})^{-\theta} H^{1/2}$ for $\theta\in [0,1]$. For $\rho_\ell^2 \leq \cond(PH)$ the preconditioned gradient update $x^{+} = x - \eta P \nabla \ell(x,z)$ is $\cstCon$-contractive in the $\|\cdot\|_{M_\theta}$, where
$$
	\quad\cstCon = 2\lmin(PH)\, \cstSA^{(\theta)}\frac{\cstScvx\cstSmooth}{\cstScvx+\cstSmooth}\quad\text{and}\quad\cstSA^{(\theta)} \coloneq \frac{\rho_\ell^2 - \cond(PH)^{1-\theta}}{\rho_\ell^2 - 1},
$$
provided the step size satisfies $\eta_t \le 2 \, \cstSA^{(\theta)} / \pr{\lmax(PH)\cond(PH)^{1-\theta}(\cstScvx+\cstSmooth)}$.

\end{lemma}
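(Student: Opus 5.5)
Writing $u \coloneq x-y$ and $v \coloneq \nabla\ell(x,z)-\nabla\ell(y,z)$, we expand the squared norm of the update difference directly:
\[
\|u-\eta Pv\|_{M_\theta}^2=\|u\|_{M_\theta}^2-2\eta\langle M_\theta u, Pv\rangle+\eta^2\|Pv\|_{M_\theta}^2 .
\]
It suffices to lower bound the cross term by $\tfrac{\cstCon}{2}\|u\|_{M_\theta}^2$ plus a multiple of $\|v\|_{H^{-1}}^2$ that absorbs the quadratic term once $\eta$ is small. (The stated condition reads ``$\rho_\ell^2\le\cond(PH)$''; we assume it is meant as $\cond(PH)^{1-\theta}<\rho_\ell^2$, so that $\cstSA^{(\theta)}>0$.)

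\textbf{Change of coordinates.} As in the proof of \Cref{lemma:cocoercivity}, set $\tilde u=H^{1/2}u$, $\tilde v=H^{-1/2}v$ and $S=H^{1/2}PH^{1/2}$, so that $M_\theta=H^{1/2}S^{-\theta}H^{1/2}$. Then
\[
\langle M_\theta u,Pv\rangle=\tilde u^\top S^{1-\theta}\tilde v,\qquad \|u\|_{M_\theta}^2=\tilde u^\top S^{-\theta}\tilde u,\qquad \|Pv\|_{M_\theta}^2=\tilde v^\top S^{2-\theta}\tilde v .
\]
The cross term thus has exactly the form handled in \Cref{lemma:cocoercivity}, with $S$ replaced by $S^{1-\theta}$, whose extreme eigenvalues are $m^{1-\theta}$ and $M^{1-\theta}$ and whose condition number is $\cond(PH)^{1-\theta}$. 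Repeating that argument (split $S^{1-\theta}=\bar\sigma I+E$, apply standard co-coercivity in $\|\cdot\|_H$, Cauchy--Schwarz and AM--GM) gives
\[
\tilde u^\top S^{1-\theta}\tilde v\ \ge\ \lmin(PH)^{1-\theta}\,\cstSA^{(\theta)}\,\frac{\cstScvx\cstSmooth\|u\|_H^2+\|v\|_{H^{-1}}^2}{\cstScvx+\cstSmooth}.
\]

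\textbf{Returning to the $M_\theta$ norm.} Since $S^{-\theta}\preceq m^{-\theta}I$, we have $\|u\|_H^2\ge m^{\theta}\|u\|_{M_\theta}^2$. The $\|u\|$ contribution is therefore at least
\[
m\,\cstSA^{(\theta)}\frac{\cstScvx\cstSmooth}{\cstScvx+\cstSmooth}\|u\|_{M_\theta}^2=\frac{\cstCon}{2}\|u\|_{M_\theta}^2,
\]
which yields the advertised $\cstCon$, with the factor $\lmin(PH)$ appearing to the full power as stated. For the quadratic term, $\tilde v^\top S^{2-\theta}\tilde v\le M^{2-\theta}\|v\|_{H^{-1}}^2$. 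It is absorbed by the $\|v\|_{H^{-1}}^2$ part of the cross term provided
\[
\eta M^{2-\theta}\le 2m^{1-\theta}\cstSA^{(\theta)}/(\cstScvx+\cstSmooth),
\]
that is, $\eta\le 2\cstSA^{(\theta)}/\bigl(\lmax(PH)\,\cond(PH)^{1-\theta}(\cstScvx+\cstSmooth)\bigr)$, which is precisely the stated step-size condition. Combining the two estimates gives $\|u-\eta Pv\|_{M_\theta}^2\le(1-\eta\cstCon)\|u\|_{M_\theta}^2$.

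\textbf{Main obstacle.} The delicate step is re-running the co-coercivity argument with $S^{1-\theta}$ and verifying that the resulting constant simplifies to $\cstSA^{(\theta)}$. The identity \eqref{eq:coco:const} uses only the ratio of the extreme eigenvalues, so replacing $\cond(PH)$ by $\cond(PH)^{1-\theta}$ should go through. Some care is needed to confirm positivity at $\theta=0$ versus $\theta=1$: at $\theta=1$ we get $\cstSA^{(1)}=1$ and the bound reduces to plain co-coercivity, which holds for every $P\succ0$.
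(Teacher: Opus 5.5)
Your proof is correct and follows the paper's proof essentially step by step: the same change of coordinates $\tilde u=H^{1/2}u$, $\tilde v=H^{-1/2}v$, the same reuse of the co-coercivity decomposition with $S^{1-\theta}$ in place of $S$, the conversion $\|u\|_H^2\ge\lmin(PH)^{\theta}\|u\|_{M_\theta}^2$, and absorption of the quadratic term $\lmax(PH)^{2-\theta}\|v\|_{H^{-1}}^2$ under exactly the stated step-size condition. You are also right to read the hypothesis as $\cond(PH)^{1-\theta}<\rho_\ell^2$: this is what makes $\cstSA^{(\theta)}>0$, and it is the assumption the paper itself imposes when it applies this lemma in \Cref{lemma:risk_bound_Mtheta}.
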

\begin{proof}
Let $u_t \coloneq x_t-y_t$ and $v_t \coloneq \nabla f(x_t)-\nabla f(y_t)$.
The update gives $u_{t+1}=u_t-\eta_t P v_t$.
We analyze the squared $M_\theta$-norm:
\begin{align*}
\|u_{t+1}\|_{M_\theta}^2
&= \langle u_t-\eta_t P v_t,\, M_\theta(u_t-\eta_t P v_t)\rangle \\
&= \|u_t\|_{M_\theta}^2
-2\eta_t \langle v_t,\, P M_\theta u_t\rangle
+ \eta_t^2 \langle v_t,\, P M_\theta P v_t\rangle.
\end{align*}

We bound the terms separately. Let $S = H^{1/2} P H^{1/2}$ and introduce the $H^{1/2}$-transformed variables
\[
\tilde u_t \coloneq H^{1/2}u_t,\qquad
\tilde v_t \coloneq H^{-1/2}v_t.
\]
Since $M_\theta = H^{1/2}S^{-\theta}H^{1/2}$ and $P=H^{-1/2}SH^{-1/2}$, we have
\[
\langle v_t,\, P M_\theta u_t\rangle
= \tilde v_t^\top S^{1-\theta}\tilde u_t,
\qquad
\langle v_t,\, P M_\theta P v_t\rangle
= \tilde v_t^\top S^{2-\theta}\tilde v_t.
\]

\paragraph{Cross term.}
The matrix $S^{1-\theta}$ has eigenvalues in $[\lmin(PH)^{1-\theta},\lmax(PH)^{1-\theta}]$.
Applying the same decomposition argument as in \Cref{lemma:cocoercivity}
(with $S^{1-\theta}$ in place of $S$) yields
\[
\tilde v_t^\top S^{1-\theta}\tilde u_t
\;\ge\;
\lmin(PH)^{1-\theta}\,\cstSA^{(\theta)}
\left(
\frac{\cstScvx\cstSmooth}{\cstScvx+\cstSmooth}\|u_t\|_H^2
+ \frac{1}{\cstScvx+\cstSmooth}\|v_t\|_{H^{-1}}^2
\right).
\]

\paragraph{Quadratic term.}
Since $S^{2-\theta}\preceq \lmax(PH)^{2-\theta}I$, we have
\[
\langle v_t,\, P M_\theta P v_t\rangle
\le \lmax(PH)^{2-\theta}\|v_t\|_{H^{-1}}^2.
\]

\paragraph{Combine.}
Substituting the bounds into the expansion gives
\begin{align*}
\|u_{t+1}\|_{M_\theta}^2
&\le \|u_t\|_{M_\theta}^2
-2\eta_t \lmin(PH)^{1-\theta}\cstSA^{(\theta)}
\frac{\cstScvx\cstSmooth}{\cstScvx+\cstSmooth}\|u_t\|_H^2 \\
&\quad
+\eta_t\!\left(\eta_t \lmax(PH)^{2-\theta}
-\frac{2 \lmin(PH)^{1-\theta}\cstSA^{(\theta)}}{\cstScvx+\cstSmooth}\right)
\|v_t\|_{H^{-1}}^2.
\end{align*}

The gradient term is non-positive provided
\[
\eta_t \le \frac{2\, \lmin(PH)^{1-\theta}\,\cstSA^{(\theta)}}{\lmax(PH)^{2-\theta}(\cstScvx+\cstSmooth)}.
\]
Under this condition, dropping the negative term and using
$\|u_t\|_H^2 \ge \lmin(PH)^\theta \|u_t\|_{M_\theta}^2$
yields
\[
\|u_{t+1}\|_{M_\theta}^2
\le
\left(
1-2\eta_t \lmin(PH)\,\cstSA^{(\theta)}
\frac{\cstScvx\cstSmooth}{\cstScvx+\cstSmooth}
\right)
\|u_t\|_{M_\theta}^2.
\]
\end{proof}

\section{Lemmata and proofs for stability results}
\subsection{Proof of \Cref{lemma:PSGD_gen_stab}}\label{subsec:proof_PSGD_gen_stab}
\begin{proof}
    Let $x_t$ and $y_t$ be the iterate sequences of PSGD on datasets $S$ and $\Si = S \setminus \{z_i\} \cup \{z'\}$ respectively. We analyze the evolution of the expected squared parameter distance $\delta_t \coloneq \bE_{\cA,S, z'}[\| x_t - y_t \|^2_M]$.

    At iteration $t$, let $j$ be the index of the sample selected by the algorithm $\cA$. With probability $1 - 1/n$, $j \neq i$ (the samples match), and with probability $1/n$, $j = i$ (the samples differ).
    Using the linearity of expectation:
    \begin{equation}
        \delta_{t+1} = \left(1 - \frac{1}{n}\right) \bE_{j \neq i}[\| x_{t+1} - y_{t+1} \|^2_M] + \frac{1}{n} \bE_{j=i}[\| x_{t+1} - y_{t+1} \|^2_M].
    \end{equation}
    
    For the matching sample case ($j \neq i$), we use the contractivity of the PSGD update by assumption of the lemma
    \begin{equation*}
     	\bE_{j \neq i}[\| x_{t+1} - y_{t+1} \|_M] \leq (1 - \eta_t \cstCon) \delta_t.
    \end{equation*}
    
    For the differing sample case ($j = i$), denote the parameter difference as $\Delta_t \coloneq x_t - y_t$, the gradient difference as $\xi_t \coloneq P(\nabla \ell(x_t, z_i) - \nabla \ell (y_t, z'))$, and the population gradient difference as $\tilde \xi_t \coloneq P(\nabla f(x_t) - \nabla f(y_t))$. We apply Young's inequality: $\| u + v\|_M^2 \leq (1+\alpha) \| u \|_M^2 + (1+\frac1\alpha)\| v\|_M^2$ for $\alpha > 0$ to be chosen later, and expand the update as
    \begin{align}
    	\delta_{t+1} = \| \Delta_{t+1} \|_M^2 &= (1+\alpha)\|  \Delta_t - \eta_t \tilde{\xi_t} \|_M^2 + \left(1+\frac1\alpha\right)\|  \Delta_t - \eta_t(\xi_t - \tilde{\xi_t}) \|_M^2\\
	&\leq (1+\alpha)(1-\eta_t \cstCon)\delta_t + \left(1+\frac1\alpha\right)\eta_t^2 \|  \xi_t - \tilde{\xi_t} \|_M^2,
    \end{align}
    where for the inequality we used that the preconditioned update is contractive w.r.t\ the population gradient by assumption of the lemma.
   
   Combining these terms yields the recursion:
    \begin{equation*}
        \delta_{t+1} \leq (1 - \eta_t \cstCon)\left(1+\frac{\alpha}{n}\right)\delta_t + \left(1 + \frac1\alpha\right)\frac{\eta_t^2}{n} \bE[\| \xi_t - \tilde{\xi_t} \|^2_{PMP}].
    \end{equation*}
    
    To bound the gradient variance we add and subtract $\nabla \ell(y_t, z_i) - \nabla \ell(x_t, z')$
    \begin{align*}
        \bE[\| \xi_t - \tilde{\xi_t} \|^2_{PMP}] &= \bE[\| \nabla \ell(x_t, z_i) - \nabla f(x_t) - (\nabla \ell(y_t, z') - \nabla f(y_t)) \|^2_{PMP}]  \\
        &\leq 4\bE\| \nabla \ell(x_t, z_i) - \nabla \ell(y_t, z_i) \|^2_{PMP} + 4\bE\| \nabla \ell(y_t, z') - \nabla \ell(x_t, z') \|^2_{PMP} \\
        &\quad + 4\bE\| \nabla \ell(x_t, z') - \nabla f(x_t) \|_{PMP}^2 + 4\bE\| \nabla \ell(y_t, z') - \nabla f(y_t) \|^2_{PMP} \\
        &\leq 8\tr(PMP\Sigma) + 8\cstSmooth^2 \lmax(HPMP)\lmax(M^{-1}H) \delta_t,
    \end{align*}
    and in the second inequality we bound the gradient difference using Jensen's inequality combined with the bounded variance assumption $\Var[\nabla \ell] \preceq \Sigma$, and smoothness to bound the cross terms $\| \nabla \ell(x_t, z_i) - \nabla \ell(y_t, z_i) \|_{PMP}$ and $ \bE\| \nabla \ell(y_t, z') - \nabla \ell(x_t, z') \|_{PMP}$.
    
    Denote $\gamma^2 = \lmax(HPMP)\lmax(M^{-1}H)$ and $\tau^2 = \tr(PMP\Sigma)$ and substitute the $\delta_{t+1}$ bound, we obtain:
    \begin{align}
	\delta_{t+1} &\leq \underbrace{\left[ \left(1 - \eta_t \cstCon\right)\left(1+\frac{\alpha}{n}\right) + \frac{8 \eta_t^2 \cstSmooth^2 \gamma^2}{n}\left(1 + \frac{1}{\alpha}\right) \right]}_{A_t} \delta_t + \underbrace{\frac{8\eta_t^2 \tau^2}{n}\left(1 + \frac{1}{\alpha}\right)}_{B_t}
	\end{align}
	
	We set $\alpha = \frac{n \eta_t \cstCon}{2}$, and express $A_t$ and $B_t$. The first term is
	\begin{align*}
		A_t &= \left(1 - \eta_t \cstCon\right)\left(1 + \frac{\eta_t \cstCon}{2}\right) + \frac{8 \eta_t^2 \cstSmooth^2 \gamma^2}{n}\left(1 + \frac{2}{n \eta_t \cstCon}\right) \\
		&= 1 - \frac{\eta_t \cstCon}{2} - \frac{\eta_t^2 \cstCon^2}{2} + \frac{8 \eta_t^2 \cstSmooth^2 \gamma^2}{n} + \frac{16 \eta_t \cstSmooth^2 \gamma^2}{n^2 \cstCon}\\
		&\leq 1- \frac{\eta_t\cstCon}{4},
	\end{align*}
	where the inequality holds when $n\geq 8 \cstSmooth\gamma / \cstCon$, which implies that $\frac{16 \eta_t \cstSmooth^2 \gamma^2}{n^2 \cstCon} \leq \frac{\eta_t \cstCon}{4}$. The term $B_t$ becomes
	\begin{align*}
		B_t &= \frac{8\eta_t^2 \tau^2}{n} + \frac{16 \eta_t \tau^2}{n^2 \cstCon}.
	\end{align*}
	In particular, we have a recursion of the form,
	\begin{align*}
		\delta_{t+1} \leq (1 - \eta_t \cstCon/4) \delta_t + B_t \leq \exp(-\eta_t \cstCon/4) \delta_t + B_t.
	\end{align*}
	Thus, we obtain that,
	\begin{align*}
		\delta_t \leq \exp(-T_t \cstCon / 4) \delta_0 + \sum_{s < t} \exp(-(T_t-T_s) \cstCon / 4) B_s.
	\end{align*}
	Since the second term produces a lower Riemann approximation of the integral of $\exp(-(T-s)\cstCon/4)$, we obtain,
	\begin{align*}
		\frac{16 \tau^2}{n^2 \cstCon} \sum_{s < t} \exp(-(T_t-T_s) \cstCon / 4) \eta_s &\leq \frac{16 \tau^2}{n^2 \cstCon} \int_0^{T_t} \exp(-(T_t - s)\cstCon/4) ds\\
		&\leq (1 - \exp(-T_t \cstCon / 4)) \frac{64 \tau^2}{n^2 \cstCon^2},
	\end{align*}
	Then, using $\delta_0 = 0$ and the definition of $\bar{\eta}_t$, we obtain the bound,
	\begin{equation*}
		\delta_t \leq (1 - \exp(-T_t \cstCon / 4)) \frac{64 \tau^2}{n^2 \cstCon^2} + \frac{8 \bar{\eta}_t \tau^2}{n}.
	\end{equation*}
\end{proof}

\subsection{Proof of \Cref{lemma:risk_decomposition}}\label{subsec:proof_risk_psgd_general}
\begin{proof}
We decompose the excess population risk as
\begin{equation*}
    f(x_t) - f(\tilde{x}) = \underbrace{ f(x_t) - f_S(x_t)}_{\text{generalization error}} + \underbrace{ f_S(x_t) - f_S(x^\ast_S)}_{\text{optimization error}} + \underbrace{ f_S(x^\ast_S) - f(\tilde x)}_{\text{$\leq 0$ in expectation}},
\end{equation*}
where $x^\ast_S = \argmin f_S(x)$. Taking the expectation over $S$ and the randomness of $\cA$, we get 
\begin{equation*}
    \delta f(x_t) \coloneq \bE_{\cA, S} [f(x_t) - f(\tilde{x})] \leq \underbrace{ \bE_{\cA, S} [f(x_t) - f_S(x_t)]}_{\text{expected generalization error}} + \underbrace{ \bE_{\cA, S} [f_S(x_t) - f_S(x^\ast_S)]}_{\text{$\varepsilon_{\mathrm{opt}}(x_t) = $ expected optimization error}}.
\end{equation*}
Let $z'\sim \rQ$ be an independent sample and let $\Si \coloneq S \setminus \{z_i \} \cup \{ z'\}$ denote the perturbed dataset. Write $x_t(S)$ and $x_t(\Si)$ for the corresponding PSGD iterates. The standard symmetrization argument yields
\begin{align}
    \bE_{\cA, S} [f(x_t) - f_S(x_t)] &= \bE_{\cA, S}\left[ \bE_{z'} \ell(x_t(S), z') - \frac1n \sum_{i=1}^n \ell(x_t(S), z_i)\right] \nonumber\\
    &= \frac1n \sum_{i = 1}^n \bE_{S, z',\cA} \left[ \ell(x_t(S), z') - \ell(x_t(\Si) ,z') \right], \label{eq:risk_psgd_gen1}
\end{align}
where the second equality uses that $(S, z_i)$ has the same distribution as $(\Si, z')$. Fix $i$. By the $\cstSmooth$-smoothness of $\ell(\cdot, z')$ w.r.t.\ $\|\cdot\|_H$, we have
\begin{align}
  \bE [\ell(x_t(S), z') - \ell(x_t(\Si), z')]
  &\leq \bE \left[\langle \nabla \ell(x_t(S), z'),\, x_t(S) - x_t(\Si) \rangle\right] + \frac{\cstSmooth}{2} \bE \left[\left\|x_t(S) - x_t(\Si) \right\|_H^2\right] \nonumber\\
    &\leq \bE \left [ \| \nabla \ell(x_t(S), z') \|_{M^{-1}}^2 \right ]^{1/2} \bE \left [ \| x_t(S) - x_t(\Si) \|_{M}^2 \right ]^{1/2} \nonumber\\
    &\qquad + \frac{\cstSmooth}{2} \bE \left[\left\|x_t(S) - x_t(\Si) \right\|_H^2\right],\label{eq:risk_psgd_gen2}
\end{align}
where we applied the Cauchy-Schwarz inequality w.r.t.\ the $\| \cdot \|_{M}$ norm.

By the bias-variance decomposition and the assumption $\Var_z[\ell(x,z)] \preceq\Sigma$, we express the gradient factor in \eqref{eq:risk_psgd_gen2} as
\begin{align*}
    \bE_{z'} \| \nabla \ell(x_t(S), z') \|_{M^{-1}}^2 &\leq \| \nabla f(x_t(S)) \|_{M^{-1}}^2 + \tr(M^{-1} \Sigma)\\
    &\leq \lmax(H M^{-1}) \| \nabla f(x_t(S)) \|_{H^{-1}}^2 + \tr(M^{-1} \Sigma)\\
    &\leq 2\cstSmooth \lmax(H M^{-1}) (f(x_t(S)) - f(\tilde{x})) + \tr(M^{-1} \Sigma),
\end{align*}
where the second inequality follows from matrix operator bounds and the third from the smoothness of the population risk. Taking an expectation over $S$ and $\cA$ yields 
\begin{equation*}
    \bE_{S, z', \cA} \| \nabla \ell(x_t(S), z') \|_{M^{-1}}^2 \leq 2\cstSmooth \lmax(H M^{-1}) \delta f(x_t) + \tr(M^{-1} \Sigma).
\end{equation*}
Using the parameter stability assumption $\bE[\| x_t(S) - x_t(\Si) \|^2_{M}] \leq \varepsilon^2_{\text{pstab}}$ and the norm inequality $\|v\|_H^2 \le \lmax(M^{-1}H)\|v\|_M^2$, we substitute back into \eqref{eq:risk_psgd_gen1} and \eqref{eq:risk_psgd_gen2}:
\begin{align*}
    \delta f(x_t) - \delta f_S(x_t) &\leq \big ( 2\cstSmooth \lmax(H M^{-1}) \delta f(x_t) + \tr(M^{-1} \Sigma) \big )^{1/2} \varepsilon_{\text{pstab}} + \frac{\cstSmooth \lmax(M^{-1} H) \varepsilon_{\text{pstab}}^2}{2}.
\end{align*}
This inequality is of the form $Y \le \sqrt{AY + B}\,\epsilon + C$, where $Y = \delta f(x_t)$ and $A = 2\cstSmooth\lmax(HM^{-1})$. Solving for $Y$ (via the quadratic formula for $\sqrt{Y}$) implies:
\begin{align*}
    \delta f(x_t) \;\le\; \left(\sqrt{\delta f_S(x_t) + \frac{\cstSmooth \lmax(M^{-1}H) \varepsilon_{\text{pstab}}^2}{2}} + \frac{\sqrt{A}\varepsilon_{\text{pstab}}}{2} \right)^2 + B^{1/2}\varepsilon_{\text{pstab}}.
\end{align*}
Using the sub-additivity of the square root, we simplify the upper bound to the form stated in the lemma:
\begin{equation*}
    \delta f(x_t) \;\le\; \left(\delta f_S(x_t)^{1/2} + \sqrt{2 \cstSmooth \lmax(H M^{-1})} \varepsilon_{\text{pstab}}\right)^2 + \frac{\tr(M^{-1} \Sigma)^{1/2} \varepsilon_{\text{pstab}}}{2}.
\end{equation*}
\end{proof}

\begin{corollary}[Optimal choice of $P$]\corolabel{coro:optimalP}
	We have that $P \coloneq \lmin(H)H^{-1} = \argmin_{P\succ 0} \tr(P\Sigma)/\lmin(PH)$.
\end{corollary}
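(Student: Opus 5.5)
The plan is a direct matrix-inequality argument. The key observation is that the objective $\tr(P\Sigma)/\lmin(PH)$ is invariant under scaling $P \mapsto cP$ for $c>0$. So it suffices to show two things: the objective is bounded below by $\tr(H^{-1}\Sigma)$ for every $P\succ 0$, and equality holds exactly on the ray $\{cH^{-1}: c>0\}$. The particular representative $P=\lmin(H)H^{-1}$ is then singled out by the paper's normalisation $\lmax(P)=1$. Indeed, $\lmax(\lmin(H)H^{-1}) = \lmin(H)/\lmin(H)=1$, consistent with the convention $\lmax(P)=\lmax(H)=1$ used after \Cref{cor:risk_bound_H}.

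First, I would use the similarity already exploited in the proof of \Cref{lemma:cocoercivity}: $PH = H^{-1/2}(H^{1/2}PH^{1/2})H^{1/2}$. Hence $m\coloneq\lmin(PH)=\lmin(H^{1/2}PH^{1/2})>0$, and so $H^{1/2}PH^{1/2}\succeq mI$. Conjugating by $H^{-1/2}$ preserves the semidefinite order, which gives the key inequality $P\succeq m H^{-1}$.

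Second, since $\Sigma\succ 0$ (Assumption~\ref{asm:cov}), the map $A\mapsto\tr(A\Sigma)$ is monotone on symmetric matrices. This holds because $\tr(A\Sigma)=\tr(\Sigma^{1/2}A\Sigma^{1/2})\ge 0$ whenever $A\succeq 0$. Applying it to $A=P-mH^{-1}\succeq 0$ yields $\tr(P\Sigma)\ge m\,\tr(H^{-1}\Sigma)$, that is,
\[
\frac{\tr(P\Sigma)}{\lmin(PH)}\;\ge\;\tr(H^{-1}\Sigma)\qquad\text{for all }P\succ 0.
\]
For $P=cH^{-1}$ we have $\lmin(PH)=c$ and $\tr(P\Sigma)=c\,\tr(H^{-1}\Sigma)$, so the bound is attained.

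Third, for the characterisation of the argmin, suppose equality holds. Then $\tr(\Sigma^{1/2}(P-mH^{-1})\Sigma^{1/2})=0$ with a PSD argument, so $\Sigma^{1/2}(P-mH^{-1})\Sigma^{1/2}=0$. Since $\Sigma\succ0$ is invertible, this forces $P=mH^{-1}$. Thus the minimisers are exactly the positive multiples of $H^{-1}$, and under the normalisation $\lmax(P)=1$ the unique minimiser is $\lmin(H)H^{-1}$.

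There is no real obstacle here. The only point needing care is the equality case, which uses strict positive definiteness of $\Sigma$; with merely PSD $\Sigma$ the argmin would not be unique. It is also worth remarking that the scale-invariance of the objective means the stated "$=$" should be read modulo positive scaling, or equivalently under the normalisation $\lmax(P)=1$.
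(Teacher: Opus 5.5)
Your proposal is correct and is essentially the paper's argument. The paper passes to whitened coordinates $A = H^{1/2}PH^{1/2}$, $\hat\Sigma = H^{-1/2}\Sigma H^{-1/2}$ and expands $\tr(A\hat\Sigma)/\lmin(A)$ in the eigenbasis of $A$ to obtain the lower bound $\tr(\hat\Sigma)=\tr(H^{-1}\Sigma)$; this is exactly your Loewner inequality $P \succeq \lmin(PH)\,H^{-1}$ traced against $\Sigma$, followed by the same normalisation $\lmax(P)=1$, and your equality-case argument via $\Sigma\succ 0$ makes explicit the uniqueness up to scaling that the paper only asserts.
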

\begin{proof}
	Let $A = H^{1/2} P H^{1/2}$, so we have $P = H^{-1/2} A H^{-1/2} $. By definition of $A$ we have that $PH = H^{-1/2} A H^{1/2}$, thus $PH$ is similar to $A$, which means their eigenvalues are equal, which implies that $\lmin(PH) = \lmin(A)$.	
	Furthermore, define $\hat{\Sigma}\coloneq H^{-1/2} \Sigma H^{-1/2}$ for which we get by cyclicality of the trace that $\tr(P\Sigma) = \tr(A\hat\Sigma)$. Denoting $\sum_{i=1}^d a_i v_i v_i^\top$ to be the spectral decomposition of $A$, the objective becomes
	\begin{align*}
		\frac{\tr(A\hat\Sigma)}{\lmin(A)} &= \frac{1}{a_d}\tr\left(\left( a_i v_i v_i^\top\right) \hat\Sigma \right) \\
		&= \sum_{i=1}^d \frac{a_i}{a_d} v_i^\top \hat\Sigma v_i \geq \tr(\hat\Sigma),
	\end{align*}
	where the lower bound comes from the fact that all $a_i/ a_d\geq 1$ and is attained when $a_i = a$ for all $i\in[d]$. By $A$ being symmetric this happens when $A = a I$.
	
	From the requirement that $\lmax(P) = 1$ we get
	\begin{equation*}
		1 = \lmax(P) = \lmax(H^{-1/2} aI H^{-1/2}) = \frac{a}{\lmin(H)},
	\end{equation*}
	implying that $a = \lmin(H)$. Substituting $A = \lmin(H)I$ into the formula for $P$ yields that the optimal $P=\lmin(H)H^{-1}$.
      \end{proof}

\section{Optimization error bounds results}\label{sec:opt_bounds}

\begin{lemma}[Preconditioned PL-Growth Condition]
\label[lemma]{lemma:qgrowthPH}
Let $f$ be $\cstScvx$-strongly convex and $\cstSmooth$-smooth w.r.t. $\|\cdot\|_H$. Let $x^*$ be the global minimizer. If the preconditioner is spectrally aligned, i.e., $\cond(PH) < \rho^2\coloneq \frac{\sqrt{\cond_f} + 1}{\sqrt{\cond_f} - 1}$, then, for all $x \in \bR^n$:
\begin{equation}
    \langle x - x^*, HP\nabla f(x) \rangle \geq  \frac{2\cstScvx}{\cstScvx+\cstSmooth} \lmin(PH) \cstSA \left( f(x) - f(x^*) + \frac{\cstSmooth}{2} \|x - x^*\|_H^2 \right),
\end{equation}
where the co-coercivity constant $\cstSA$ is given in the statement of \Cref{lemma:cocoercivity}.
\end{lemma}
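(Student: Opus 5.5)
The plan is to reduce the statement to \Cref{lemma:cocoercivity} with $y = x^*$, and then lower bound the gradient term by a Polyak--Łojasiewicz inequality in the $\|\cdot\|_H$ geometry. I read the hypothesis $\cond(PH)<\rho^2$ as the spectral-alignment condition $\cond(PH)<\rho_\ell^2$ of \Cref{def:relcond_bound}, with $\cond_f$ in place of $\cond_\ell$, so that $\cstSA\in(0,1]$ is well defined.

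\textbf{Step 1 (co-coercivity at the minimiser).} Set $u\coloneq x-x^*$ and $v\coloneq\nabla f(x)-\nabla f(x^*)=\nabla f(x)$, using $\nabla f(x^*)=0$. The proof of \Cref{lemma:cocoercivity} lower bounds the bilinear form $u^\top HPv=\tilde u^\top S\tilde v$, where $\tilde u = H^{1/2}u$, $\tilde v = H^{-1/2}v$ and $S=H^{1/2}PH^{1/2}$. This is exactly the quantity $\langle x-x^*,HP\nabla f(x)\rangle$ in the present statement. I would cite that proof directly rather than the displayed form of the lemma, because $HP$ is not symmetric and the order of the arguments matters. This gives
\[
\langle x-x^*,HP\nabla f(x)\rangle\ \ge\ \frac{\lmin(PH)\cstSA}{\cstScvx+\cstSmooth}\Big(\cstScvx\cstSmooth\|x-x^*\|_H^2+\|\nabla f(x)\|_{H^{-1}}^2\Big).
\]

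\textbf{Step 2 (PL in the $H$-geometry).} By $\cstScvx$-strong convexity w.r.t.\ $\|\cdot\|_H$, minimise the right-hand side of \Cref{def:scvx} over $y$:
\[
f(x^*)\ \ge\ f(x)+\min_y\Big\{\langle\nabla f(x),y-x\rangle+\tfrac{\cstScvx}{2}\|y-x\|_H^2\Big\}\ =\ f(x)-\tfrac{1}{2\cstScvx}\|\nabla f(x)\|_{H^{-1}}^2.
\]
Hence $\|\nabla f(x)\|_{H^{-1}}^2\ge 2\cstScvx\,(f(x)-f(x^*))$.

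\textbf{Step 3 (combine).} Rewrite $\cstScvx\cstSmooth\|x-x^*\|_H^2=2\cstScvx\cdot\frac{\cstSmooth}{2}\|x-x^*\|_H^2$ and substitute Step 2 into the bracket from Step 1. The bracket is then at least $2\cstScvx\big(f(x)-f(x^*)+\frac{\cstSmooth}{2}\|x-x^*\|_H^2\big)$, which gives the claimed constant $\frac{2\cstScvx}{\cstScvx+\cstSmooth}\lmin(PH)\cstSA$.

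\textbf{Main obstacle.} The only delicate point is the orientation of the bilinear form: I must confirm that the co-coercivity bound applies to $u^\top HPv$, which is the form used here, and not to $v^\top HPu$. The proof of \Cref{lemma:cocoercivity} states the identity $\langle u,HPv\rangle=\tilde u^\top S\tilde v$ and bounds exactly this quantity, so I would point to that identity explicitly. The remaining steps are routine.
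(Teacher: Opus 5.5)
Your proposal is correct and matches the paper's proof. Like the paper, you apply the co-coercivity bound of \Cref{lemma:cocoercivity} with $y=x^*$ (so $v=\nabla f(x)$), substitute the $H$-geometry PL inequality $\|\nabla f(x)\|_{H^{-1}}^2\ge 2\alpha\,(f(x)-f(x^*))$ into the gradient term, and regroup. Your orientation remark is sound, since the paper also uses the form $\langle u, HPv\rangle=\tilde u^\top S\tilde v$ that is bounded inside the proof of \Cref{lemma:cocoercivity}, and your derivation of the PL inequality from strong convexity supplies a step the paper only asserts.
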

\begin{proof}
Let $u = x - x^*$ and $v = \nabla f(x)$. Note that $\nabla f(x^*) = 0$.
By \Cref{lemma:cocoercivity} and the condition $\cond(PH)<\rho^2$, we have
\begin{equation}
    \label{eq:coco_base}
    \langle u, HP v \rangle \geq \cstSA\lmin(PH) \left( \frac{\cstScvx\cstSmooth}{\cstScvx+\cstSmooth} \|u\|_H^2 + \frac{1}{\cstScvx+\cstSmooth} \|v\|_{H^{-1}}^2 \right).
\end{equation}
Since $f$ is $\cstScvx$-strongly convex, it satisfies the Polyak-Łojasiewicz (PL) inequality w.r.t. the $H$-norm:
\[
    \| \nabla f(x) \|_{H^{-1}}^2 \geq 2\cstScvx (f(x) - f(x^*)).
\]
We substitute this lower bound for the gradient norm term in \eqref{eq:coco_base}:
\begin{align*}
    \langle u, HP v \rangle &\geq \cstSA \lmin(PH) \left( \frac{\cstScvx\cstSmooth}{\cstScvx+\cstSmooth} \|u\|_H^2 + \frac{2\cstScvx}{\cstScvx+\cstSmooth} (f(x) - f(x^*)) \right) \\
    &= \cstSA \lmin(PH) \frac{2\cstScvx}{\cstScvx+\cstSmooth} \left( \frac{\cstSmooth}{2} \|u\|_H^2 + f(x) - f(x^*) \right).
\end{align*}
\end{proof}

The following lemma allows to relate $\Sigma_S$ to $\Sigma$.
\begin{lemma}
  \label[lemma]{lemma:sigma-s-to-sigma}
  Assume that $x \mapsto \ell(x, z)$ is $L$-Lipschitz for any $z$.
  Then, for any $i \in [n]$, under conditions of \Cref{lemma:PSGD_gen_stab},
  \begin{align*}
    \|\Var(\nabla \ell(x_t,z_{i_t})) - \Var(\nabla \ell(x_t,z))\|_2    
    \leq
    16 L \beta \sqrt{ \bigg ( \frac{\bar{\eta}_t}{8n} + \frac{1 - e^{-T_t r/4}}{n^2 \cstCon^2} \bigg ) \tr(PMP\Sigma)}~.
  \end{align*}
\end{lemma}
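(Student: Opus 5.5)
Read $z_{i_t}$ as a sample drawn uniformly from $S$ at step $t$, with $x_t$ the iterate on $S$. The plan is to compare the two covariances via a replace-one coupling, so that any mismatch is controlled by parameter stability. By exchangeability, the law of $(x_t(S), z_i)$ averaged over $i$ equals the law of $(x_t(\Si), z')$. In the latter pair, $z'$ is independent of $x_t(\Si)$. So the ``in-sample'' gradient $\nabla\ell(x_t,z_{i_t})$ can be coupled with a gradient $\nabla\ell(x_t(\Si),z')$ evaluated on a fresh sample. That fresh-sample gradient has exactly the same second-order structure as $\nabla\ell(x_t,z)$, $z\sim\rQ$, up to the replacement of $x_t(S)$ by $x_t(\Si)$.

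Concretely, write $g = \nabla\ell(x_t(S), z')$ and $\tilde g = \nabla\ell(x_t(\Si), z')$, both with $z'$ the fresh point. Then $\Var(\tilde g)$ corresponds (by the symmetry above) to $\Var(\nabla\ell(x_t,z_{i_t}))$, while $\Var(g)$ is the out-of-sample covariance. For any unit vector $u$, I will bound $|u^\top(\Var g - \Var\tilde g)u|$ using $\Var(a)-\Var(b) = \Cov(a-b,a+b)$ for scalars $a=u^\top g$, $b=u^\top \tilde g$. This gives
\[
|\Var a - \Var b| \le \sqrt{\Var(a-b)}\,\sqrt{\Var(a+b)} \le \bE[(a-b)^2]^{1/2}\, 2L .
\]
Here $L$-Lipschitzness gives $|a|,|b|\le L$, hence $\Var(a+b)\le 4L^2$. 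Taking the supremum over unit $u$ converts this into the spectral norm, since both matrices are symmetric.

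It remains to bound $\bE[(a-b)^2]\le \bE\|\nabla\ell(x_t(S),z')-\nabla\ell(x_t(\Si),z')\|_2^2$. Smoothness w.r.t.\ $\|\cdot\|_H$ gives a gradient difference of at most $\beta\|x_t(S)-x_t(\Si)\|_H$ in $\|\cdot\|_{H^{-1}}$. Using $\lmax(H)=1$, we have $\|\cdot\|_2\le\|\cdot\|_{H^{-1}}$ and $\|\cdot\|_H\le$ a multiple of $\|\cdot\|_M$. The resulting quantity is therefore at most $\beta^2$ times (constants times) $\stabP^2$. I then plug in the bound on $\stabP^2$ from \Cref{lemma:PSGD_gen_stab}. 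Its square root, together with the factor $2L$ and slack for constants (Young/triangle steps, norm comparisons), produces the stated $16L\beta\sqrt{(\cdot)\tr(PMP\Sigma)}$.

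\textbf{Main obstacle.} The delicate step is making the symmetrization rigorous. Because $x_t$ depends on $z_i$, the ``variance with respect to the in-sample index'' must be defined carefully, as the covariance over the joint randomness of $(S,\cA,i)$. The identity $(x_t(S),z_i)\overset{d}{=}(x_t(\Si),z')$ must hold for that joint law, including the mean terms. I would first verify that the mean difference $\bE g-\bE\tilde g$ is also controlled, by the same $\beta\,\stabP$ argument. That way the variance-difference identity does not hide a bias term. The norm comparison between $\|\cdot\|_H$ and $\|\cdot\|_M$ is absorbed into the constant, as the statement implicitly does.
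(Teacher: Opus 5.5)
Your argument is correct for the reading you fix, where $\Var$ is the unconditional covariance over $(S,\cA,z',i)$. It departs from the paper's proof at exactly the point where the paper's argument runs into trouble. The paper conditions on $x_t$ (and $S$) and writes the in-sample covariance as $\bE[\nabla\ell(x_t,z_{i_t})\nabla\ell(x_t,z_{i_t})\tp]-\bE[\nabla f_S(x_t)\nabla f_S(x_t)\tp]$. It then splits the difference into two pieces:
\begin{enumerate}
\item a second-moment term, handled by the same replace-one coupling and the factorisation $\ip{u,a}^2-\ip{u,b}^2=\ip{u,a-b}\ip{u,a+b}$ with $|\ip{u,a+b}|\le 2L$;
\item a conditional-mean term $\bE[\nabla f(x_t)\nabla f(x_t)\tp-\nabla f_S(x_t)\nabla f_S(x_t)\tp]$, which it dismisses as ``the same chain of inequalities''.
\end{enumerate}
Your exact identity $\Var a-\Var b=\Cov(a-b,a+b)$ already accounts for the means, so the separate check on $\bE g-\bE\tilde g$ that you plan is unnecessary. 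It also leaves a single term, which gives exactly $2L\beta\cdot 8=16L\beta$. The paper's split, taken literally, gives $32L\beta$.

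The two readings also differ in substance. The paper's second term is not controlled by stability. Already at a fixed point, $\bE_S[\nabla f_S(x)\nabla f_S(x)\tp]=\nabla f(x)\nabla f(x)\tp+\frac1n\Cov_z(\nabla\ell(x,z))$. So at $t=0$ the right-hand side of the lemma vanishes, while the conditional left-hand side equals $\frac1n\|\Cov_z(\nabla\ell(x_0,z))\|_2$. Your unconditional reading is therefore the one under which the inequality holds as stated. The price is that it does not directly bound the conditional covariance that $\Sigma_S$ controls in \Cref{lemma:PSGD_optim}. With $g_t=\nabla\ell(x_t,z_{i_t})$ and $\epsilon$ the right-hand side of the lemma, you only get $\bE[\Cov(g_t\mid x_t,S)]\preceq\Var(g_t)\preceq\Sigma+\Cov(\nabla f(x_t))+\epsilon I$, which carries an extra $\Cov(\nabla f(x_t))$ term.

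Lastly, the factor $\sqrt{\lmax(M^{-1}H)}$ that you absorb into the constant is also dropped silently in the paper. Its chain writes $\|x_t-x_t^{(i)}\|_{H^{-1}}$ where smoothness actually gives $\|\cdot\|_H$. The factor is genuinely present for general $M$. It equals $1$ for $M=H$, and $\lmax(PH)^{1/2}\le 1$ for $M=P^{-1}$ when $\lmax(P)=\lmax(H)=1$.
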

\begin{proof}
  Note that $\E[\nabla \ell(x_t,z_{i_t}) \mid x_t] = \nabla f_S(x_t)$ while $\E[\nabla \ell(x_t, z) \mid x_t] = \nabla f(x_t)$.
Now,
  \begin{align*}
    &\|\Var(\nabla \ell(x_t,z_{i_t})) - \Var(\nabla \ell(x_t,z))\|_2\\
    &\qquad\leq
      \|\E[\nabla \ell(x_t, z_{i_t}) \nabla \ell(x_t, z_{i_t})\tp -\nabla \ell(x_t,z) \nabla \ell(x_t,z)\tp]\|_2\\
      &\qquad+
        \|\E[\nabla f(x_t) \nabla f(x_t)\tp - \nabla f_S(x_t) \nabla f_S(x_t)\tp]\|_2\\
    &\qquad=
      \|\E[\nabla \ell(x_t^{(i)}, z) \nabla \ell(x_t^{(i)}, z)\tp -\nabla \ell(x_t, z) \nabla \ell(x_t, z)\tp]\|_2 \tag{Here $i \equiv i_t$}\\
      &\qquad+
      \|\E[\nabla f(x_t) \nabla f(x_t)\tp - \nabla f_S(x_t) \nabla f_S(x_t)\tp]\|_2
  \end{align*}
  We first bound the first term on the r.h.s.\ by observe that for any unit vector $u$
\begin{align*}
  &\E \ip{u, \nabla \ell(x_t^{(i)}, z)}^2
    -    
  \E[\ip{u, \nabla \ell(x_t, z)}^2\\
  &\qquad=
    \E\br{ \ip{u, \nabla \ell(x_t^{(i)}, z) - \nabla \ell(x_t, z)} \ip{u, \nabla \ell(x_t^{(i)}, z) + \nabla \ell(x_t, z)} }\\
  &\qquad\leq
    2 L \, \E\br{ \abs{\ip{u, \nabla \ell(x_t^{(i)}, z) - \nabla \ell(x_t, z)}} }\\
  &\qquad\leq
    2 L \, \E\br{\|\nabla \ell(x_t^{(i)}, z) - \nabla \ell(x_t, z)\|_{H^{-1}} } \tag{Since $H \succ 0$}\\
  &\qquad\leq
    2 L \, \beta \, \E[\|x_t - x_t^{(i)}\|_{H^{-1}}] \tag{$\ell$ is $L$-Lipschitz}\\
  &\qquad\leq
    2 L \beta \sqrt{64 \bigg ( \frac{\bar{\eta}_t}{8n} + \frac{1 - e^{-T_t r/4}}{n^2 \cstCon^2} \bigg ) \tr(PMP\Sigma)}
\end{align*}
by \Cref{lemma:PSGD_gen_stab}.
The same chain of inequalities hold for the second term.
\end{proof}

\begin{lemma}[Optimization rate of PSGD under PL and smoothness]\label[lemma]{lemma:PSGD_optim}
Let $P\succ 0$. Let $f_S(x)= \frac{1}{n}\sum_{i=1}^n \ell(x,z_i)$ be $\cstPL$-PL and $\cstSmooth$-smooth w.r.t.\ $\|\cdot\|_H$ and it attains its minimal value $f^\ast_S = \min_{x\in\cX} f_S(x)$. Let $\Sigma_S\succ 0$, so that $\Var\left[\nabla \ell(x_t,z_{i_t}) \,\middle|\, x_t\right]\preceq \Sigma_S$. Then, for $\eta_t \leq \frac{1}{\cstSmooth \lmax(PH)}$ the expected empirical excess optimization error is bounded as
\begin{equation*}
	\bE_{\cA}[f_S(x_t)-f^\ast_S] \leq  e^{-\lmin(PH)\cstPL T_t} (f_S(x_0) - f^\ast_S) + \frac{\cstSmooth}{2}\tr(PHP\Sigma_S)\bar{\eta}_t,
\end{equation*}
where $T_s$ and $\bar\eta_t$ are defined as in the statement of \Cref{lemma:PSGD_gen_stab}.
\end{lemma}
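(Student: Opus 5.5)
We follow the standard one-step descent argument for SGD under PL and smoothness, adapted to the preconditioned geometry, and then unroll the resulting recursion in the same way as in the proof of \Cref{lemma:PSGD_gen_stab}. Write $g_t \coloneq \nabla \ell(x_t, z_{i_t})$, so that $\bE[g_t \mid x_t] = \nabla f_S(x_t)$. By $\cstSmooth$-smoothness of $f_S$ w.r.t.\ $\|\cdot\|_H$ applied at $x_{t+1} = x_t - \eta_t P g_t$, we have
\begin{equation*}
f_S(x_{t+1}) \le f_S(x_t) - \eta_t \inner{\nabla f_S(x_t)}{P g_t} + \frac{\cstSmooth \eta_t^2}{2}\, g_t^\top P H P g_t .
\end{equation*}
Take the conditional expectation given $x_t$ and use the bias--variance split $\bE[g_t^\top PHP g_t\mid x_t] = \nabla f_S(x_t)^\top PHP \nabla f_S(x_t) + \tr(PHP\,\Var[g_t\mid x_t])$. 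Since $\Var[g_t \mid x_t]\preceq \Sigma_S$, the variance term is at most $\tr(PHP\Sigma_S)$.

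Next we show that the deterministic part is a genuine descent. Set $w \coloneq P^{1/2}\nabla f_S(x_t)$. The mean terms combine into $-\eta_t\big(\|w\|_2^2 - \tfrac{\cstSmooth\eta_t}{2} w^\top P^{1/2} H P^{1/2} w\big)$. Because $P^{1/2}HP^{1/2}$ is similar to $PH$, its top eigenvalue is $\lmax(PH)$. The step-size condition $\eta_t \le 1/(\cstSmooth\lmax(PH))$ then bounds this by $-\tfrac{\eta_t}{2}\nabla f_S(x_t)^\top P \nabla f_S(x_t)$.

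The step we expect to be the main obstacle is relating $\nabla f_S^\top P \nabla f_S$ to the PL inequality. We need it in the dual norm $\|\cdot\|_{H^{-1}}$, where $\cstPL$-PL reads $\tfrac12\|\nabla f_S\|_{H^{-1}}^2 \ge \cstPL(f_S - f_S^\ast)$; this is the convention consistent with \Cref{lemma:qgrowthPH}, where the PL form is stated with $H^{-1}$. Writing $P = H^{-1/2}(H^{1/2}PH^{1/2})H^{-1/2}$ gives $P \succeq \lmin(PH)\,H^{-1}$, since $H^{1/2}PH^{1/2}$ is again similar to $PH$. Hence
\begin{equation*}
\bE[f_S(x_{t+1}) - f_S^\ast \mid x_t] \le \big(1 - \eta_t\lmin(PH)\cstPL\big)(f_S(x_t) - f_S^\ast) + \frac{\cstSmooth\eta_t^2}{2}\tr(PHP\Sigma_S).
\end{equation*}

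Finally, we take full expectations and use $1-a \le e^{-a}$. Unrolling as in the proof of \Cref{lemma:PSGD_gen_stab} yields
\begin{equation*}
\bE[f_S(x_t) - f_S^\ast] \le e^{-\lmin(PH)\cstPL T_t}(f_S(x_0) - f_S^\ast) + \frac{\cstSmooth}{2}\tr(PHP\Sigma_S)\sum_{s<t} e^{-\lmin(PH)\cstPL(T_t - T_s)}\eta_s^2 .
\end{equation*}
To identify the last sum with $\bar\eta_t$, note that the sum defining $\bar\eta_t$ uses the decay rate $r/4$ in place of $\lmin(PH)\cstPL$. One therefore either interprets $\bar\eta_t$ with the matching rate, or checks that $r/4 \le \lmin(PH)\cstPL$, which makes the weights in the sum above smaller than those in $\bar\eta_t$. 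For the $r$ used in \Cref{cor:risk_bound_Pinv}, strong convexity gives $\cstPL \ge \cstScvx$, so $r/4 \le \lmin(PH)\cstScvx/2 \le \lmin(PH)\cstPL$ and the second route goes through.
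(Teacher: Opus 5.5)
Your proof is correct and follows the paper's argument step by step. The steps are: smoothness in $\|\cdot\|_H$, the bias--variance split producing $\tr(PHP\Sigma_S)$, the bound via $\lmax(PH)$ together with the step-size condition, $P \succeq \lmin(PH)H^{-1}$ combined with PL in the $H^{-1}$ dual norm, and unrolling with $1-a\le e^{-a}$. You also correctly noticed that the unrolled sum decays at rate $\lmin(PH)\cstPL$ rather than the $r/4$ built into $\bar\eta_t$, which the paper silently equates, and your check that $r/4 \le \lmin(PH)\cstScvx/2 \le \lmin(PH)\cstPL$ in the strongly convex applications closes that gap.
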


\begin{proof}
Define the suboptimality process $\phi_t \coloneq \bE_{\cA}[f_S(x_t)-f_S^\ast ].$
Let $g_t \coloneq \nabla \ell(x_t,z_{i_t})$ denote the stochastic gradient and note that $\bE[g_t\mid x_t]=\nabla f_S(x_t)$. By $\cstSmooth$-smoothness of $f_S$ w.r.t.\ $\|\cdot\|_H$
\begin{equation*}
	f_S(x_{t+1}) \le f_S(x_t) - \eta_t\langle \nabla f_S(x_t),P g_t\rangle +\frac{\cstSmooth\,\eta_t^2}{2}\|P g_t\|_H^2.
\end{equation*}
Taking conditional expectation given $x_t$ and using $\bE[g_t\mid x_t]=\nabla f_S(x_t)$ gives
\begin{equation}\label{eq:psgd_fval:condexp1}
\bE\!\left[f_S(x_{t+1})\mid x_t\right]
\le f_S(x_t) - \eta_t\|\nabla f_S(x_t)\|_P^2
+\frac{\cstSmooth\,\eta_t^2}{2}\,\bE\!\left[\|P g_t\|_H^2\mid x_t\right].
\end{equation}

Using the covariance bound and variance-bias decomposition,
\begin{equation*}
	\bE[\| P g_t\|_H^2\mid x_t] = \| P \nabla f_S(x_t)\|_P^2 + \tr\!\bigl(PHP\,\Cov(g_t\mid x_t)\bigr) \le \| P \nabla f_S(x_t)\|_H^2 + \tr(PHP\Sigma_S).
\end{equation*}

By $\| P \nabla f_S(x_t)\|_H^2 \leq \lmax(P H) \| \nabla f_S(x_t)\|_P^2$ and substituting into \eqref{eq:psgd_fval:condexp1} yields
\begin{equation*}
\bE\!\left[f_S(x_{t+1})\mid x_t\right]
\le f_S(x_t)
-\eta_t\Bigl(1-\tfrac{\cstSmooth \lmax(PH)}{2}\eta_t\Bigr)\|\nabla f_S(x_t)\|_P^2
+\frac{\cstSmooth}{2}\eta_t^2\,\tr(PHP\Sigma_S).
\end{equation*}

Whenever $\eta_t\le \frac{1}{\cstSmooth \lmax(PH)}$, we have $1-\tfrac{\cstSmooth \lmax(PH)}{2}\eta_t\ge \tfrac12$, and thus
\begin{equation}\label{eq:psgd_fval:condexp3}
\bE\!\left[f_S(x_{t+1})-f_S(\hat{x}_\ast)\mid x_t\right]
\le f_S(x_t)-f_S(\hat{x}_\ast)
-\frac{\eta_t}{2}\|\nabla f_S(x_t)\|_P^2
+\frac{\cstSmooth}{2}\eta_t^2\,\tr(PHP\Sigma_S).
\end{equation}

Since $f_S$ satisfies $\cstPL$-PL property w.r.t.\ $\|\cdot\|_H$
\begin{equation*}
\frac{1}{\lmin(PH)}\|\nabla f_S(x_t)\|_{P}^2 \geq \|\nabla f_S(x_t)\|_{H^{-1}}^2 \geq 2\cstPL\bigl(f_S(x_t)-f_S(\hat{x}_\ast)\bigr).
\end{equation*}

Substituting into \eqref{eq:psgd_fval:condexp3} yields the scalar recursion
\begin{equation}\label{eq:psgd_fval:rec}
	\bE\!\left[f_S(x_{t+1})-f_S(\hat{x}_\ast)\mid x_t\right] \le \bigl(1-\eta_t\cstPL \lmin(PH) \bigr)\bigl(f_S(x_t)-f_S(\hat{x}_\ast)\bigr) + \frac{\cstSmooth}{2}\eta_t^2\,\tr(PHP\Sigma_S).
\end{equation}
Taking total expectation gives
\begin{equation*}
	\phi_{t+1} \le (1-\eta_t\cstPL\lmin(PH))\phi_t + \eta_t^2 B, \qquad\text{where } B\coloneq \frac{\cstSmooth}{2}\tr(PHP\Sigma_S).
      \end{equation*}
Denote $a = \cstPL\lmin(PH)$. Using $1-x\leq e^{-x}$ for $x\geq 0$,
\begin{equation*}
	\phi_{t+1} \leq e^{-a \eta_t}\phi_t + \eta_t^2 B
\end{equation*}
and unrolling, using that $T_s = \sum_{s'=0}^{s-1}\eta_{s'}$ and so $T_{t+1} - T_t = \eta_t$, gives
\begin{align*}
	\phi_t &\leq e^{-a T_t}\phi_0 + B\sum_{s = 0}^{t-1} e^{-a(T_t - T_s)}\eta_s^2 = e^{-a T_t}\phi_0 + B\bar{\eta}_t.
\end{align*}
\end{proof}

\begin{lemma}[Capped-harmonic bound for $\bar\eta_t$]
\label[lemma]{lemma:capped_harmonic}
Let $T_s = \sum_{s'=0}^{s-1} \eta_{s'}$ and $\bar{\eta}_t = \sum_{s < t} e^{- r\frac{T_t - T_s}{4}} \eta_s^2$ as in \Cref{lemma:PSGD_gen_stab}. Fix $\eta_0>0$ and $c>0$, and define $\eta_t \;\coloneq\; \min\Bigl\{\eta_0,\frac{c}{t+1}\Bigr\}$, $t_0 \;\coloneq\; \Bigl\lceil \frac{c}{\eta_0}\Bigr\rceil-1$, and $\alpha \;\coloneq\; \frac{rc}{4}.$
For $\alpha>1$. Then for every $t\ge t_0+1$,
\[
\bar\eta_t
\;\le\;
\bar\eta_t \;\le\; \frac{C_{\rm burn}+C_{\rm harm}}{t+1},\qquad \forall\,t\ge t_0+1,
\]
where $C_{\rm harm} \;\coloneq\; \frac{c^2}{\alpha-1}$  and $C_{\rm burn} \;\coloneq\; \eta_0^2\,(t_0+2)^{(\alpha+1)}$.
\end{lemma}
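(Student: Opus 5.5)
We split the sum defining $\bar\eta_t=\sum_{s<t}e^{-r(T_t-T_s)/4}\eta_s^2$ at the switching index $t_0$. For $s\le t_0$ the step size is constant, $\eta_s=\eta_0$, since $c/(s+1)\ge \eta_0$ exactly when $s+1\le c/\eta_0$. For $s>t_0$ we have $\eta_s=c/(s+1)$. The contribution of the second range should give $C_{\rm harm}/(t+1)$ and that of the first range $C_{\rm burn}/(t+1)$.

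\textbf{Harmonic part ($t_0<s<t$).} We lower bound the elapsed time by an integral comparison:
\begin{equation*}
T_t-T_s=\sum_{u=s}^{t-1}\frac{c}{u+1}\ge c\int_{s+1}^{t+1}\frac{dv}{v}=c\log\frac{t+1}{s+1}.
\end{equation*}
Hence $e^{-r(T_t-T_s)/4}\le \left(\frac{s+1}{t+1}\right)^{\alpha}$ with $\alpha=rc/4$. The harmonic contribution is therefore at most
\begin{equation*}
\frac{c^2}{(t+1)^\alpha}\sum_{s=t_0+1}^{t-1}(s+1)^{\alpha-2}.
\end{equation*}
Since $\alpha>1$, the summand is increasing when $\alpha\ge2$. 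In that case we compare with $\int_{0}^{t+1}v^{\alpha-2}\,dv=(t+1)^{\alpha-1}/(\alpha-1)$. When $1<\alpha<2$ the summand is decreasing, and we compare instead with the integral started one unit earlier, which is still bounded by $(t+1)^{\alpha-1}/(\alpha-1)$. Either way this part is at most $\frac{c^2}{(\alpha-1)(t+1)}=C_{\rm harm}/(t+1)$.

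\textbf{Burn-in part ($s\le t_0$).} Here $\eta_s^2=\eta_0^2$, and there are $t_0+1$ terms. For these $s$ we bound $T_t-T_s\ge T_t-T_{t_0+1}\ge c\log\frac{t+1}{t_0+2}$ by the same integral estimate. Each term is then at most $\eta_0^2\left(\frac{t_0+2}{t+1}\right)^{\alpha}$, so the burn-in part is at most
\begin{equation*}
\eta_0^2\,(t_0+1)\,\frac{(t_0+2)^\alpha}{(t+1)^\alpha}\le \eta_0^2\,\frac{(t_0+2)^{\alpha+1}}{t+1},
\end{equation*}
where the last step uses $(t+1)^{\alpha}\ge t+1$ (because $\alpha\ge1$ and $t+1\ge1$). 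This equals $C_{\rm burn}/(t+1)$. Adding the two parts gives the claim.

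\textbf{Main obstacle.} The delicate step is the Riemann-sum comparison in the harmonic part. The direction of the inequality depends on whether $\alpha-2$ is positive or negative, and the indices must be handled carefully at the boundary $s=t_0+1$. The same care is needed in the logarithmic lower bound on $T_t-T_s$: summing $1/(u+1)$ from $u=s$ to $t-1$ dominates $\int_{s+1}^{t+1}dv/v$ because $1/(u+1)\ge\int_{u+1}^{u+2}dv/v$. I would verify these two comparisons first and then assemble the constants.
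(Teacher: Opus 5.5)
Your proof is correct and follows the paper's argument essentially step for step. It uses the same split at $t_0$, the same lower bound $T_t-T_s\ge c\log\frac{t+1}{s+1}$, the same integral comparison on the tail, and the same $(t_0+1)$-term burn-in bound finished with $(t+1)^{\alpha-1}\ge 1$. Your case distinction between $\alpha\ge 2$ and $1<\alpha<2$ is more careful than the paper, whose comparison with $\int_{t_0+2}^{t+1}x^{\alpha-2}\,dx$ is only literally valid for $\alpha\ge 2$.

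One harmless slip: $\eta_{t_0}=c/(t_0+1)$, which is strictly smaller than $\eta_0$ unless $c/\eta_0$ is an integer. So on the burn-in range you should write $\eta_s\le\eta_0$ rather than $\eta_s=\eta_0$; the inequality is all your argument uses anyway.
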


\begin{proof}
Fix $t\ge t_0+1$ and split the sum defining $\bar\eta_t$ into ``burn-in'' and ``harmonic tail'' parts:
\[
\bar\eta_t
=\sum_{s=0}^{t_0}\exp\!\Bigl(-\frac r4 (T_t-T_s)\Bigr)\eta_s^2
\;+\;
\sum_{s=t_0+1}^{t-1}\exp\!\Bigl(-\frac r4 (T_t-T_s)\Bigr)\eta_s^2.
\]

\paragraph{Tail part ($s\ge t_0+1$).}
For $s\ge t_0+1$ we have $\eta_k = c/(k+1)$ for all $k\ge s$, hence
\[
T_t-T_s
=\sum_{k=s}^{t-1}\frac{c}{k+1}
\;\ge\;
c\int_{s+1}^{t+1}\frac{dx}{x}
=
c\log\Bigl(\frac{t+1}{s+1}\Bigr).
\]
Therefore,
\[
\exp\!\Bigl(-\frac r4 (T_t-T_s)\Bigr)
\le
\exp\!\Bigl(-\frac r4 \,c\log\frac{t+1}{s+1}\Bigr)
=
\Bigl(\frac{s+1}{t+1}\Bigr)^{\alpha}.
\]
Using also $\eta_s^2 = c^2/(s+1)^2$ on the tail,
\[
\sum_{s=t_0+1}^{t-1}\exp\!\Bigl(-\frac r4 (T_t-T_s)\Bigr)\eta_s^2
\le
\frac{c^2}{(t+1)^\alpha}\sum_{s=t_0+1}^{t-1}(s+1)^{\alpha-2}.
\]
Since $\alpha>1$, we can bound the sum by an integral:
\[
\sum_{s=t_0+1}^{t-1}(s+1)^{\alpha-2}
\le
\int_{t_0+2}^{t+1} x^{\alpha-2}\,dx
=
\frac{(t+1)^{\alpha-1}-(t_0+2)^{\alpha-1}}{\alpha-1}
\le
\frac{(t+1)^{\alpha-1}}{\alpha-1}.
\]
Thus the tail contribution is at most
\[
\sum_{s=t_0+1}^{t-1}\exp\!\Bigl(-\frac r4 (T_t-T_s)\Bigr)\eta_s^2
\le
\frac{c^2}{\alpha-1}\cdot \frac{1}{t+1}.
\]

\paragraph{Burn-in part ($s\le t_0$).}
For $s\le t_0$, we only use $\eta_s\le \eta_0$ and monotonicity of $T_s$:
\[
\sum_{s=0}^{t_0}\exp\!\Bigl(-\frac r4 (T_t-T_s)\Bigr)\eta_s^2
\le
(t_0+1)\eta_0^2 \exp\!\Bigl(-\frac r4 (T_t-T_{t_0+1})\Bigr).
\]
For $t\ge t_0+1$ the segment from $t_0+1$ to $t$ is harmonic, so
\[
T_t-T_{t_0+1}
=\sum_{k=t_0+1}^{t-1}\frac{c}{k+1}
\ge
c\int_{t_0+2}^{t+1}\frac{dx}{x}
=
c\log\Bigl(\frac{t+1}{t_0+2}\Bigr),
\]
and hence
\[
\exp\!\Bigl(-\frac r4 (T_t-T_{t_0+1})\Bigr)
\le
\Bigl(\frac{t_0+2}{t+1}\Bigr)^{\alpha}.
\]
Therefore the burn-in contribution is bounded by
\[
\sum_{s=0}^{t_0}\exp\!\Bigl(-\frac r4 (T_t-T_s)\Bigr)\eta_s^2
\le
(t_0+1)\eta_0^2\Bigl(\frac{t_0+2}{t+1}\Bigr)^{\alpha}.
\]

Combining burn-in and tail bounds yields the first displayed inequality. The simplified bound
$\bar\eta_t \le (C_{\rm burn}+C_{\rm harm})/(t+1)$ follows since
$(t_0+1)\eta_0^2\bigl(\frac{t_0+2}{t+1}\bigr)^\alpha \le C_{\rm burn}/(t+1)$ for $t\ge t_0+1$.
\end{proof}

\section{Proofs and lemmata for risk bounds in $M_\theta$ geometry}

\begin{lemma}[Risk bounds in $M_\theta$ geometry]
\label[lemma]{lemma:risk_bound_Mtheta}
Suppose that Assumptions~\Cref{asm:sm-sc}, \ref{asm:Scvx}, and \ref{asm:cov} hold, and that $ n \;\ge\; \frac{8\,\cstSmooth}{\cstCon} \sqrt{\lmax(HPM_\theta P)}\,\sqrt{\lmax(M_\theta^{-1}H)}$. Assume further that $\cond(PH)^{(1-\theta)}\le \rho_\ell^2$ and define and let $ r\;\coloneq\; 2\,\lmin(PH)\,\cstSA^{(\theta)} \frac{\cstSmooth\,\cstScvx}{\cstScvx+\cstSmooth}$.
If the stepsizes are chosen as
\[
\eta_t \;\coloneq\; \min\Bigl\{ \frac{\cstSA^{(\theta)}}{\cstSmooth\,\lmax(PH)\,\cond(PH)^{1-\theta}}, \; \frac{8}{r(t+1)} \Bigr\},
\]
then, for all $t$ sufficiently large, the population excess risk satisfies
\[
\bE_{S,\cA}[\delta f(x_t)] \;\le\; \frac{64}{r}\left( \frac{\bE_S[\tr(PHP\Sigma_S)]}{t+1} \;+\; \sqrt{\tr(M_\theta^{-1}\Sigma)\,\tr(PM_\theta P\Sigma)} \left( \frac{1}{\sqrt{n(t+1)}}+\frac{1}{n} \right) \right).
\]
\end{lemma}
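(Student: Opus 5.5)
The plan is to chain the four ingredients already available: contractivity (\Cref{lemma:contract_pgd_Mtheta}), parameter stability (\Cref{lemma:PSGD_gen_stab}), the risk decomposition (\Cref{lemma:risk_decomposition}) and the optimisation bound (\Cref{lemma:PSGD_optim}), with \Cref{lemma:capped_harmonic} turning the step-size sums into $1/(t+1)$ rates.

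\textbf{Step 1 (stability in $\|\cdot\|_{M_\theta}$).} Under $\cond(PH)^{1-\theta}\le\rho_\ell^2$, \Cref{lemma:contract_pgd_Mtheta} gives $r$-contractivity in $\|\cdot\|_{M_\theta}$ for every $\eta\le\bar\eta\coloneq 2\cstSA^{(\theta)}/(\lmax(PH)\cond(PH)^{1-\theta}(\cstScvx+\cstSmooth))$. Since $\cstScvx+\cstSmooth\le 2\cstSmooth$, the cap $\cstSA^{(\theta)}/(\cstSmooth\lmax(PH)\cond(PH)^{1-\theta})$ is at most $\bar\eta$. One also checks that it is at most $r^{-1}$, using $\lmin(PH)\le\lmax(PH)$ and $\cstScvx\cstSmooth/(\cstScvx+\cstSmooth)\le\cstSmooth/2$. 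Together with the assumed lower bound on $n$, \Cref{lemma:PSGD_gen_stab} with $M=M_\theta$ yields
\[
\stabP^2\le 64\Bigl(\frac{\bar\eta_t}{8n}+\frac{1}{n^2r^2}\Bigr)\tr(PM_\theta P\Sigma).
\]

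\textbf{Step 2 (rates for $\bar\eta_t$).} With $c=8/r$ we get $\alpha=rc/4=2>1$, so \Cref{lemma:capped_harmonic} gives $\bar\eta_t\le (C_{\rm burn}+64/r^2)/(t+1)$. For $t$ sufficiently large, the burn-in term is dominated; alternatively, it is absorbed by the decay $((t_0+2)/(t+1))^{\alpha}$, which is $o(1/t)$. This is exactly what the phrase ``for all $t$ sufficiently large'' in the statement is for, so $\bar\eta_t\lesssim 64/(r^2(t+1))$. Then
\[
\stabP\lesssim \frac{8}{r}\sqrt{\tr(PM_\theta P\Sigma)}\Bigl(\frac{1}{\sqrt{n(t+1)}}+\frac1n\Bigr),
\]
using $\sqrt{a+b}\le\sqrt a+\sqrt b$.

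\textbf{Step 3 (assemble).} Plug this into \Cref{lemma:risk_decomposition} with $M=M_\theta$. The linear term $\tfrac12\tr(M_\theta^{-1}\Sigma)^{1/2}\stabP$ produces the cross term $\sqrt{\tr(M_\theta^{-1}\Sigma)\tr(PM_\theta P\Sigma)}\,(\cdots)$. The optimisation term $2\bE[\delta f_S(x_t)]$ is handled by \Cref{lemma:PSGD_optim}, since strong convexity implies PL with $\cstPL=\cstScvx$. The exponential term decays faster than $1/t$ for large $t$, and the variance term is $\frac{\cstSmooth}{2}\bE_S\tr(PHP\Sigma_S)\bar\eta_t\lesssim \cstSmooth\,\bE_S\tr(PHP\Sigma_S)/(r^2(t+1))$.

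\textbf{Main obstacle.} The remaining work is constant bookkeeping, and this is where I expect the difficulty. First, the factor $\cstSmooth/r^2$ must be brought to the form $1/r$; this needs $\cstSmooth/r$ to be bounded suitably, or the paper's implicit normalisation, so constants may need loosening. Second, the quadratic term $4\cstSmooth\lmax(HM_\theta^{-1})\stabP^2$ must be controlled by the linear one. For this I would use that $\stabP^2$ is a factor $\lesssim\stabP/\sqrt n$ smaller, together with the lower bound on $n$, which makes $\cstSmooth\sqrt{\lmax(HPM_\theta P)\lmax(M_\theta^{-1}H)}/(rn)\le 1/8$. A Cauchy--Schwarz-type comparison of $\lmax(HM_\theta^{-1})\tr(PM_\theta P\Sigma)$ against $\sqrt{\tr(M_\theta^{-1}\Sigma)\tr(PM_\theta P\Sigma)}$ should then finish the argument.
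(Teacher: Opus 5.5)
Your route is the paper's: \Cref{lemma:PSGD_optim} with $\mu=\cstScvx$ inside \Cref{lemma:risk_decomposition}, then \Cref{lemma:PSGD_gen_stab} with $M=M_\theta$ fed by \Cref{lemma:contract_pgd_Mtheta}, then \Cref{lemma:capped_harmonic} with $c=8/r$. Your check that the step-size cap lies below both the contraction threshold and $r^{-1}$ is something the paper skips. But the step you leave open is not bookkeeping, and the paper does not close it either. Its ``absorbing numerical constants'' treats $\cstSmooth$ and $1/r$ as numbers: it writes $\stabP\le 8\sqrt{\tr(PM_\theta P\Sigma)}\,(\sqrt{\bar\eta_t/n}+1/n)$, losing the $1/r$. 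It silently absorbs $4\cstSmooth\lmax(HM_\theta^{-1})\stabP^2$. It then turns $64\,\bar\eta_t$ into $64/(r(t+1))$, although its own bound is $\bar\eta_t\le 64/(r^2(t+1))$. What the chain actually gives for the optimisation part is $\frac{64}{r}\cdot\frac{\cstSmooth}{r}\cdot\frac{\bE_S[\tr(PHP\Sigma_S)]}{t+1}$. Here $\cstSmooth/r=(1+\cond_\ell)/(2\lmin(PH)\cstSA^{(\theta)})$, which is at least $1$ under $\lmax(P)=\lmax(H)=1$ and can be arbitrarily large.

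The loss is genuine. Take $\ell(x,z)=\frac{\cstScvx}{2}\|x-z\|_H^2$ and the paper's optimal $P=\lmin(H)H^{-1}$. Then $\cstSA^{(\theta)}=1$, $r=\cstScvx\lmin(H)$, and $\eta_tP\nabla^2\ell=\frac{8}{t+1}I$ after burn-in. The multipass excess risk is at least the online one, since repeated draws only add nonnegative covariance; that is about $\frac{32}{15}\tr(H^{-1}\Sigma)/(\cstScvx t)$. The right-hand side is about $\frac{64}{\cstScvx}\tr(H^{-1}\Sigma)\bigl(\frac{\lmin(H)}{t+1}+\frac{1}{\sqrt{n(t+1)}}+\frac1n\bigr)$, so the inequality fails for small $\lmin(H)$ and $1\ll t\ll n$. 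You should therefore either keep the factor $\cstSmooth/r$ on the first term, or let the threshold in ``$t$ sufficiently large'' depend on $n$ and the conditioning. Under the second reading the literal statement does follow from your chain. As $t\to\infty$ your bound tends to at most $\frac{36}{rn}\sqrt{\tr(M_\theta^{-1}\Sigma)\tr(PM_\theta P\Sigma)}$ (see below), which is strictly under the $\frac{64}{rn}$ term.

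For the quadratic term, write $S=H^{1/2}PH^{1/2}$. A bound of the form $\stabP\lesssim\sqrt{\tr(PM_\theta P\Sigma)}/(r\sqrt n)$ is not enough. It leaves a factor of order $\cstSmooth\lmax(S)/(r\sqrt n)$, which the sample-size condition only bounds by $\sqrt n/8$. Use instead the identities $\lmax(HM_\theta^{-1})=\lmax(S)^\theta$ and $\sqrt{\lmax(HPM_\theta P)\lmax(M_\theta^{-1}H)}=\lmax(S)$, together with $PM_\theta P=H^{-1/2}S^{2-\theta}H^{-1/2}\preceq\lmax(S)^{2(1-\theta)}M_\theta^{-1}$. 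These give $\lmax(HM_\theta^{-1})\tr(PM_\theta P\Sigma)\le\lmax(S)\sqrt{\tr(M_\theta^{-1}\Sigma)\tr(PM_\theta P\Sigma)}$, and the hypothesis on $n$ becomes $\cstSmooth\lmax(S)/(rn)\le 1/8$. Now split $\stabP^2$ into its $1/(n^2r^2)$ part and its $\bar\eta_t/(8n)$ part. The first contributes at most $\frac{32}{rn}\sqrt{\tr(M_\theta^{-1}\Sigma)\tr(PM_\theta P\Sigma)}$; together with $\frac{4}{rn}$ from the linear term this gives the $36$ above. The second contributes at most $\frac{256}{r(t+1)}\sqrt{\tr(M_\theta^{-1}\Sigma)\tr(PM_\theta P\Sigma)}$, up to burn-in. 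That is dominated by the $1/\sqrt{n(t+1)}$ term only once $t\gtrsim n$, so this is a second place where ``sufficiently large'' must depend on $n$.
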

\begin{proof}
From the assumed bounds, taking expectation over $(S,\cA)$ in the optimization inequality yields
\[
\bE_{S,\cA}[\delta f_S(x_t)]
\le
\bE_S\!\big[e^{-\lmin(PH)\cstScvx\,T_t}(f_S(x_0)-f_S^\ast)\big]
+\frac{\cstSmooth}{2}\bE_S[\tr(PHP\Sigma_S)]\,\bar\eta_t.
\]
Plugging this into the generalization inequality gives
\begin{align*}
	\bE_{S,\cA}[\delta f(x_t)] &\le 2\,\bE_S\!\big[e^{-\lmin(PH)\cstScvx\,T_t}(f_S(x_0)-f_S^\ast)\big] +\cstSmooth\,\bE_S[\tr(PHP\Sigma_S)]\,\bar\eta_t\\
	&\quad  +\frac{\sqrt{\tr(M_\theta^{-1}\Sigma)}}{2}\stabP +4\cstSmooth \lmax(HM_\theta^{-1})\stabP^2.
\end{align*}

Using the stability inequality together with $\sqrt{u+v}\le \sqrt u+\sqrt v$and $1-e^{-T_tr/4}\le 1$, we obtain 
\[
\varepsilon_{\rm pstab}
\;\le\;
8\sqrt{\tr(PM_\theta P\Sigma)}
\Bigl(\sqrt{\tfrac{\bar\eta_t}{n}}+\tfrac{1}{n}\Bigr),
\qquad
\varepsilon_{\rm pstab}^2
\;\le\;
64\,\tr(PM_\theta P\Sigma)
\Bigl(\tfrac{\bar\eta_t}{n}+\tfrac{1}{n^2}\Bigr).
\]
Substituting and absorbing numerical constants yields
\begin{align}
	\bE_{S,\cA}[\delta f(x_t)] \;&\le\; 2\,\bE_S\!\big[e^{-\lmin(PH)\cstScvx\,T_t}(f_S(x_0)-f_S^\ast)\big]\nonumber \\
	&+ 64\!\left(\bE_S[\tr(PHP\Sigma_S)]\,\bar\eta_t + \sqrt{\tr(M_\theta^{-1}\Sigma)\tr(PM_\theta P\Sigma)} \Bigl(\sqrt{\tfrac{\bar\eta_t}{n}}+\tfrac{1}{n}\Bigr) \right).\label{eq:master-bound}
\end{align}

Define
\[
\eta_t
\;\coloneq\;
\min\Bigl\{
\frac{\cstSA^{(\theta)}}{\cstSmooth\,\lmax(PH)\,\cond(PH)^{1-\theta}},
\;
\frac{8}{t+1}\frac{\cstSmooth}{\lmin(PH)\,\cstSA^{(\theta)}\,\cstSmooth\,\cstScvx}
\Bigr\},
\]
and recall that
\[
r
=
2\lmin(PH)\cstSA^{(\theta)}
\frac{\cstSmooth\,\cstScvx}{\cstScvx+\cstSmooth}.
\]
With this choice, the harmonic phase satisfies
\(
\eta_t=\tfrac{8}{r(t+1)}
\)
for all $t$ large enough, and the bound in \Cref{lemma:capped_harmonic} yields
\[
\bar\eta_t
\;\le\;
\frac{64}{r^2}\cdot\frac{1}{t+1},
\qquad
\sqrt{\bar\eta_t}
\;\le\;
\frac{8}{r}\cdot\frac{1}{\sqrt{t+1}},
\]
where the burn-in contribution decays faster and is absorbed into constants.

Moreover, since $T_t \gtrsim \frac{8}{r}\log(t)$ in the harmonic regime,
the exponential bias term decays at least as $O(1/t^2)$ and is negligible
relative to the $1/t$ term.

Substituting the above bounds into \eqref{eq:master-bound} and simplifying gives,
for all $t$ sufficiently large,
\[
\bE_{S,\cA}[\delta f(x_t)]
\;\le\;
\frac{64}{r}\left(
\frac{\bE_S[\tr(PHP\Sigma_S)]}{t+1}
+
\sqrt{\tr(M_\theta^{-1}\Sigma)\tr(PM_\theta P\Sigma)}
\Bigl(\frac{1}{\sqrt{n(t+1)}}+\frac{1}{n}\Bigr)
\right),
\]
which is exactly the stated bound.
\end{proof}

%%% Local Variables:
%%% mode: latex
%%% TeX-master: "colt_paper"
%%% End:

\section{Proofs for non-convex PL-losses in \Cref{subsec:pl_stab}}

\begin{lemma}[Gradient Generalization Bound] \lemmalabel{lemma:grad_gen_bound}
    Let $\hat{y}$ be the empirical minimizer of $f_S$ and $x^\ast$ be the population minimizer. Assume $f$ is $\cstSmooth$-smooth, $f_S$ is $\cstPL$-PL, and the noise variance is bounded as $\Var(\nabla \ell(x^\ast, z)) \preceq \Sigma$. Then, the expected population gradient norm at the empirical minimizer satisfies:
    \begin{equation}
        \bE_S \left[ \| \nabla f(\hat{y}) \|_{H^{-1}}^2 \right] \leq \frac{\kappa^2}{n} \tr(H^{-1}\Sigma),
    \end{equation}
    where $\kappa = \cstSmooth/\mu$ is the condition number.
\end{lemma}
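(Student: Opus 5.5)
The strategy is to chain three estimates: smoothness of $f$ bounds the population gradient at $\hat y$ by the distance to the minimiser set, PL of $f_S$ bounds that distance by the empirical gradient at $x^\ast$, and the empirical gradient at $x^\ast$ is an average of i.i.d.\ mean-zero terms whose variance is controlled by $\Sigma$. Each of the first two conversions costs one factor of $\kappa$, giving $\kappa^2$ overall.

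\textbf{Step 1 (smoothness).} Since $\nabla f(x^\ast)=0$, $\cstSmooth$-smoothness of $f$ w.r.t.\ $\|\cdot\|_H$ gives
\[
\|\nabla f(\hat y)\|_{H^{-1}} = \|\nabla f(\hat y)-\nabla f(x^\ast)\|_{H^{-1}} \le \cstSmooth\,\|\hat y - x^\ast\|_H .
\]
We take $\hat y=\Proj_S(x^\ast)$, the projection of $x^\ast$ onto the set of empirical minimisers. This is consistent with \Cref{ass:erm_proj} and is what makes Step 2 possible.

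\textbf{Step 2 (PL to error bound).} We use the standard consequence of PL (Karimi et al.): PL implies quadratic growth, which combined with the PL inequality itself gives an error bound. Concretely, for any $x$,
\[
\mu\,\|x-\Proj_S(x)\|_H \le \|\nabla f_S(x)\|_{H^{-1}}.
\]
This needs care with the $H$ versus $H^{-1}$ dual pairing. The PL condition in \eqref{eq:PL-def} is written with $\|\cdot\|_H$ on the gradient, but consistent with \Cref{lemma:PSGD_optim} it should be read with the dual norm $H^{-1}$.

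The growth step is proved by the usual gradient-flow argument: along the flow, $\sqrt{f_S-f_S^\ast}$ decreases at rate at least $\sqrt{\mu/2}$ per unit $H$-length. This yields $f_S(x)-f_S^\ast\ge \frac{\mu}{2}\operatorname{dist}_H(x)^2$. Combining with PL gives
\[
\tfrac{\mu^2}{2}\operatorname{dist}^2 \le \tfrac12\|\nabla f_S\|^2_{H^{-1}}.
\]
Applying this at $x=x^\ast$ yields
\[
\|\hat y - x^\ast\|_H \le \mu^{-1}\|\nabla f_S(x^\ast)\|_{H^{-1}}.
\]

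\textbf{Step 3 (variance).} Since $\nabla f_S(x^\ast)=\frac1n\sum_i \nabla\ell(x^\ast,z_i)$ is an average of i.i.d.\ terms with mean $\nabla f(x^\ast)=0$,
\[
\bE_S\|\nabla f_S(x^\ast)\|^2_{H^{-1}} = \frac1n \tr\bigl(H^{-1}\Cov(\nabla \ell(x^\ast,z))\bigr) \le \frac{\tr(H^{-1}\Sigma)}{n}.
\]

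\textbf{Combining.} Chaining the three steps gives
\[
\bE_S\|\nabla f(\hat y)\|^2_{H^{-1}} \le \frac{\cstSmooth^2}{\mu^2}\cdot\frac{\tr(H^{-1}\Sigma)}{n}.
\]

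\textbf{Main obstacle.} The difficult step is Step 2 in the non-convex setting. There we must check that the gradient-flow/quadratic-growth argument works in the weighted geometry and lands exactly on the projection point that \Cref{ass:erm_proj} refers to. If the minimiser set is not convex, we would instead work with distance to the set and pick $\hat y$ as the nearest point, which is what the projection assumption guarantees.
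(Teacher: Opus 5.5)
Your proposal is correct and follows the same three-step chain as the paper: the PL error bound $\|\hat y - x^\ast\| \le \mu^{-1}\|\nabla f_S(x^\ast)\|$, smoothness of $f$ together with $\nabla f(x^\ast)=0$, and the variance of the i.i.d.\ mean-zero average $\nabla f_S(x^\ast)$. The differences are refinements that close small gaps the paper glosses over: you work directly in the dual pair $(\|\cdot\|_H,\|\cdot\|_{H^{-1}})$ rather than in Euclidean norms followed by the paper's ``metric equivalence'' conversion, you supply the quadratic-growth argument the paper only asserts, and you make explicit that $\hat y$ must be the $H$-nearest empirical minimiser to $x^\ast$ and that the variance step is an inequality rather than an equality.
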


\begin{proof}
    Since $f_S$ is $\cstPL$-PL and $\cstSmooth$-smooth, we have:
    \begin{equation} \label{eq:grad_primal_bound}
        \| \hat{y} - x^\ast \|_2 \leq \frac{1}{\mu} \| \nabla f_S(x^\ast) \|_2.
    \end{equation}
    Since $f$ is $\cstSmooth$-smooth and $x^\ast$ is a critical point of the population risk ($\nabla f(x^\ast) = 0$), we can bound the population gradient at $\hat{y}$:
    \begin{equation}
        \| \nabla f(\hat{y}) \|_2 = \| \nabla f(\hat{y}) - \nabla f(x^\ast) \|_2 \leq \cstSmooth \| \hat{y} - x^\ast \|_2.
    \end{equation}
    Combining this with \eqref{eq:grad_primal_bound}, we relate the population gradient at $\hat{y}$ to the empirical gradient at $x^\ast$:
    \begin{equation}
        \| \nabla f(\hat{y}) \|_2 \leq \frac{\cstSmooth}{\mu} \| \nabla f_S(x^\ast) \|_2 = \kappa \| \nabla f_S(x^\ast) \|_2.
    \end{equation}
    Squaring both sides and converting to the $H^{-1}$-norm (assuming metric equivalence or absorbing constants into $\kappa$):
    \begin{equation}
        \| \nabla f(\hat{y}) \|_{H^{-1}}^2 \leq \kappa^2 \| \nabla f_S(x^\ast) \|_{H^{-1}}^2.
    \end{equation}
    Taking the expectation over $S$, we observe that $\nabla f_S(x^\ast) = \frac{1}{n} \sum_{i=1}^n \nabla \ell(x^\ast, z_i)$ is an average of i.i.d. mean-zero vectors (since $\bE[\nabla \ell(x^\ast, z)] = \nabla f(x^\ast) = 0$). Thus:
    \begin{align}
        \bE_S \left[ \| \nabla f_S(x^\ast) \|_{H^{-1}}^2 \right] &= \frac{1}{n^2} \sum_{i=1}^n \bE \left[ \| \nabla \ell(x^\ast, z_i) \|_{H^{-1}}^2 \right] \nonumber \\
        &= \frac{1}{n} \tr(H^{-1}\Sigma).
    \end{align}
    Substituting this back yields the result.
\end{proof}

\subsection{Proof for \Cref{lemma:risk_PL}}
\begin{proof}
We begin with the decomposition,
\begin{equation}
    \delta f(x_t(S)) \leq \E[f(x_t(S)) - f(\hat{x}^\ast)] + \delta f(\hat{x}^\ast),\label{eq:pl_erm_stab2}
\end{equation}
where we use the shorthand $\hat{x}^\ast = \Proj_S(x_t(S))$ for brevity. We first analyse the parameter stability of this empirical risk minimiser. Setting $\hat{y}^\ast = \Proj_{\Si}(x^\ast)$ and using the quadratic growth property implied by the $\cstPL$-PL inequality, we obtain,
\begin{align*}
    \|\hat{x}^\ast - \hat{y}^\ast\|_H &\leq \frac{1}{\cstPL} \|\nabla f_S(\hat{y}^\ast)\|_{H^{-1}}\\
    &= \frac{1}{\cstPL} \bigg \| \nabla f_{\Si}(\hat{y}^\ast) + \frac{1}{n} \nabla \ell(\hat{y}^\ast, z_i) - \frac{1}{n} \nabla \ell(\hat{y}^\ast, z') \bigg \|_{H^{-1}}\\
    &= \frac{1}{\cstPL n} \| \nabla \ell(\hat{y}^\ast, z_i) - \nabla \ell(\hat{y}^\ast, z') \|_{H^{-1}},
\end{align*}
where we use that $\nabla f_{\Si}(\hat{y}^\ast) = 0$. Taking the expectation and adding/subtracting population gradients:
\begin{align*}
    \bE[\| \nabla \ell(\hat{y}^\ast, z_i) &- \nabla \ell(\hat{y}^\ast, z') \|_{H^{-1}}^2]^{1/2}\\
    &\leq \bE[\|\nabla \ell(\hat{y}^\ast, z_i) - \nabla f(\hat{y}^\ast)\|_{H^{-1}}^2]^{1/2} + \bE[\|\nabla f(\hat{y}^\ast) - \nabla f(\hat{x}^\ast)\|_{H^{-1}}^2]^{1/2}\\
    & \qquad + \bE[\|\nabla f(\hat{x}^\ast) - \nabla \ell(\hat{x}^\ast, z')\|_{H^{-1}}^2]^{1/2} + \bE[\|\nabla \ell(\hat{y}^\ast, z') - \nabla \ell(\hat{x}^\ast, z')\|_{H^{-1}}^2]^{1/2}\\
    &\leq 2 \tr(H^{-1} \Sigma)^{1/2} + 2\cstSmooth \bE[\|\hat{x}^\ast - \hat{y}^\ast\|_H^2]^{1/2}.
\end{align*}
Substituting this back into the bound for $\|\hat{x}^\ast - \hat{y}^\ast\|_H$ and rearranging yields:
\begin{align*}
    \bE[\|\hat{x}^\ast - \hat{y}^\ast\|_H^2]^{1/2} \leq \bigg ( 1 - \frac{2 \cstSmooth}{\cstPL n} \bigg )^{-1}\frac{2 \tr(H^{-1} \Sigma)^{1/2}}{\cstPL n}.
\end{align*}
Assuming $n\geq 4\cstSmooth/\cstPL$, the pre-factor is bounded by $2$. Squaring gives the bound,
\begin{equation}
    \bE[\|\hat{x}^\ast - \hat{y}^\ast\|_H^2] \leq \frac{16 \tr(H^{-1} \Sigma)}{\cstPL^2 n^2}. \label{eq:pl_erm_stab}
\end{equation}
By Lemma \ref{lemma:risk_decomposition} with $M = H$ and the stability bound on the ERM minimizer, we have that,
\begin{equation*}  
    \bE_{S,\cA}[\delta f(x^\ast)] \leq \frac{2\tr(H^{-1} \Sigma)}{\cstPL n} + \cstSmooth \frac{64 \tr(H^{-1} \Sigma)}{\cstPL^2 n^2}.
\end{equation*}
Now, to bound the second term of \eqref{eq:pl_erm_stab2}, we use smoothness to obtain,
\begin{align*}
    \E[f(x_t(S)) - f(\hat{x}^\ast)] &\leq \E[\langle \nabla f(\hat{x}^\ast), x_t(S) - \hat{x}^\ast \rangle] + \frac{\beta}{2} \E[\|x_t(S) - \hat{x}^\ast\|_H^2]\\
    &\leq \frac{1}{2\beta} \E[\| \nabla f(\hat{x}^\ast) \|^2_{H^{-1}}] + \beta \E[\|x_t(S) - \hat{x}^\ast\|_H^2]\\
    &\leq \bE[\delta f(\hat{x}^\ast)] + \frac{2\beta}{\cstPL} \bE[\delta f_S(x_t(S))].
\end{align*}
\end{proof}

\section{Proofs for lower bounds in \Cref{subsec:lower_bounds}}

The following formulation of Assouad's lemma is from \citep[Lemma 23]{Ma2024Highprobability}.
\begin{lemma}[Assouad's lemma]\lemmalabel{lemma:assouad}
	Let $d\in\bN$, $\Phi \coloneq \{ 0,1\}^d$. For $\phi\in\Phi$, let $x_\phi\in\cX$ and $P_\phi \in \cP_{x_\phi}$. For $\phi,\phi'\in\Phi$, we write $\phi\sim\phi'$ whenever $\phi$ and $\phi'$ differ in precisely one coordinate, and $\phi\sim_j\phi'$ when that coordinate is $j^{\mathrm{th}}$. Supposed now that the loss function is of the form
	\begin{equation*}
		\ell(x_1,x_2)\coloneq \sum_{j\in[d]} g(\rho_j(x_1, x_2)),
	\end{equation*}
	for $x_1,x_2\in\cX$, where $\rho_1, ... \rho_d$ are pseudo metrics on $\cX$ with $\rho_j(x_\phi, x_{\phi'}) \geq \delta_j$ whenever $\phi \sim_j \phi'$, and where $g$ is an increasing function satisfying $g(t_1 + t_2)\leq A (g(t_1) + g(t_2))$ for all $t_1, t_2\geq 0$ and some $A>0$. Then, for $\cX_0 \coloneq \{ x_{\phi} : \phi\in\Phi \}$, we have
	\begin{align*}
		\inf_{\hat{x}} \sup_{x\in\cX} \sup_{P_\theta \in \cP_\theta} \bE[\ell(\hat{x}, x)] &\geq \inf_{\hat{x}} \max_{x_0\in\cX_0} \sup_{P_{x_0} \in \cP_{x_0}} \bE[\ell(\hat{x}, x_0)] \\
		&\geq \frac{1}{2A}\left( 1 - \max_{\phi,\phi'\in\Phi:\phi\sim\phi'} \TV\left( P_\phi, P_{\phi'}\right) \right)\sum_{j\in[d]} g(\delta_j),
	\end{align*}
	where $\hat{x}$ is computed from a sample of $P_{x_0}$.
\end{lemma}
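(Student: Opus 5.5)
The plan is the standard averaging argument over the hypercube $\Phi$. The first inequality is immediate, because $\cX_0\subseteq\cX$ and $P_{x_0}\in\cP_{x_0}$. For the second, I will lower bound the maximum over $x_0\in\cX_0$ by the uniform average over $\phi\in\Phi$. Here $P_\phi$ denotes the law of the whole sample from which $\hat x$ is computed, so $\TV(P_\phi,P_{\phi'})$ is the total variation between sample distributions. Fix an arbitrary estimator $\hat x$. Since the loss decomposes over coordinates, the average risk is
\begin{equation*}
\frac{1}{2^d}\sum_{\phi\in\Phi}\bE_{P_\phi}[\ell(\hat x,x_\phi)] = \sum_{j\in[d]}\frac{1}{2^d}\sum_{\phi\in\Phi}\bE_{P_\phi}\big[g(\rho_j(\hat x,x_\phi))\big].
\end{equation*}
For each fixed $j$, I will regroup the inner sum into the $2^{d-1}$ unordered pairs $\{\phi,\phi'\}$ with $\phi\sim_j\phi'$. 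Each $\phi$ belongs to exactly one such pair, so every term is counted exactly once.

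\textbf{Pointwise separation.} For a pair $\phi\sim_j\phi'$, the triangle inequality for the pseudometric $\rho_j$ and the separation assumption give
\begin{equation*}
\rho_j(\hat x,x_\phi)+\rho_j(\hat x,x_{\phi'})\ \ge\ \rho_j(x_\phi,x_{\phi'})\ \ge\ \delta_j .
\end{equation*}
Since $g$ is increasing and satisfies $g(t_1+t_2)\le A(g(t_1)+g(t_2))$, this yields, for every realisation of the sample,
\begin{equation*}
g(\rho_j(\hat x,x_\phi))+g(\rho_j(\hat x,x_{\phi'}))\ \ge\ g(\delta_j)/A .
\end{equation*}

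\textbf{Le Cam two-point step.} Write $a=g(\rho_j(\hat x,x_\phi))\ge 0$ and $b=g(\rho_j(\hat x,x_{\phi'}))\ge 0$; both are functions of the sample. Let $\nu=\min(dP_\phi,dP_{\phi'})$ be the overlap measure, whose total mass is $1-\TV(P_\phi,P_{\phi'})$. Then
\begin{equation*}
\bE_{P_\phi}[a]+\bE_{P_{\phi'}}[b]\ \ge\ \int (a+b)\,d\nu\ \ge\ \frac{g(\delta_j)}{A}\big(1-\TV(P_\phi,P_{\phi'})\big).
\end{equation*}
The right-hand side is at least $\frac{g(\delta_j)}{A}\big(1-\max_{\phi\sim\phi'}\TV(P_\phi,P_{\phi'})\big)$.

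\textbf{Combining.} Summing over the $2^{d-1}$ pairs and multiplying by $2^{-d}$ gives, for each $j$, a contribution of at least
\begin{equation*}
\frac{1}{2A}\Big(1-\max_{\phi\sim\phi'}\TV(P_\phi,P_{\phi'})\Big)\,g(\delta_j).
\end{equation*}
Summing over $j\in[d]$, and using that the maximum dominates the average, yields the claim for every $\hat x$, and hence for the infimum over $\hat x$.

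The only delicate point is the two-point step. It needs nonnegativity of $g$ on $[0,\infty)$ so that restricting to the overlap measure $\nu$ can only decrease the integrals. This holds for the losses used here; if it were not given, I would replace $g$ by $g-\min(g(0),0)$ and adjust constants. Everything else is bookkeeping of the pair counting.
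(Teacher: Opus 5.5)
The paper gives no proof of this lemma; it imports it verbatim from \citet[Lemma~23]{Ma2024Highprobability}. Your argument is a correct, self-contained version of the standard proof. The steps are: average the risk over $\Phi$, split it coordinatewise, pair each $\phi$ with its $j$-neighbour, use $g(\delta_j)\le A\bigl(g(\rho_j(\hat x,x_\phi))+g(\rho_j(\hat x,x_{\phi'}))\bigr)$ pointwise, and integrate against the overlap measure of mass $1-\TV(P_\phi,P_{\phi'})$. This gives exactly the constant $\frac{1}{2A}$. Textbook versions instead pass through the two mixtures $2^{-(d-1)}\sum_{\phi_j=0}P_\phi$ and $2^{-(d-1)}\sum_{\phi_j=1}P_\phi$ and then bound their affinity by the worst neighbouring pair. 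Pairing first, as you do, yields the $\max_{\phi\sim\phi'}$ form directly. Your reading of $P_\phi$ as the law of the whole sample is the right one. It is also the reading the paper relies on when it bounds the total variation via Pinsker with the $n$-fold KL divergence.

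Your closing remark on nonnegativity needs adjusting. When $A>1/2$, nonnegativity is not an extra assumption. Taking $t_1=t_2=0$ in the quasi-subadditivity gives $(2A-1)g(0)\ge 0$, so $g(0)\ge 0$, and hence $g\ge 0$ on $[0,\infty)$ by monotonicity. This covers $g(t)=t^2$ with $A=2$. When $A\le 1/2$, $g$ may be negative, and then the stated inequality is false. Here is a counterexample. Take $d=1$, $\rho_1(x,y)=|x-y|$, $x_0=0$, $x_1=1$, $\delta_1=1$, and let the two sample laws be point masses at distinct points, so $\TV=1$. Take $g(t)=-1+\frac{t}{2(1+t)}$, which is increasing with values in $[-1,-\frac12)$. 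Then $g(t_1+t_2)<-\frac12\le\frac14\bigl(g(t_1)+g(t_2)\bigr)$, so $A=\frac14$ is admissible. The right-hand side is $0$, while every estimator has risk below $-\frac12$. So shifting $g$ and adjusting constants cannot recover the literal statement. Instead, $g\ge 0$ should be read as an implicit hypothesis of the lemma. It holds automatically for $A>1/2$ and in the paper's application.
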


\subsection{Proof of \Cref{thm:lower_bound_scvx}\label{subsec:proof_lower_bound_scvx}}

\begin{proof}
	Let $\cstScvx>0$ and $v \in \{ 0, 1\}^d$. Let $\ell(x,z) \coloneq \frac{\cstScvx}{2}\| x - z \|_H^2$, where $z\sim P_v$ for $P_v = \cN(\mu_v, H^{-1}\Sigma H^{-1}/\cstScvx^2)$, and $\mu_v$ will be specified later. By definition, $\ell(\cdot,z)$ is $\cstScvx$-strongly convex in $\| \cdot \|_H$-norm and $\Var_{z\in P_v}(\nabla \ell(x,z)) = \Var(\cstScvx Hz) = \Sigma$.	
	
	We have the following equivalence
	\begin{equation*}
		\bE_{P_v}\left[\frac{\cstScvx}{2}\| x - z \|_H^2\right] = \bE_{P_v}\left[\frac{\cstScvx}{2}\| H^{-1/2}\bar{x} - H^{-1/2}\bar{z} \|_H^2\right] = \bE_{\bar{P}_{\phi}}\left[\frac{\cstScvx}{2}\| \bar{x} - \bar{z} \|_2^2\right],
	\end{equation*}
	after we substituted $\bar{x}\coloneq H^{1/2} x$, $\bar{z}\coloneq H^{1/2} z$ and $\bar{z}\sim \bar{P}_v$ where $\bar{P}_v \coloneq \cN(\hat{\mu}_v, H^{-1/2}\Sigma H^{-1/2}/\cstScvx^2)$ and $\hat{\mu}_v \coloneq H^{1/2}\mu_v$.
	
	Let $\bar{\Sigma}\coloneq H^{-1/2}\Sigma H^{-1/2}$ and $\bar\Sigma = Q \Lambda Q^\top$ be its spectral decomposition. Consider the set of $\bar{\mu}_v = \sum_{j\in [d]} \delta_j \theta_j q_j$, where $\delta_j = \frac4{3\cstScvx}\sqrt{\lambda_j/n}$, or in a matrix form $ \bar{\mu}_v = Q D v$ where $D = \frac4{3\cstScvx\sqrt{n}} \Lambda^{1/2}$. Define $\cM_0 = \{ \bar{\mu}_v : v\in\{ 0, 1\}^d \}$. For $v \sim_j v'$ we have $|q_j^\top (Q D v - Q D v'  ) | \geq |\delta_j|$. By Pinsker inequality we have
	\begin{align*}
		\mathrm{TV}(\bar{P}_{v}, \bar{P}_{v'}) &\leq \left( \frac{n}{2} \KL(\cN(\bar{\mu}_v, \bar{\Sigma}/\cstScvx^2), \cN(\bar{\mu}_{v'},\bar{\Sigma}/\cstScvx^2)) \right)^{1/2} \\
		&= \left( \frac{n}{4} \frac{16}{9\cstScvx n} \|v - v' \|_2^2 \right)^{1/2} = 2/3.
	\end{align*}
	
	By \Cref{lemma:assouad}, we have that
	\begin{align*}
				\inf_{\hat{x}\in \cX} \sup_{P\in \cP} \bE_{S\sim P^n}[ f(\hat{x}(S)) - f(\tilde x) ] &\geq \frac12\left( 1- \max_{v,v', v\sim v'} \mathrm{TV}(\bar{P}_v, \bar{P}_{v'}) \right)\sum_{j\in [d]} \delta_j^2 \\
				& \geq \frac{4}{27 n\cstScvx}\tr(H^{-1}\Sigma).
	\end{align*}	
\end{proof}

\newcommand{\etat}{\eta_t}
\begin{lemma}[Decaying step-size bounds]
\lemmalabel{lemma:1overT}
Let $0 < a < b$ and $\etat = \min\left(\frac{1}{2b}, \frac{1}{at}\right)$. Let $t_0 = \lceil \frac{2b}{a} \rceil$.
\begin{enumerate}
    \item \textbf{Upper Bound:} The recurrence $r_{t+1} \leq (1 - 2a\etat) r_t + \etat^2 B$ satisfies
    \begin{equation*}
        r_{t} \leq \left(\frac{2b}{e^2 a}r_0 + \frac{B}{a^2}\right) \frac{1}{t} \quad \text{for all } t \ge 1.
    \end{equation*}
    
    \item \textbf{Lower Bound:} The recurrence $r_{t+1} \geq (1 - 2b\etat) r_t + \etat^2 B$ satisfies
    \begin{equation*}
        r_{t} \geq \frac{B}{2ab t} \quad \text{for all } t \ge t_0.
    \end{equation*}
\end{enumerate}
\end{lemma}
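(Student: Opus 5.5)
The plan is to treat the two regimes of the step size separately. Let $\tau \coloneq 2b/a$. For $t \le \tau$ we have $\etat = 1/(2b)$ (the constant phase), and for $t \ge \tau$ we have $\etat = 1/(at)$ (the harmonic phase). Since $\etat \le 1/(2b)$ always, both contraction factors $1-2a\etat$ and $1-2b\etat$ lie in $[0,1]$. Throughout, I use that $r_t \ge 0$, which holds in all applications since $r_t$ is an expected excess risk.

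\textbf{Upper bound.} I would prove $r_t \le C/t$ with $C \coloneq \frac{2b}{e^2 a} r_0 + \frac{B}{a^2}$ by induction.

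In the constant phase the recursion reads $r_{t+1} \le (1-a/b) r_t + B/(4b^2)$. Unrolling gives
\[
r_t \le (1-a/b)^t r_0 + \frac{B}{4b^2}\cdot\frac{b}{a} = (1-a/b)^t r_0 + \frac{B}{4ab}.
\]
For the first term I bound $t(1-a/b)^t \le t e^{-at/b}$ by its maximum over $t$, which is of order $b/a$; this is where the prefactor $\frac{2b}{e^2 a}$ on $r_0$ comes from. If the precise numerical constant does not match, I would adjust the elementary bound $x e^{-cx} \le 1/(ce)$ accordingly. For the second term, $t \le 2b/a$ gives $\frac{B}{4ab} \le \frac{B}{2a^2 t} \le \frac{B}{a^2 t}$. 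Together these give $r_t \le C/t$ throughout the constant phase, including at the switch point.

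In the harmonic phase the induction step is
\[
r_{t+1} \le \Bigl(1-\frac{2}{t}\Bigr)\frac{C}{t} + \frac{B}{a^2t^2} = \frac{C}{t} - \frac{2C - B/a^2}{t^2} \le \frac{C}{t} - \frac{C}{t^2} \le \frac{C}{t+1},
\]
where the middle inequality uses $C \ge B/a^2$ and the last uses $\frac1t - \frac1{t^2} \le \frac1{t+1}$.

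\textbf{Lower bound.} Set $c \coloneq B/(2ab)$. For $t \ge t_0 \ge \tau$ we are in the harmonic phase, and $1 - 2b/(at) \ge 0$, so the inequality $r_t \ge c/t$ may be multiplied by this factor. The induction step is then
\[
r_{t+1} \ge \Bigl(1-\frac{2b}{at}\Bigr)\frac{c}{t} + \frac{B}{a^2 t^2} = \frac{c}{t} + \frac{B/a^2 - 2bc/a}{t^2} = \frac{c}{t} \ge \frac{c}{t+1},
\]
since $2bc/a = B/a^2$ exactly.

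For the base case at $t_0$, I use the step from $t_0-1$. Because $t_0 - 1 < \tau$, this step is in the constant phase with $\eta_{t_0-1} = 1/(2b)$. Dropping the nonnegative term $(1-2b\eta_{t_0-1}) r_{t_0-1}$ gives
\[
r_{t_0} \ge \frac{B}{4b^2} \ge \frac{B}{2ab\, t_0},
\]
where the last inequality is equivalent to $t_0 \ge 2b/a$, which holds by definition of $t_0$.

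\textbf{Main obstacle.} The only delicate points are bookkeeping. The first is matching the exact constant $\frac{2b}{e^2 a}$ in front of $r_0$, which depends on how the factor $(1-a/b)^t$ is bounded. The second is checking the phase boundaries, namely that $t_0 - 1$ really lies in the constant phase. In addition, the lower-bound argument needs nonnegativity of both $r_t$ and the contraction factor, and I would state these explicitly.
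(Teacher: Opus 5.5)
\textbf{The gap is in the constant phase of your upper bound, where you already hedged, and adjusting elementary constants cannot close it.}

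The global maximum of $x\mapsto xe^{-ax/b}$ is $b/(ea)=\frac{e}{2}\cdot\frac{2b}{e^2a}$. So your estimate gives a coefficient on $r_0$ that is too large by the factor $e/2$.

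The claimed inequality is in fact false for intermediate $t$. Take $a=1/100$, $b=1$, $r_0=1$ and $B=0$ (or any sufficiently small $B>0$). Then $t_0=200$ and $\eta_t=1/2$ for $t<200$, so $r_t=0.99^t$ satisfies the recurrence with equality. Yet $r_{100}=0.99^{100}\approx 0.366$, while the claimed bound is $\frac{2b}{e^2a}\cdot\frac{r_0}{100}=2e^{-2}\approx 0.271$.

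So ``for all $t\ge 1$'' cannot be proved. The paper's own proof never addresses $1\le t<t_0$: it unrolls the constant phase only up to $t_0$ and then inducts for $t\ge t_0$. What it actually establishes is the bound for $t\ge t_0$.

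To prove that version, replace your global-maximum estimate with a base case at $t_0$ alone.
\begin{itemize}
\item With $s=at_0/b\in[2,\,2+a/b)$, you get $(1-a/b)^{t_0}\le e^{-s}\le 2e^{-2}/s=\frac{2b}{e^2at_0}$, because $s\mapsto se^{-s}$ is decreasing on $[1,\infty)$. This is slightly sharper than the paper's comparison of $e^{-2}r_0$ with $\frac{2b}{e^2at_0}r_0$, which needs $t_0\le 2b/a$.
\item For the noise term, $t_0$ may exceed $2b/a$, so your bound $\frac{B}{2a^2t}$ need not hold at $t_0$. Use $t_0<2b/a+1<4b/a$ to get $\frac{B}{4ab}\le\frac{B}{a^2t_0}$ instead.
\end{itemize}
Your harmonic induction then carries $r_t\le C/t$ to all $t\ge t_0$. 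It uses $r_0\ge 0$, so that $C\ge B/a^2$, which you rightly make explicit.

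Your harmonic-phase induction for the upper bound is correct. Your lower-bound argument is also correct and follows the same route as the paper: the base case comes from the last constant-phase step, then you induct in the harmonic phase. Your observation that $B/a^2-2bc/a=0$ exactly makes your inductive step cleaner than the paper's. The term you drop at the base case is exactly zero, since $1-2b\eta_{t_0-1}=0$, so no sign condition on $r_{t_0-1}$ is needed.
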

\begin{proof}
\textbf{Upper Bound.}
\textit{Phase 1 (Constant Step):} For $t < t_0$, the recurrence $r_{t+1} \le (1 - \frac{a}{b})r_t + \frac{B}{4b^2}$ implies linear convergence to a noise floor. Unrolling from $t=0$ to $t_0$:
$$ r_{t_0} \le \left(1 - \frac{a}{b}\right)^{t_0} r_0 + \frac{B}{4b^2} \sum_{i=0}^{t_0-1} \left(1-\frac{a}{b}\right)^i \le e^{-2} r_0 + \frac{B}{4ab}. $$
\textit{Phase 2 (Decaying Step):} For $t \ge t_0$, we prove $r_t \le \nu/t$ by induction. Assume $r_t \le \nu/t$. Substituting $\etat = 1/at$:
$$ r_{t+1} \le \left(1 - \frac{2}{t}\right)\frac{\nu}{t} + \frac{B}{a^2 t^2} = \frac{\nu}{t} - \frac{1}{t^2}\left(2\nu - \frac{B}{a^2}\right). $$
We require $r_{t+1} \le \frac{\nu}{t+1}$. Using the inequality $\frac{1}{t+1} \ge \frac{1}{t} - \frac{1}{t^2}$, it suffices that the drop in the recurrence is at least $\nu/t^2$.
$$ \frac{1}{t^2}\left(2\nu - \frac{B}{a^2}\right) \ge \frac{\nu}{t^2} \implies \nu \ge \frac{B}{a^2}. $$
The definition of $\nu$ satisfies this condition and ensures the bound holds at the transition $t_0$ (since $\nu/t_0 \ge r_{t_0}$).

\textbf{Lower Bound.}
We prove $r_t \ge \kappa/t$ with $\kappa = \frac{B}{2ab}$ for $t \ge t_0$.
\textit{Base Case ($t=t_0$):} Unrolling the recurrence with $\etat=1/2b$ implies $r_{t_0}$ accumulates noise terms summing to at least $\frac{B}{4b^2}$. Checking the bound: $\frac{\kappa}{t_0} = \frac{B}{2ab t_0}$. Since $t_0 \ge 2b/a$, we have $\frac{B}{2ab t_0} \le \frac{B}{2ab (2b/a)} = \frac{B}{4b^2}$, so the base case holds.

\textit{Inductive Step ($t > t_0$):} Assume $r_t \ge \kappa/t$. Using $\etat = 1/at$, the recurrence drop is:
$$ r_{t+1} \ge \frac{\kappa}{t} - \frac{1}{t^2}\left(\frac{2b\kappa}{a} - \frac{B}{a^2}\right). $$
We need this to be $\ge \frac{\kappa}{t+1} \ge \frac{\kappa}{t} - \frac{\kappa}{t^2}$. This requires the coefficient of the drop to satisfy:
$$ \frac{2b\kappa}{a} - \frac{B}{a^2} \le \kappa \implies \kappa\left(\frac{2b}{a} - 1\right) \le \frac{B}{a^2}. $$
Substituting $\kappa = \frac{B}{2ab}$:
$$ \frac{B}{2ab}\left(\frac{2b-a}{a}\right) = \frac{B(2b-a)}{2a^2 b} = \frac{B}{a^2}\left(1 - \frac{a}{2b}\right) < \frac{B}{a^2}. $$
The inequality holds strictly, validating the lower bound.
\end{proof}

\subsection{Proof of \Cref{lemma:algo_lower}}\label{subsec:proof_algo_lower}
\begin{proof}
Let $\ell(x,z) = \frac\cstScvx2 \| x-z\|_H^2$ and $z\sim P \coloneq \cN(\mu,\frac1{\cstScvx^2}H^{-1}\Sigma H^{-1})$. Then we have that $\nabla \ell(x,z) = \cstScvx (x-z)$ and $\Var_{z\sim P} (\nabla \ell(x,z)) = \Var_z(\cstScvx H(x-z)) = \Sigma$. The population risk is
\begin{align*}
	f(x) \coloneq \bE_{z\sim P} [\ell(x,z)] &= \frac\cstScvx2 \| x - \mu\|_H^2 + \frac\cstScvx2\bE_{z\sim P}[ \| z- \mu\|_H^2 -2\inner{x-\mu}{z-\mu}_H ]\\
	& = \frac\cstScvx2 \| x - \mu\|_H^2 + \frac1{2\cstScvx} \tr(H^{-1}\Sigma).
\end{align*}

Consider a single pass of the preconditioned SGD: $x^{t+1} = x^t - \eta_t P \nabla \ell(x^t, z^t)$, where $z^t\sim P$. Let $r^t = x^t - \mu$ and the update as
\begin{align*}
	r^{t+1} &= (I - \eta_t PH) r^t + \eta_t PH(z^t - \mu).
\end{align*}
We have the following exact relation between the expected population risk in next iteration 
\begin{align*}
  &\delta f(x^{t+1})\\
  &\quad = \bE_{z^t} [f(x^{t+1})] - \frac1{2\cstScvx} \tr(H^{-1}\Sigma) \\
	&\quad= \frac\cstScvx2 \bE_{z^t}\| x^t -\mu - \eta_t PH (x^t - z^t)\|_H^2\\ 
	&\quad= \frac\cstScvx2\left( \| x^t  - \mu \|_H^2 - 2\eta_t \bE_{z_t}[\inner{x^t  - \mu}{PH (x^t - z^t)}_H] + \eta_t^2 \bE_{z_t} [\|PH (x^t - z^t) \|_H^2]\right)\\
	&\quad=  \frac\cstScvx2\| x^t  - \mu \|_H^2 - \cstScvx\eta_t \| x^t  - \mu \|^2_{HPH} + \frac{\cstScvx\eta_t^2}{2}\left( \bE_{z_t}\| PH(x^t - \mu)\|_H^2 + \| PH(z^t - \mu)\|_H^2 \right) \\
	&\quad=  \frac\cstScvx2\| x^t  - \mu \|_H^2 - \cstScvx\eta_t  (x^t  - \mu)^\top \left( HPH - \frac{\eta_t}{2} HPHPH)\right)(x^t  - \mu) + \frac{\eta_t^2}{2\cstScvx}\tr(PHP\Sigma),\\
	&\quad=  \frac\cstScvx2 (x^t  - \mu)^\top H^{1/2}(I - \eta H^{1/2}P H^{1/2})^2 H^{1/2} (x^t  - \mu)+ \frac{\eta_t^2}{2\cstScvx}\tr(PHP\Sigma)\\
	&\quad=  \frac\cstScvx2 (x^t  - \mu)^\top H^{1/2}\left(I - 2\eta_t H^{1/2}P H^{1/2} (I - \frac{\eta_t}{2} H^{1/2}P H^{1/2})\right) H^{1/2} (x^t  - \mu)+ \frac{\eta_t^2}{2\cstScvx}\tr(PHP\Sigma)\\
	&\quad\geq  (1-2\eta_t\lmax(PH)) \delta f(x_t)+ \frac{\eta_t^2}{2\cstScvx}\tr(PHP\Sigma).
\end{align*}

The first $t_0 = \floor{4\kappa(PH)}$ steps we lower bound the excess risk with zero. For $t > t_0$, we use the second part of \Cref{lemma:1overT} with $a = \lmax(PH)$, $b = \lmin(PH)$ and we get
\begin{equation*}
	\bE_{z^t} [\delta f(x^{t+1})] \geq \frac{\tr(PHP\Sigma)}{\lmax(PH)\lmin(PH)}  \frac1{t}.
\end{equation*}
 for $\eta_t = \min\{ 1/\lmax(PH), 2/(\lmin(PH)t)\}$.
\end{proof}

\subsection{Proof of \Cref{coro:algo_lower_badP}}\label{subsec:proof_algo_lower_badP}
\begin{proof}
From \Cref{lemma:algo_lower} we have
\begin{equation*}
	\bE_{z^1,\ldots, z^t} [\delta f(x^{t+1})] \geq \frac{\tr(PHP\Sigma)}{\lmax(PH)\lmin(PH)} \cdot \frac1{t-t_0},
\end{equation*}	
for $t \geq t_0 \coloneq \floor{2\kappa(PH)}$.

Let $H = Q \diag(h) Q^\top$ be the spectral decomposition where $h = (h_1, \ldots, h_d)$. Define $\gamma_i\coloneq h_i q_i^\top \Sigma q_i$. Then $\tr(H\Sigma) = \sum_{i=1}^d h_i q_i^\top \Sigma q_i = \sum_{i=1}^d \gamma_i$. Thus by averaging there exists an index $k$ such that $\gamma_k \leq \frac1d \tr(H\Sigma)$. Set $c\coloneq \gamma_k = h_k$.

Construct $P_\varepsilon\coloneq I - (1-\frac{\varepsilon}{h_k})q_k q_k^\top$, whose eigenvalues are $(1, \ldots, 1, \varepsilon)$. Thus, we have that $\cond (P_\varepsilon) = 1/\varepsilon$, $\lmin(P_\varepsilon H) = \varepsilon$, and $\lmax(P_\varepsilon H) = 1$.

\begin{align*}
	\tr(P_\varepsilon H P_\varepsilon\Sigma) &= \tr(H\Sigma) - h_k q_k^\top \Sigma q_k + \frac{\varepsilon^2}{h_k} q_k^\top \Sigma q_k \\
		&\geq \tr(H\Sigma) - \gamma_k = \tr(H\Sigma)\left(1-\frac1d \right),
\end{align*}
where we dropped the last term. Putting it together we have that
\begin{equation*}
	\frac{\tr(P_\varepsilon H P_\varepsilon \Sigma)}{\lmax(P_\varepsilon H) \lmin(P_\varepsilon H)} \geq \frac{\tr(H\Sigma)}{\varepsilon}\left(1-\frac1d \right)
\end{equation*}
\end{proof}

\subsection{Proof of \Cref{coro:algo_lower_anyP}}\label{subsec:proof_algo_lower_anyP}

\begin{proof}
Set $A:=PHP$ and $B:=H^{-1}$. For any $\Sigma\succ0$ write 
$\Sigma=B^{-1/2}XB^{-1/2}$ with $X\succ0$. Then
\begin{align*}
	\frac{\tr(PHP\,\Sigma)}{\tr(H^{-1}\Sigma)}
	&=\frac{\tr\!\big(B^{-1/2}AB^{-1/2}\,X\big)}{\tr(X)} =\frac{\tr(MX)}{\tr(X)},\\
	\text{with} \quad M:=&B^{-1/2}AB^{-1/2} =H^{1/2}(PHP)H^{1/2} =\big(H^{1/2}PH^{1/2}\big)^2\succ0.
\end{align*}
By the variational characterization over $\{X\succeq0:\tr(X)=1\}$,
\[
\frac{\tr(PHP\,\Sigma)}{\tr(H^{-1}\Sigma)}\;\le\;\lambda_{\max}(M)
=\lambda_{\max}\!\big(H^{1/2}PH^{1/2}\big)^2
=\lambda_{\max}(PH)^2,
\]
where we used that $H^{1/2}PH^{1/2}$ is similar to $PH$ and thus has the same (positive) spectrum. Dividing by 
$\lambda_{\max}(PH)\lambda_{\min}(PH)$ gives
\[
\frac{\tr(PHP\,\Sigma)}{\lambda_{\max}(PH)\lambda_{\min}(PH)\,\tr(H^{-1}\Sigma)}
\;\le\; \frac{\lambda_{\max}(PH)}{\lambda_{\min}(PH)}.
\]

The equality is attained by taking $X=vv^\top$, where $v$ is a top eigenvector of $M$; equivalently, let $u$ be a top eigenvector of $H^{1/2}PH^{1/2}$ (i.e., of $PH$), and choose
\[
\Sigma \;\propto\; H^{1/2}uu^\top H^{1/2}.
\]
Then $\frac{\tr(PHP\,\Sigma)}{\tr(H^{-1}\Sigma)}=\lambda_{\max}(PH)^2$, yielding equality in the bound above.

For a fixed $P$, if one is allowed to vary $H\succ0$ under only the constraint $\rho(H)=1$, the quantity can be made arbitrarily large because
\(
\frac{\lambda_{\max}(PH)}{\lambda_{\min}(PH)}
\)
is unbounded in $H$, e.g., take $H=\mathrm{diag}(1,\varepsilon,\ldots,\varepsilon)$ in an eigenbasis of $P$ and let $\varepsilon\rightarrow 0$.
\end{proof}

%%% Local Variables:
%%% mode: latex
%%% TeX-master: "colt_paper"
%%% End:

%%%%%%%%%%%%%%%%%%%%%%%%%%%%%%%%%%%%%%%%%%%%%%%%%%%%%%%%%%%%%%%%%%%%%%%%%%%%%%%
%%%%%%%%%%%%%%%%%%%%%%%%%%%%%%%%%%%%%%%%%%%%%%%%%%%%%%%%%%%%%%%%%%%%%%%%%%%%%%%

\end{document}